\documentclass[runningheads]{llncs}

\PassOptionsToPackage{dvipsnames,svgnames,x11names}{xcolor}

\usepackage[final,year=2026]{eccv}
\usepackage{eccvabbrv}

\usepackage{hyperref}

\usepackage{microtype}
\usepackage{graphicx}
\usepackage{booktabs} % for professional tables

\usepackage[textsize=tiny]{todonotes}

\usepackage{algorithm}
\usepackage{algorithmic}

\usepackage{enumitem}

\usepackage{kotex}

\usepackage{tikz}
\usetikzlibrary{arrows.meta,positioning,shapes,calc}

\usepackage{pgfplots}
\pgfplotsset{compat=1.18}

\usepackage{placeins}

\usepackage{float}

\usepackage{wrapfig}

\usepackage{multirow}

\usepackage{siunitx}
\usepackage{mathtools}
\usepackage{bm}

\spnewtheorem{assumption}[theorem]{Assumption}{\bfseries}{\itshape}

\newcommand{\xtheta}{x_\theta}
\newcommand{\epstheta}{\epsilon_\theta}
\newcommand{\vtheta}{v_\theta}

\newcommand{\zt}{z_t}
\newcommand{\xhat}{\hat{x}}

\newcommand{\E}{\mathbb{E}}
\newcommand{\Var}{\mathrm{Var}}

\newcommand{\N}{\mathcal{N}}

\newcommand{\R}{\mathbb{R}}
\newcommand{\manifold}{\mathcal{M}}
\newcommand{\energy}{\mathcal{E}}

\newcommand{\norm}[1]{\left\|#1\right\|}
\newcommand{\abs}[1]{\left|#1\right|}
\newcommand{\inner}[2]{\left\langle #1, #2 \right\rangle}

\newcommand{\grad}{\nabla}

\newcommand{\jacobian}{\mathbf{J}}

\newcommand{\bigO}{\mathcal{O}}

\newcommand{\mI}{\mathbf{I}}

\newcommand{\suppref}[1]{Appendix~\ref{#1}}

\ifdefined\MAINONLY
  \usepackage{xr-hyper}
\fi

\begin{document}

\title{Not All Prediction Targets Keep Training-Free
Diffusion Guidance on the Manifold}
\titlerunning{Not All Prediction Targets Keep TFG on the Manifold}

% Title-page footnote symbols: redefine llncs \@fnsymbol so 1=star (equal), 2=dagger (corresponding)
\makeatletter
\renewcommand\@fnsymbol[1]{\ensuremath{\ifcase#1\or \star\or \dagger\or \ddagger\or \mathsection\or \mathparagraph\fi}}
\makeatother
\newcommand\samethanks[1][\value{footnote}]{\footnotemark[#1]}
\author{Yunsung Lee\inst{1}\thanks{Equal contribution.} \and
Hyeongmin Lee\inst{2}\samethanks\thanks{Corresponding author.}}
\authorrunning{Y. Lee and H. Lee}
\institute{MAUM.AI, Republic of Korea\\
\email{sung@maum.ai} \and
Seoul National University of Science and Technology, Republic of Korea\\
\email{hyeongmin.lee@seoultech.ac.kr}}
\maketitle
% restore numeric footnotes for the body
\renewcommand{\thefootnote}{\arabic{footnote}}
\setcounter{footnote}{0}

% =============================================================================
% ABSTRACT
% =============================================================================
% =============================================================================
% ABSTRACT
% =============================================================================
\begin{abstract}
Training-free guidance (TFG) steers a pretrained diffusion model toward a desired attribute at inference. To be effective, this guidance must be applied from the earliest, high-noise steps of sampling. Because its objective (a classifier or energy) is defined on clean images, $\epsilon$- and $v$-prediction models must first estimate the clean image $\xhat$ from the noisy state at each step, and the accuracy of that estimate determines how easily guidance drifts off the data manifold. $x$-prediction, a recent alternative, outputs the clean image directly, removing this source of error even at high noise. This is our motivation.
We provide a theoretical analysis of how each prediction target shapes this accuracy, and introduce guided-class FID (Child FID), a metric that exposes the manifold damage standard evaluation misses. Experiments on a new fine-grained bird benchmark and on style transfer confirm that $x$-prediction keeps guided samples on the manifold most reliably, making it the strongest foundation for training-free guidance. Code is available at \url{https://github.com/ManLuML/on-manifold-tfg}.
\keywords{Diffusion Models \and Training-Free Guidance \and Prediction Target
\and Flow Matching \and Inference-Time Guidance}
\end{abstract}

% Teaser figure — declared immediately after abstract for page-1 placement
% =============================================================================
% TEASER FIGURE
% =============================================================================

\begin{figure}[!ht]
    \centering
    \begin{subfigure}[b]{0.35\textwidth}
        \includegraphics[width=\textwidth]{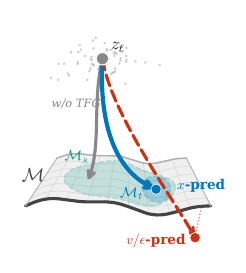}
        \caption{Guidance schematic}
        \label{fig:teaser_schematic}
    \end{subfigure}%
    \hspace{4pt}%
    \begin{subfigure}[b]{0.63\textwidth}
        \includegraphics[width=\textwidth]{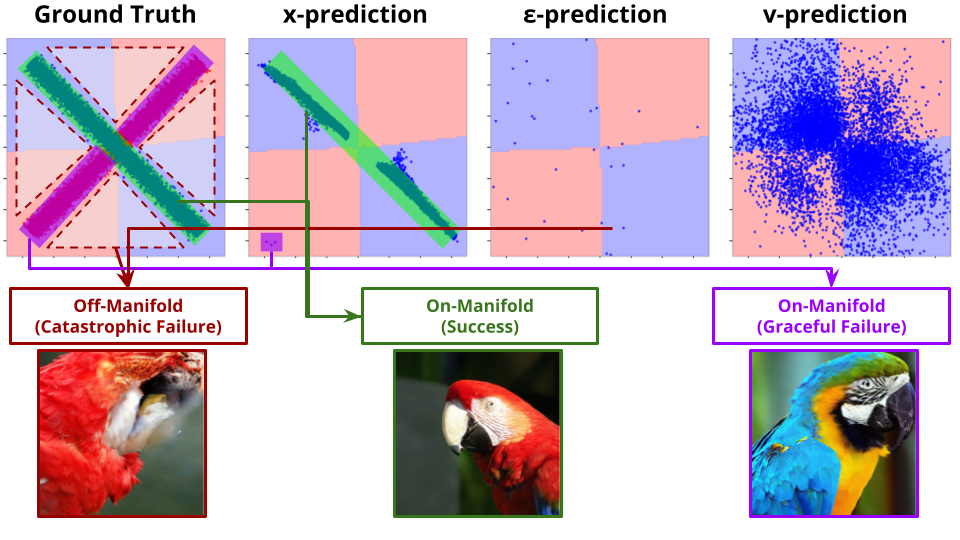}
        \caption{Crossed-lines ($D{=}512$) and ImageNet examples}
        \label{fig:teaser_crossed}
    \end{subfigure}
    \caption{
        \textbf{Prediction target determines whether training-free guidance stays on-manifold.}
        \textbf{(a)}~Without guidance, all targets denoise $\zt$ onto source class $\manifold_s \subset \manifold$. With TFG, $x$-prediction slides along $\manifold$ to target class $\manifold_t$; $v/\epsilon$-prediction departs $\manifold$.
        \textbf{(b)}~Crossed-lines ($D{=}512$, top) and ImageNet (bottom): $x$-prediction preserves structure; $\epsilon$-prediction collapses off-manifold with catastrophic artifacts.
    }
    \label{fig:teaser}
\end{figure}

% =============================================================================
% MAIN CONTENT
% =============================================================================
% =============================================================================
% SECTION 1: INTRODUCTION
% =============================================================================
\section{Introduction}
\label{sec:introduction}

% P1: Problem — catastrophic vs graceful failure
Training-free guidance (TFG)~\cite{chung2023dps,song2023lgd,ye2024tfg} steers diffusion models~\cite{ho2020ddpm,rombach2022ldm} toward desired properties without retraining, but strong guidance can push samples off the data manifold. The resulting \emph{catastrophic failures} (collapsed, distorted images) are qualitatively different from a \emph{graceful failure}, in which guidance misses the target class but the image remains a realistic sample from the learned distribution. A model whose worst case is graceful failure is fundamentally more dependable: even when guidance errs, the basic contract of generative modeling, producing plausible images, is preserved.

% P2: Why this went unnoticed — evaluation rewards off-manifold
Yet this distinction has gone unnoticed. Standard Validity (top-1 accuracy) rewards any sample the classifier accepts, whether on- or off-manifold, a blind spot shared by 15 of 17 recent TFG papers~\cite{shen2024tfgunderstanding} (\suppref{app:eval_survey}). When prior work maximises Validity under strong guidance~\cite{ye2024tfg}, it unknowingly selects off-manifold images fooling classifiers (akin to adversarial perturbations~\cite{stutz2019disentangling,nie2022diffpure}) rather than diverse samples of the target class. The evaluation does not merely fail to detect manifold departure; it actively encourages it.

% P3: Prediction target as root cause — thesis + research question
We trace the difference between catastrophic and graceful failure to a design decision predating any guidance algorithm: the prediction target. Three targets have been proposed: $\epsilon$-prediction (noise)~\cite{peebles2023dit}, $v$-prediction (velocity)~\cite{ma2024sit,chen2025pixelflow}, and $x$-prediction (clean data)~\cite{li2025jit}. Guidance operates on the clean-image estimate~$\xhat$: $\epsilon$- and $v$-prediction must recover it from the noisy state, whereas $x$-prediction outputs it directly. Under identical architecture and training, the three targets produce comparable generation quality, yet differ fundamentally in manifold preservation~\cite{li2025jit}: $x$-prediction succeeds where $\epsilon$-prediction fails catastrophically. Because TFG computes its guidance gradient through this estimate ($\nabla_{\zt} \energy(\xhat)$), the fidelity of~$\xhat$ directly controls guidance quality. We ask: \textit{does the prediction target's influence on manifold quality, demonstrated at training time~\cite{li2025jit}, extend to inference-time guidance?}

% P4: Theory Preview + Synthetic Evidence
We prove that prediction targets create a strict hierarchy of error amplification (\cref{prop:error_amplification}). The mechanism is the recovery formula: $\epsilon$-prediction divides by $t$, amplifying errors by $(1{-}t)/t$, a factor that diverges at high noise ($t \to 0$). In contrast, $v$-prediction attenuates errors by a bounded $(1{-}t)$ factor, and $x$-prediction introduces no amplification at all. These per-step errors compound across sampling, causing $\epsilon$-prediction's cumulative perturbation to diverge while $x$-prediction's remains bounded (\cref{prop:cumulative_error}). The gap further widens with ambient dimension~\cite{karras2022edm,jin2026kdiff} (\suppref{app:manifold_theory}). Controlled ablations confirm this hierarchy: in crossed-lines experiments (identical architecture and training, varying only the prediction target across $D \in \{2, 8, 32, 128, 512\}$), $x$-prediction maintains high on-manifold rate while $\epsilon$-prediction collapses (\cref{fig:teaser}, \cref{fig:crossed_lines_compact,fig:onmanifold_vs_dim}).

\begin{figure}[t]
\begin{minipage}[t]{0.40\textwidth}
    \centering
    \includegraphics[width=\textwidth]{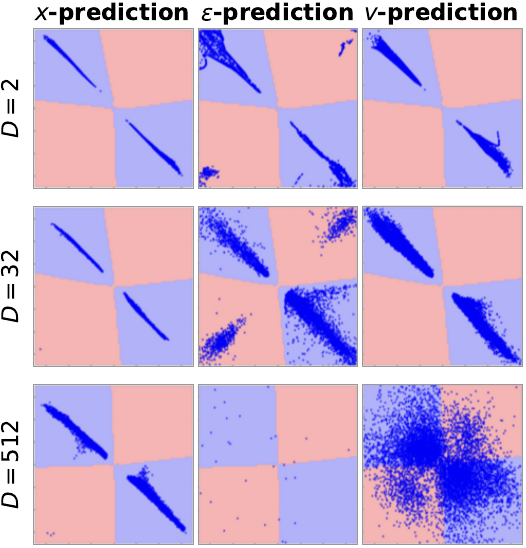}
    \caption{
        \textbf{Crossed-lines guided generation} ($s{=}10$, 100 steps). $\epsilon$-prediction collapses by $D{=}32$. Full grid in \suppref{app:experimental_protocols}.
    }
    \label{fig:crossed_lines_compact}
\end{minipage}
\hfill
\begin{minipage}[t]{0.57\textwidth}
    \centering
    \includegraphics[width=\textwidth]{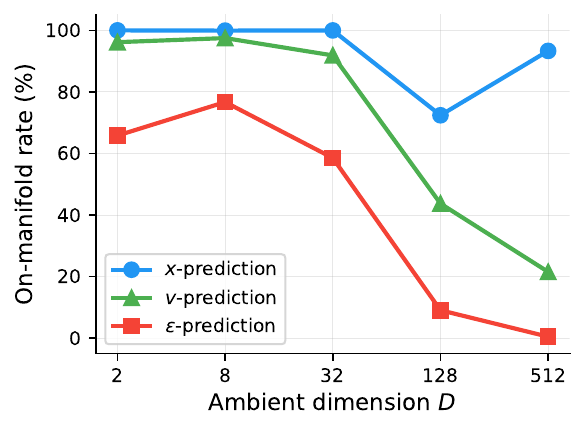}
    \caption{
        \textbf{On-manifold rate vs.\ ambient dimension} ($s{=}10$).
        $x$-prediction holds $>$93\% at $D{=}512$; $v$-prediction degrades to 21.5\%; $\epsilon$-prediction drops to 0.5\%. Data from \cref{tab:crossed_lines_full}.
    }
    \label{fig:onmanifold_vs_dim}
\end{minipage}
\end{figure}

% P5: Practical Evaluation + ImageNet Evidence
Validating this hierarchy at real scale demands three evaluation advances absent from prior TFG work: (i)~a fine-grained benchmark separating guidance and evaluation classifiers, (ii)~a manifold-aware metric, and (iii)~guidance-strength sweep plots replacing single-point comparisons. We construct a 143-species bird classification benchmark on ImageNet 256${\times}$256 and introduce Child FID (C-FID), FID between guided samples and the target species domain, sweeping guidance strength $\rho$ across Pareto frontiers (\cref{fig:rho_sweep}). We evaluate four pretrained Diffusion Transformers at comparable quality (FID${\approx}$2) across all three prediction targets~\cite{peebles2023dit,ma2024sit,li2025jit,chen2025pixelflow}. At matched classifier accuracy ($\approx$26.6\%), C-FID reveals a 5.2-point gap between $x$- and $\epsilon$-prediction (\textbf{32.9} vs.\ \textbf{38.1}), manifold damage invisible to standard evaluation. Qualitative analysis shows $\epsilon$-prediction achieving Validity via classifier-friendly patterns rather than diverse samples (\cref{fig:qualitative}). PixelFlow ($v$-prediction, pixel space) isolates prediction target as the decisive variable: its C-FID \emph{reverses} under strong guidance while JiT's continues decreasing (\cref{sec:results}).

% P6: Results Recap + Insight
Across controlled ablations and ImageNet-scale experiments alike, $x$-prediction yields the most stable behavior among the three targets for inference-time guidance. Among JiT variants (B/L/H), larger models achieve strictly better guidance Pareto frontiers, a \emph{guidance scaling} effect in which capacity improves both generation quality and guidance responsiveness. These findings establish prediction target selection as a first-order design decision for inference-time control.

% Contributions — keep the run-in header with its list (avoid an orphan at the page bottom)
\newpage
\paragraph{Contributions.}
This paper makes three contributions:
\begin{enumerate}[leftmargin=*,itemsep=2pt,parsep=0pt]
    \item \textbf{Theoretical framework.}
    We prove a strict error amplification hierarchy across prediction targets and show these errors compound into divergent trajectory perturbation for $\epsilon$-prediction while $x$-prediction's cumulative error remains bounded (\cref{prop:error_amplification,prop:cumulative_error}).

    \item \textbf{Manifold-aware evaluation protocol.}
    We introduce \emph{(i)}~a fine-grained bird classification benchmark (143 species, separate guidance and evaluation classifiers), \emph{(ii)}~Child FID (C-FID) to measure within-class realism, and \emph{(iii)}~guidance-strength Pareto sweeps replacing single-point comparisons.

    \item \textbf{Empirical validation.}
    In crossed-lines ablations, $x$-prediction maintains $>$93\% on-manifold rate while $\epsilon$-prediction collapses to $<$1\%. On ImageNet at matched Validity, C-FID reveals a 5.2-point gap between $x$- and $\epsilon$-prediction; PixelFlow's C-FID reversal confirms prediction target, not operating space, as the decisive factor (\cref{sec:results}).
\end{enumerate}

% =============================================================================
% SECTION 2: RELATED WORK (compressed for 14-page main body)
% =============================================================================
\section{Related Work}
\label{sec:related_work}

\paragraph{Prediction Targets in Diffusion Models.}
DDPM~\cite{ho2020ddpm} established $\epsilon$-prediction as the default; score-based models~\cite{song2021scorebased} gave an equivalent view via Tweedie's formula~\cite{robbins1956empirical,efron2011tweedie}. Salimans and Ho~\cite{salimans2022progressive} introduced $v$-prediction for improved stability. State-of-the-art Diffusion Transformers achieve comparable quality across all three targets: DiT-XL~\cite{peebles2023dit} ($\epsilon$), SiT-XL~\cite{ma2024sit} ($v$), and JiT-G~\cite{li2025jit} ($x$). Prior work observed $\epsilon$-prediction's training-time error amplification~\cite{karras2022edm,hang2023minsnr}; Jin and Wang~\cite{jin2026kdiff} independently confirm this from a dimensionality perspective. We extend these training-time observations to inference-time guidance (\cref{prop:error_amplification}).

\paragraph{Training-Free Guidance and Off-Manifold Departure.}
DPS~\cite{chung2023dps} and LGD~\cite{song2023lgd} apply gradient-based guidance for inverse and general problems. TFG~\cite{ye2024tfg} unifies prior methods~\cite{yu2023freedom,bansal2023universal} via seven hyperparameters controlling mean/variance guidance and recurrence. All methods depend on clean data estimates $\xhat$: guidance computes $\grad_{\zt}\energy(\xhat)$, so the fidelity of $\xhat$ determines both guidance quality and sample realism. Strong guidance produces degraded, off-manifold samples akin to adversarial perturbations~\cite{stutz2019disentangling,nie2022diffpure}. Theoretically, nonzero score error enables strong guidance to push samples off the data support~\cite{chidambaram2024guidance}, a failure mode confirmed for CFG~\cite{chung2024cfgpp} and extending to training-free methods. Our analysis (\cref{subsec:error_propagation}) shows that the prediction target determines the accuracy of the manifold-restoring force.

\paragraph{Guidance for Flow Matching and Evaluation.}
Feng \etal~\cite{feng2025flowguidance} derive Flow Matching guidance via velocity-field modifications; we adopt TFG's post-step correction for its unified DDPM--Flow Matching interface. Standard FID and classifier accuracy cannot distinguish on-manifold success from adversarial artifacts~\cite{shen2024tfgunderstanding,raisa2025position}: a concern borne out across 17 surveyed papers, most lacking manifold-aware metrics (\suppref{app:eval_survey}). We address this with guided-class FID (Child FID) and guidance-strength Pareto sweeps (\suppref{app:metric_justification}).

% =============================================================================
% SECTION 3: METHOD (streamlined for 14-page main body)
% =============================================================================
\section{Method}
\label{sec:method}

\subsection{Preliminaries}
\label{subsec:preliminaries}

\paragraph{Flow Matching Formulation.}
Following JiT~\cite{li2025jit}, we adopt the Flow Matching formulation~\cite{lipman2023flowmatching,liu2023rectifiedflow,albergo2023stochastic}: $\zt = t \cdot x + (1{-}t) \cdot \epsilon$ with $\epsilon \sim \N(0, \mI)$ and $t \in [0, 1]$, where $t{=}1$ corresponds to clean data and $t{=}0$ to pure noise.

\paragraph{Prediction Targets.}
Three prediction targets have been proposed:
\begin{itemize}[leftmargin=*,itemsep=2pt,parsep=0pt]
    \item $\epsilon$-prediction: The network predicts noise $\epstheta(\zt, t) \approx \epsilon$
    \item $v$-prediction: The network predicts velocity $\vtheta(\zt, t) \approx v = x - \epsilon$
    \item $x$-prediction: The network predicts clean data $\xtheta(\zt, t) \approx x$
\end{itemize}

Given each prediction, clean data can be recovered as $\xhat^{(\epsilon)} = (\zt - (1{-}t)\epstheta)/t$, $\xhat^{(v)} = \zt + (1{-}t)\vtheta$, and $\xhat^{(x)} = \xtheta$ directly. Each formula estimates the posterior mean $\E[x \mid \zt]$, the flow matching analogue of Tweedie's formula~\cite{efron2011tweedie}, but through parameterizations with fundamentally different numerical stability (\suppref{app:manifold_theory}).

\subsection{Error Propagation in Clean Data Estimation}
\label{subsec:error_propagation}

We analyze how prediction errors propagate to clean data estimates, the quantity that determines whether guided trajectories remain on the data manifold.

\begin{proposition}[Error Amplification]
\label{prop:error_amplification}
Let $\delta_\epsilon = \norm{\epsilon - \epstheta}_2$, $\delta_v = \norm{v - \vtheta}_2$, and $\delta_x = \norm{x - \xtheta}_2$ be prediction errors for each target. The error in recovered clean data is:
\begin{align}
    \norm{\xhat^{(\epsilon)} - x}_2 &= \frac{1-t}{t} \delta_\epsilon \label{eq:eps_error}\\
    \norm{\xhat^{(v)} - x}_2 &= (1-t) \delta_v \label{eq:v_error}\\
    \norm{\xhat^{(x)} - x}_2 &= \delta_x \label{eq:x_error}
\end{align}
\end{proposition}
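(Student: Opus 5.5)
The plan is a direct algebraic computation. Throughout, fix a pair $(x,\epsilon)$ and a time $t\in(0,1]$, and set $\zt = t x + (1-t)\epsilon$. The errors are measured against this particular realization, not against the posterior mean. For each target we substitute this expression for $\zt$ into the corresponding recovery formula from \cref{subsec:preliminaries}. We then simplify $\xhat - x$ until it becomes a scalar multiple of the prediction residual. Taking the Euclidean norm and using $\norm{c\,u}_2 = \abs{c}\,\norm{u}_2$ with the relevant scalar nonnegative on $(0,1]$ gives each identity.

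\textbf{$\epsilon$-prediction.} We have $\xhat^{(\epsilon)} = (\zt - (1-t)\epstheta)/t$. Substituting $\zt$ gives $\xhat^{(\epsilon)} = x + \frac{1-t}{t}(\epsilon - \epstheta)$. Hence $\xhat^{(\epsilon)} - x = \frac{1-t}{t}(\epsilon-\epstheta)$, and \eqref{eq:eps_error} follows since $(1-t)/t \ge 0$.

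\textbf{$v$-prediction.} Here the key observation is the identity $\zt + (1-t)v = t x + (1-t)\epsilon + (1-t)(x-\epsilon) = x$, which uses $v = x-\epsilon$. Therefore $\xhat^{(v)} - x = \zt + (1-t)\vtheta - \zt - (1-t)v = -(1-t)(v - \vtheta)$. This gives \eqref{eq:v_error}.

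\textbf{$x$-prediction.} The case \eqref{eq:x_error} is immediate from $\xhat^{(x)} = \xtheta$.

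There is no genuine obstacle; the computation is linear. The only points needing care are the conventions. First, we must restrict to $t>0$, where the $\epsilon$-formula is defined; the blow-up as $t\to 0$ is exactly what the statement is meant to exhibit. Second, the targets $\epsilon$, $v$ and $x$ must refer to the same realization that generated $\zt$; this is what makes the recovery formulas exact when the prediction is exact. Third, because the maps from prediction to $\xhat$ are affine with scalar linear parts, the equalities hold exactly, not merely as bounds.
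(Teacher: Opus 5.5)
Your proposal is correct and matches the paper's proof. Both substitute $\zt = tx + (1-t)\epsilon$ into each recovery formula and read off $\xhat - x$ as a nonnegative scalar multiple of the prediction residual; your $v$-case via the identity $\zt + (1-t)v = x$ is only a rearrangement of the paper's substitution $\epsilon = x - v$.
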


\begin{proof}[Proof sketch]
By direct substitution of recovery formulas into the forward process. For $\epsilon$-prediction: $\xhat^{(\epsilon)} = x + \frac{1-t}{t}(\epsilon - \epstheta)$. For $v$- and $x$-prediction: analogous. Full proof in \suppref{app:proofs}.
\end{proof}

As $t \to 0$, $\epsilon$-prediction's amplification diverges while $v$- and $x$-prediction remain bounded: a strict hierarchy in how prediction errors propagate to clean-data estimates.

\paragraph{Cumulative Trajectory Divergence.}
\cref{prop:error_amplification} bounds the error at a single timestep, but guided sampling involves many steps; whether trajectories stay on the data manifold depends on how these errors accumulate. Guided sampling is iterative: errors at step $k$ corrupt the state for step $k{+}1$.

\begin{proposition}[Cumulative Guidance Error]
\label{prop:cumulative_error}
Under guided Euler sampling with $L_g$-Lipschitz guidance, the cumulative perturbation for $\epsilon$-prediction contains a $-\ln t_0$ term that diverges as $t_0 \to 0$, while the bound for $x$-prediction remains $\bigO(1{-}t_0)$. Full statement and proof in \suppref{app:proofs}.
\end{proposition}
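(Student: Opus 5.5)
I will make the informal statement precise with a discrete Gr\"onwall-type recursion on the deviation between the guided trajectory driven by the learned predictor and the ideal guided trajectory driven by the exact predictor.

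\textbf{Setup.} Take an Euler grid $t_0 < t_1 < \dots < t_N = 1$ with step $h_k = t_{k+1}-t_k$. Write the sampler in the form every target shares: the velocity is $u(\zt,t) = (\xhat(\zt,t) - \zt)/(1-t)$. Add a guidance correction $g(\zt,t) = -\rho\,\grad_{\zt}\energy(\xhat)$, assumed $L_g$-Lipschitz in $\zt$. Let $\Delta_k = \norm{z_{t_k} - z^\star_{t_k}}_2$, where $z^\star$ uses the exact posterior mean. Subtracting the two Euler updates gives
\begin{equation*}
\Delta_{k+1} \le (1 + h_k L_k)\Delta_k + h_k\,\norm{\xhat(z_{t_k}) - x^\star(z_{t_k})}_2/(1-t_k),
\end{equation*}
where $L_k$ collects the Lipschitz constant of the exact flow and $L_g$. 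The $1/(1-t)$ factor is harmless near $t_0$ and is the same for all targets. It only matters near $t=1$, so I would either stop the grid at $t_N = 1-\eta$ or note that the injected clean-image error also vanishes there. The injected term is then exactly \cref{prop:error_amplification}:
\begin{equation*}
\tfrac{1-t}{t}\delta_\epsilon,\qquad (1-t)\delta_v,\qquad \delta_x.
\end{equation*}
I would also track the guidance channel separately. The guidance gradient is evaluated at $\xhat$, so its error is bounded by $\rho L_{\energy}$ times the same clean-image error. It therefore inherits the same factor and only changes constants.

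\textbf{Unrolling.} Iterating the recursion gives
\begin{equation*}
\Delta_N \le \exp\Bigl(\textstyle\sum_k h_k L_k\Bigr)\sum_k h_k\,e_k .
\end{equation*}
Assume uniform per-step prediction errors $\delta_\bullet$ and a bounded total Lipschitz budget $\Lambda = \sum_k h_k L_k$. The Riemann sums become integrals:
\begin{itemize}
\item For $\epsilon$: $\int_{t_0}^{1}\frac{1-t}{t}\,dt = -\ln t_0 - (1-t_0)$. This contains the $-\ln t_0$ term, which diverges as $t_0\to 0$.
\item For $v$: the integral is $\tfrac12(1-t_0)^2$.
\item For $x$: the integral is $1-t_0$, giving the $\bigO(1-t_0)$ bound.
\end{itemize}
For a lower-bound flavour, so that "contains a divergent term" is more than an upper-bound artifact, I would note that for $\epsilon$ the first step alone already injects $h_0\frac{1-t_0}{t_0}\delta_\epsilon$. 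I would then check the left Riemann sum, which dominates the integral for the decreasing function $(1-t)/t$.

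\textbf{Main obstacle.} The hard part is controlling $\Lambda$ uniformly near $t\to 0$. The Lipschitz constant of the exact flow, and of guidance composed with the $\epsilon$ recovery map, can itself scale like $1/t$. That would turn the exponential factor into a power of $1/t_0$ and blur the comparison. My first attempt is to state the result with $\Lambda$ as an explicit assumption, namely bounded Jacobians of the true posterior mean, which holds for data with bounded support. Because that factor is common to all targets, the hierarchy is determined by the injected-error integrals. I would flag that the comparison is between upper bounds, with the lower bound holding only for the first-step injection.
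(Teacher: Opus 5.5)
Your argument is sound in substance but takes a different route from the paper. The paper never follows the trajectory. It bounds the per-step guidance error by $L_g\norm{\xhat - x}_2$, treating the guidance as $\grad\energy$ evaluated at $\xhat$. It then substitutes Proposition~\ref{prop:error_amplification} and sums these injections, passing to the limit $\alpha L_g\int_{t_0}^{1}(\cdot)\,dt$. There is no propagation factor and no error in the unguided velocity. Your discrete Gr\"onwall recursion between the learned and exact guided trajectories gives a genuine bound on the final deviation, $\Delta_N \le e^{\Lambda}\sum_k h_k e_k$. It also makes explicit what the paper assumes silently: that the flow does not amplify injected errors.

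Because you split through $u^\star(z_{t_k})$ and $g^\star(z_{t_k})$, $\Lambda$ involves only the exact, target-independent field and cancels from the comparison. So your worry that $\Lambda$ inherits the $1/t$ of the $\epsilon$ recovery map is misplaced: that map enters only the injected term. Moreover, for bounded support $\grad_{\zt}\E[x\mid\zt] = \frac{t}{(1-t)^2}\mathrm{Cov}[x\mid\zt]$ stays bounded as $t\to 0$, so the exact field is delicate only near $t=1$. The paper's version gives exact closed forms, including its $v$-versus-$x$ threshold. Yours gives a statement about the sampler's actual output.

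Two steps need tightening.

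\textbf{The injected term.} Your injected term is $\norm{\xhat - x^\star}_2/(1-t_k)$, which is not ``exactly'' Proposition~\ref{prop:error_amplification}. The velocity-channel integrands are $\delta_\epsilon/t$, $\delta_v$ and $\delta_x/(1-t)$, and the last is not integrable at $t=1$ under your uniform-error assumption. Your alternative, that ``the error vanishes there,'' contradicts that same assumption. The integrals you evaluate belong to the guidance channel alone, which is exactly the paper's quantity. With $t_N = 1-\eta$ the factor $1/(1-t)$ lies in $[1,1/\eta]$, so the velocity channel contributes at most $\eta^{-1}$ times your integrals. The $-\ln t_0$ term for $\epsilon$ (present here even without guidance) and the $\bigO(1-t_0)$ bound for $x$ then survive with an $\eta$-dependent constant. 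The result should be stated that way.

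\textbf{The guidance mismatch.} With $g = -\rho\,\grad_{\zt}\energy(\xhat)$, the mismatch is not just $\rho L_{\energy}\norm{\xhat - x^\star}_2$. It is $\rho$ times $\jacobian_{x^\star}^\top\bigl(\grad\energy(\xhat)-\grad\energy(x^\star)\bigr) + (\jacobian_{\xhat}-\jacobian_{x^\star})^\top\grad\energy(\xhat)$. For $\epsilon$-prediction, $\jacobian_{\xhat}-\jacobian_{x^\star} = -\frac{1-t}{t}(\jacobian_{\epstheta}-\jacobian_{\epsilon^\star})$ with $\epsilon^\star = \E[\epsilon\mid\zt]$. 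So this term carries the same factor and needs only a uniform bound on the Jacobian error. Split through the bounded $\jacobian_{x^\star}$ as above, not through $\jacobian_{\xhat}$. The latter is $\bigO(1/t)$ for $\epsilon$ and would produce a spurious $1/t_0$ term.
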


Early high-noise steps contribute disproportionately large errors under $\epsilon$-prediction, corrupting the trajectory for all subsequent steps and driving systematic departure from the data manifold.

\begin{remark}[Manifold Force Interaction]
\label{rem:manifold_force}
The score $\grad_{\zt} \log p_t(\zt)$ decomposes into a denoising component and a \emph{manifold force}~\cite{pidstrigach2022manifold} that pulls samples toward the data manifold $\manifold$. This restoring force is weakest near $t = 0$ (pure noise) and becomes dominant only as $t \to 1$ (clean data) (\suppref{app:manifold_theory}). At the start of the reverse process, where this restoring force is weakest, $\epsilon$-prediction's $\bigO(1/t)$ error amplification (\cref{prop:error_amplification}) is simultaneously at its strongest, corrupting the clean data estimate and allowing guidance to overpower the weakened manifold force, driving samples off $\manifold$. For $x$-prediction, no such singularity exists, and guidance and manifold forces compose stably.
\end{remark}

The amplification factors above are dimension-independent, but prior work~\cite{karras2022edm,hang2023minsnr,jin2026kdiff} has shown that prediction errors themselves scale with dimension: $\delta_\epsilon \sim \sqrt{D}$ while $\delta_x \sim \sqrt{d}$ with $d \ll D$. This base-error gap compounds with the amplification hierarchy; see \suppref{app:manifold_theory} for a detailed analysis.

\subsection{Implications for Training-Free Guidance}
\label{subsec:guidance_implications}

Guidance methods compute gradients $\grad_{\zt} \energy(\xhat)$ where $\energy$ is an energy function.

\begin{theorem}[Gradient Stability]
\label{thm:gradient_stability}
For a Lipschitz energy function $\energy$ with constant $L$, the guidance gradient bound scales as $\bigO(1/t)$ for $\epsilon$-prediction, $\bigO(1)$ for $v$-prediction, and $\bigO(\norm{\jacobian_{\xtheta}})$ for $x$-prediction. Full bounds and proof in \suppref{app:proofs}.
\end{theorem}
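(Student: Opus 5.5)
The proof is a chain-rule computation. For each prediction target I would write the guidance gradient as
\[
\grad_{\zt}\energy(\xhat) = \jacobian_{\xhat}(\zt)^\top \grad_{x}\energy(\xhat),
\qquad
\jacobian_{\xhat} = \frac{\partial \xhat}{\partial \zt},
\]
and then use the Lipschitz assumption to get $\norm{\grad_x\energy}_2 \le L$. This gives
\[
\norm{\grad_{\zt}\energy(\xhat)}_2 \le L\,\norm{\jacobian_{\xhat}}_{\mathrm{op}} .
\]
Everything then reduces to bounding the operator norm of the Jacobian of each recovery formula from the preliminaries. The $t$-dependence of that Jacobian plays the same role as the amplification factors in \cref{prop:error_amplification}.

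Differentiating the three recovery formulas with respect to $\zt$ gives:
\begin{itemize}[leftmargin=*,itemsep=2pt,parsep=0pt]
\item For $\epsilon$-prediction, $\jacobian_{\xhat^{(\epsilon)}} = \frac{1}{t}\bigl(\mI - (1{-}t)\jacobian_{\epstheta}\bigr)$, so
\[
\norm{\jacobian_{\xhat^{(\epsilon)}}} \le \frac{1 + (1{-}t)\norm{\jacobian_{\epstheta}}}{t}
\]
and the gradient bound is $\frac{L}{t}\bigl(1 + (1{-}t)\norm{\jacobian_{\epstheta}}\bigr) = \bigO(1/t)$.
\item For $v$-prediction, $\jacobian_{\xhat^{(v)}} = \mI + (1{-}t)\jacobian_{\vtheta}$. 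This gives the bound $L\bigl(1 + (1{-}t)\norm{\jacobian_{\vtheta}}\bigr)$, which is $\bigO(1)$ uniformly on $t\in(0,1]$.
\item For $x$-prediction, $\jacobian_{\xhat^{(x)}} = \jacobian_{\xtheta}$ directly, so the bound is $L\norm{\jacobian_{\xtheta}}$.
\end{itemize}

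The only real obstacle is interpreting the stated scalings. These require that the network Jacobians $\jacobian_{\epstheta}$ and $\jacobian_{\vtheta}$ be bounded uniformly in $t$, or at least not blow up as $t\to0$. I would state this explicitly as an assumption: each network is Lipschitz in $\zt$ with a constant independent of $t$. Under that assumption the $1/t$ prefactor is the only source of singular behavior, which gives the $\bigO(1/t)$ versus $\bigO(1)$ distinction.

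One subtlety needs a remark: whether the $\mI/t$ term can cancel against $(1{-}t)\jacobian_{\epstheta}/t$. For the exact network the cancellation does occur, because all parameterizations represent the same posterior mean. The argument therefore has to be framed, as in \cref{prop:error_amplification}, as a statement about the bound, i.e.\ the worst case under imperfect networks, and not about an exact lower bound.

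For $x$-prediction I would leave the bound in terms of $\norm{\jacobian_{\xtheta}}$ and not claim a uniform constant. I would note that near $t\to0$ the posterior mean is nearly constant in $\zt$, so this Jacobian is small.
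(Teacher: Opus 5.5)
Your proposal is correct and follows the paper's proof essentially step for step: the chain rule with $\norm{\grad\energy}_2\le L$, the three recovery-formula Jacobians $\frac{1}{t}(\mI-(1-t)\jacobian_{\epstheta})$, $\mI+(1-t)\jacobian_{\vtheta}$ and $\jacobian_{\xtheta}$, and the triangle inequality, which give exactly the paper's bounds. You state explicitly that the network Jacobians must be bounded uniformly in $t$; the paper adds the same caveat after its proof (the asymptotic scalings assume $\norm{\jacobian}_2=\bigO(1)$). Your remark that the $1/t$ cancels for the exact posterior-mean network is also right, and it correctly marks these as worst-case upper bounds rather than tight rates.
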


The gradient bounds mirror the error hierarchy (assuming comparable network Jacobian norms across targets), indicating $x$-prediction yields the most stable guidance gradients among the three targets, particularly at early timesteps where global structure is determined.

\paragraph{From Error Amplification to Child FID.}
The error amplification hierarchy predicts a specific empirical signature. Cumulative trajectory perturbation (\cref{prop:cumulative_error}) means that samples departing the data manifold during early guided steps cannot re-enter the target class's natural distribution. We measure this through \emph{guided-class FID} (Child FID): FID computed between guided samples of class $y$ and real images of class $y$. A model achieving high Validity (classifier accuracy) but high C-FID produces adversarial-like successes: samples that fool the classifier without resembling real class members. The hierarchy predicts that $\epsilon$-prediction enters this adversarial regime at lower guidance strengths than $v$- or $x$-prediction, a prediction we test directly in \cref{sec:results}.

\paragraph{Applying TFG to $x$-prediction.}
\label{subsec:ma_tfg}
Under TFG~\cite{ye2024tfg}, $x$-prediction simplifies guidance: $\xhat = \xtheta(\zt, t)$ directly, bypassing the unstable recovery formula. The full algorithm and latent-space details are in \suppref{app:algorithm_latent}.

% =============================================================================
% SECTION 4: EXPERIMENTS (compressed for 14-page main body)
% =============================================================================
\section{Experiments}
\label{sec:experiments}

We evaluate whether $x$-prediction provides a better foundation for training-free guidance compared to $\epsilon$- and $v$-prediction. Full experimental protocols and additional studies are in \suppref{app:experimental_protocols}.

\subsection{Models}
\label{subsec:pretrained_models}

We use official pretrained checkpoints spanning three prediction targets and two operating spaces (\cref{tab:pretrained_models}); model sources, seeds, and the compute budget are in \suppref{app:reproducibility}. DiT-XL/2 ($\epsilon$) and SiT-XL/2 ($v$) share identical architecture, parameters (675M diffusion model + 49M VAE decoder), training data, and latent space, isolating prediction target as the sole variable. JiT-H/16 ($x$, 953M) is our primary $x$-prediction model, chosen over the larger JiT-G (2B) for parameter-scale comparability. PixelFlow ($v$, pixel) provides a critical control: against SiT it isolates operating-space effects; against JiT it isolates prediction target within pixel space.
ADM-G~\cite{dhariwal2021diffusion} (U-Net, $\epsilon$-prediction, FID 4.59) is the only available pixel-space $\epsilon$-prediction baseline (\suppref{app:confound_discussion}). JiT model variants (B/L/H/G) are detailed in \suppref{app:confound_discussion}.

\paragraph{Why models differ beyond prediction target.}
Li and He~\cite{li2025jit} showed that, under identical pixel-space transformer training, $\epsilon$-prediction achieves FID 372.38 versus 8.62 for $x$-prediction, a 43$\times$ gap indicating that $\epsilon$-prediction depends on latent-space compression to function competitively. Each model in our comparison therefore represents its prediction target's best achievable configuration; the architecture and space differences are consequences, not confounds, of prediction target choice (\suppref{app:confound_discussion}).

\begin{table}[t]
    \caption{
        \textbf{ImageNet 256$\times$256 pretrained models.}
        All use the Diffusion Transformer architecture.
        $^*$JiT-G/16 (2B) achieves FID 1.82; we use JiT-H for parameter comparability.
        $^\dagger$Includes VAE decoder (49M); under TFG the decoder is invoked at every denoising step for guidance in pixel space, making it an integral part of the generation pipeline.
        $^\ddagger$Cascade total across all stages; not directly comparable to single-pass GFLOPs.
    }
    \label{tab:pretrained_models}
    \centering
    \footnotesize
    \begin{tabular}{@{}llccccc@{}}
        \toprule
        Space & Model & Target & FID$\downarrow$ & IS$\uparrow$ & Params & GFLOPs \\
        \midrule
        \multirow{2}{*}{Pixel}
        & PixelFlow~\cite{chen2025pixelflow} & $v$ & 1.98 & 282.1 & 677M & 2909$^\ddagger$ \\
        & JiT-H/16$^*$~\cite{li2025jit} & $x$ & \textbf{1.86} & \textbf{303.4} & 953M & 182 \\
        \midrule
        \multirow{2}{*}{Latent}
        & DiT-XL/2~\cite{peebles2023dit} & $\epsilon$ & 2.27 & 278.2 & 724M$^\dagger$ & 119 \\
        & SiT-XL/2~\cite{ma2024sit} & $v$ & 2.06 & 277.5 & 724M$^\dagger$ & 119 \\
        \bottomrule
    \end{tabular}
\end{table}

\subsection{Controlled Ablation: Crossed-Lines}
\label{subsec:crossed_lines_setup}

To isolate prediction target effects from all other confounds, we train identical MLP-based flow matching models on a 2D crossed-lines dataset (two 1D line manifolds, $b{=}a$ and $b{=}{-}a$, with perpendicular Gaussian noise) embedded in ambient dimensions $D \in \{2, 8, 32, 128, 512\}$ via column-orthogonal projection (\suppref{app:experimental_protocols}).

\paragraph{Architecture and training.}
For each $D$, we train three residual MLPs (256 hidden, 5 blocks) differing only in prediction target, with flow matching for 500 epochs and identical hyperparameters; a separate MLP classifier per $D$ provides the DPS signal (details in \suppref{app:experimental_protocols}).

\paragraph{Guidance and evaluation.}
We apply DPS with guidance strength $s{=}10$, using Euler sampling with 100 steps. Each condition generates 10{,}000 samples. We report \textbf{on-manifold rate}: the fraction of generated samples within perpendicular distance $\delta$ of the true 1D manifold, where $\delta$ is the 95th percentile of ground truth perpendicular distances (measurement details in \suppref{app:experimental_protocols}). By embedding the same 1D manifold ($d{=}1$) in progressively higher ambient dimensions, this ablation directly tests the dimension-dependent error scaling analyzed in \suppref{app:manifold_theory}. Full experimental details and additional metrics are in \suppref{app:experimental_protocols}.

\subsection{Fine-Grained Bird Classification Benchmark}
\label{subsec:bird_benchmark}

We construct a hierarchical fine-grained classification benchmark to evaluate training-free guidance quality at the species level.

\paragraph{Construction.}
Starting from 30 ImageNet bird classes (e.g., \texttt{goldfinch}, \texttt{hummingbird}, \texttt{drake}), we identify fine-grained species from a 525-species bird classification dataset~\cite{piosenka2023birds} (previously used for fine-grained guidance evaluation by Ye~\etal~\cite{ye2024tfg}) that map to each parent, yielding 143 species nested within 30 parent classes (2--20 species per parent, mean 4.8; dataset details in \suppref{app:experimental_protocols}). This hierarchy naturally separates two levels of conditioning: classifier-free guidance (CFG)~\cite{ho2022cfg} steers toward the parent class using the model's own class conditioning, while gradient-based guidance (DPS~\cite{chung2023dps}) steers each sample toward a specific species via an external fine-grained classifier\footnote{Guidance: \url{https://huggingface.co/dennisjooo/Birds-Classifier-EfficientNetB2}; evaluation: \url{https://huggingface.co/chriamue/bird-species-classifier}}. We use separate classifiers for guidance and evaluation to avoid circular evaluation~\cite{shen2024tfgunderstanding} (\suppref{app:metric_justification}).

\paragraph{Why fine-grained birds?}
Bird classes are already part of ImageNet's label space, so pretrained models can generate them without domain transfer. Species differ in subtle plumage, beak, and eye markings, requiring semantic shifts that make the gap between classifier-fooling artifacts and on-manifold guidance more visible. The hierarchical structure (parent class $\to$ species) naturally separates the roles of CFG and DPS, enabling controlled guidance-strength sweeps.

\paragraph{Scale.}
Each condition generates 64 samples per species across all 143 classes (9{,}152 images), with $\rho$ swept across 5--10 values per model for six models in total.

\subsection{Evaluation Protocol}
\label{subsec:eval_protocol}

Standard evaluation reports Validity and FID at a fixed guidance strength, but neither metric detects manifold departure. Validity rewards any image the classifier labels correctly, including off-manifold artifacts that fool the network. Parent FID (P-FID, against the full ImageNet reference) rises whether guidance degrades images or successfully shifts them toward a sub-class. A survey of 17 method papers reveals that manifold-aware metrics and guidance-strength sweeps remain uncommon: only two report manifold-aware metrics (\suppref{app:eval_survey}).

\paragraph{Child FID and guidance sweeps.}
We propose \textbf{Child FID} (C-FID): FID~\cite{heusel2017fid} between guided samples and the \emph{target domain}, the bird species dataset (justified in \suppref{app:metric_justification}). Rising P-FID paired with falling C-FID signals successful guidance; both rising signals degradation. Rather than single-point comparisons, we sweep $\rho$ and plot Pareto curves (P-FID vs.\ Validity, P-FID vs.\ C-FID; \cref{fig:rho_sweep}).

\paragraph{Inference setup.}
We standardize all models to NFE$\approx$100 using each model's native sampler: DiT uses 100-step DDPM, SiT and JiT use 50-step Heun, and PixelFlow uses 30-step$\times$4-stage Euler (NFE$=$120). Latent models use the ema VAE decoder. These settings reduce NFE from each model's published optimum to enable fair comparison; full configurations are in \suppref{app:experimental_protocols} (\cref{tab:inference_hyperparams}).

\paragraph{Guidance.}
We apply DPS~\cite{chung2023dps}, adding $\rho \nabla_{z_t} \log p(y \mid \xhat)$ at each denoising step with $x$-space corrections. Latent models (DiT, SiT) require VAE decoder passes at each guidance step for pixel-space gradients; this overhead is absent for pixel-space models (JiT, PixelFlow). Full configuration details are in \suppref{app:experimental_protocols}.

% =============================================================================
% SECTION 5: RESULTS
% =============================================================================
\FloatBarrier
\section{Results}
\label{sec:results}

\subsection{Crossed-Lines Ablation}
\label{subsec:crossed_lines_results}

The crossed-lines toy experiment (\cref{fig:teaser}b, \cref{fig:crossed_lines_compact}) isolates the effect of prediction target in a controlled setting where architecture and training are identical. As ambient dimension increases from $D{=}2$ to $D{=}512$, the hierarchy predicted by \cref{prop:error_amplification} emerges (\cref{fig:onmanifold_vs_dim}): $x$-prediction maintains 93.3\% on-manifold rate at $D{=}512$, $v$-prediction degrades to 21.5\%, and $\epsilon$-prediction collapses to 0.5\%, consistent with $\sqrt{D}$ error scaling (\suppref{app:manifold_theory}). Full metrics and the complete visualization grid are in \suppref{app:experimental_protocols} (\cref{tab:crossed_lines_full,fig:crossed_lines_grid}).

\FloatBarrier
\subsection{Fine-Grained Bird Classification}
\label{subsec:bird_results}

\begin{figure}[t]
    \centering
    \begin{subfigure}[b]{0.49\textwidth}
        \includegraphics[width=\textwidth]{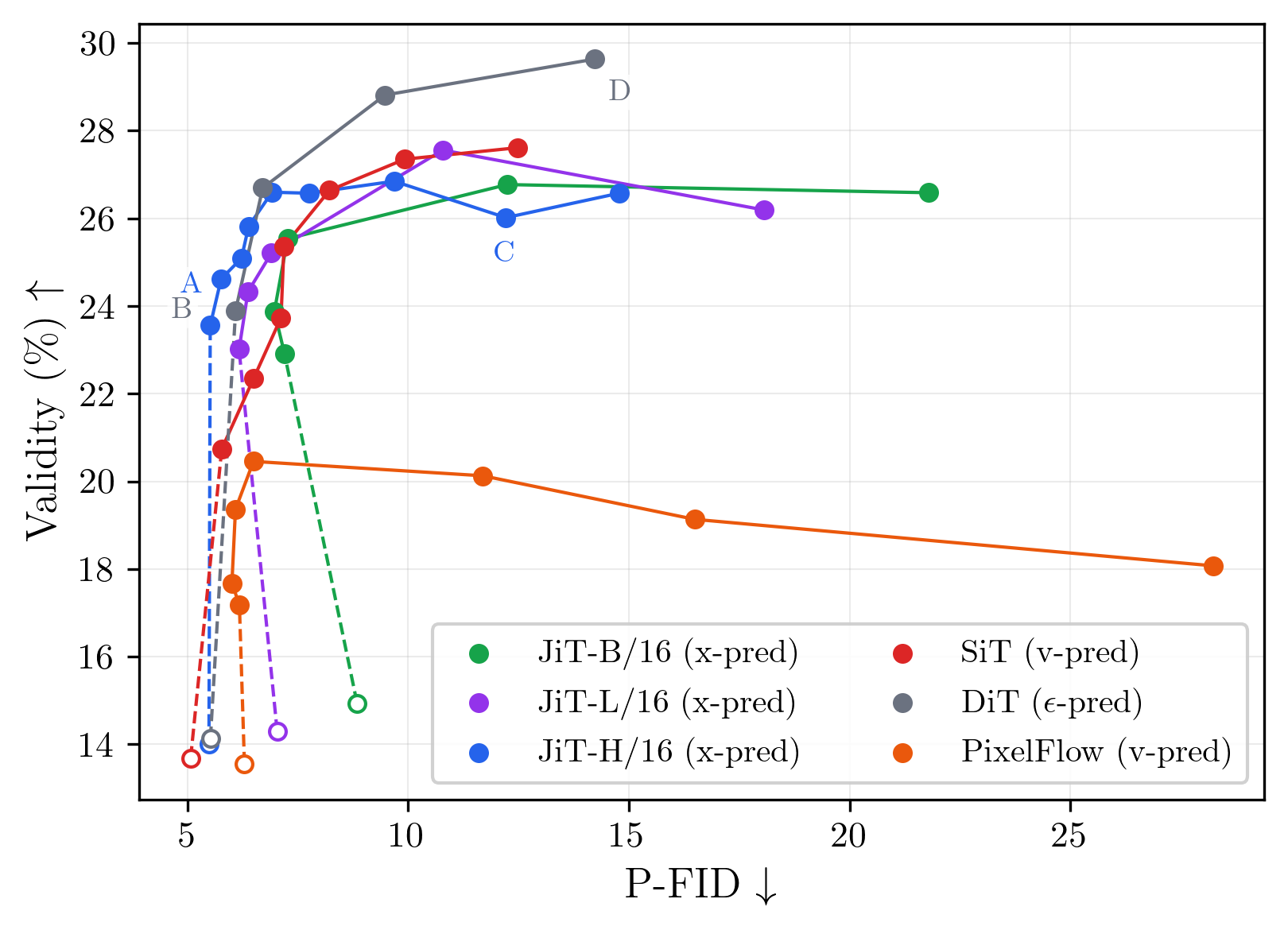}
        \caption{P-FID vs.\ Validity}
        \label{fig:rho_fid_validity}
    \end{subfigure}
    \hfill
    \begin{subfigure}[b]{0.49\textwidth}
        \includegraphics[width=\textwidth]{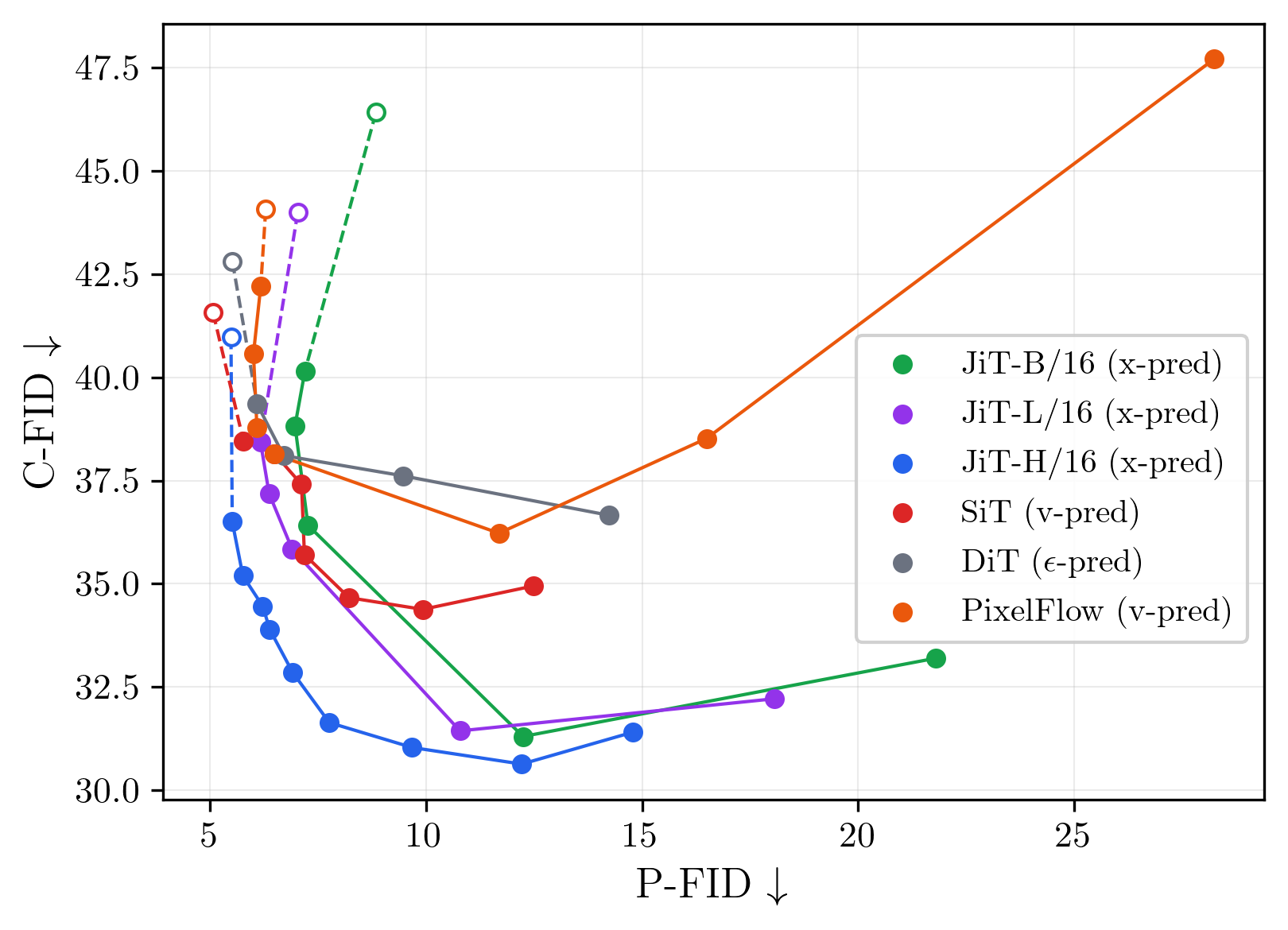}
        \caption{P-FID vs.\ C-FID}
        \label{fig:rho_fid_cfid}
    \end{subfigure}
    \caption{
        \textbf{Guidance-quality Pareto frontiers on fine-grained bird classification.}
        Each curve traces one model across guidance strengths $\rho$; open markers denote CFG-only baselines ($\rho{=}0$), dashed segments connect to the first guided setting. Each data point represents 9{,}152 generated images (143 species $\times$ 64 samples).
        \textbf{(a)}~P-FID vs.\ Validity. Labels A--D mark the operating points visualized in \cref{fig:qualitative}.
        \textbf{(b)}~P-FID vs.\ C-FID.
    }
    \label{fig:rho_sweep}
\end{figure}

\cref{fig:rho_sweep} presents guidance-strength sweeps across six models spanning three prediction targets and two operating spaces.

\paragraph{Validity alone is misleading.}
All models show increasing Validity with $\rho$ (\cref{fig:rho_fid_validity}), but the quality cost differs substantially across targets. At matched Validity ($\approx$26.6\%), JiT-H ($\rho{=}3$, P-FID 6.9) and DiT ($\rho{=}0.1$, P-FID 6.7) appear equivalent. Under stronger guidance, DiT reaches 29.6\% Validity but at P-FID 14.2 ($2.6{\times}$ its baseline), while JiT maintains low P-FID throughout. Single-point comparisons obscure this divergence.

\paragraph{Child FID reveals manifold fidelity.}
\cref{fig:rho_fid_cfid} disambiguates the P-FID trade-off. At matched Validity ($\approx$26.6\%), JiT-H achieves C-FID 32.9 versus DiT's 38.1 and SiT's 34.7, a 5.2-point gap between $x$- and $\epsilon$-prediction at identical classifier confidence. DiT's trajectory is revealing: from $\rho{=}0.1$ to $\rho{=}0.5$, Validity gains come with stagnant C-FID and collapsing P-FID, a characteristic signature of adversarial-like guidance predicted by \cref{prop:error_amplification}. Qualitative inspection confirms this pattern: DiT's guided samples concentrate on a narrow set of visual templates, whereas JiT produces diverse compositions across the same species (\cref{fig:qualitative}).

\paragraph{Mode collapse.}
We quantify the diversity loss with DINOv2 Precision and Recall~\cite{naeem2020reliable} (\cref{fig:prdc_main}). Mode collapse corresponds to high Precision with low Recall: a model that produces a narrow but realistic subset covers the target distribution poorly. DiT ($\epsilon$) follows this pattern, with Precision peaking at 0.24 while Recall stays around 0.49, whereas JiT-H ($x$) has lower Precision but higher Recall (up to 0.59). This accounts for the high Precision of $\epsilon$-prediction: Precision measures only the realism of generated samples, not coverage of the target distribution, and Recall shows that DiT covers less of it. The behavior is consistent with the error amplification hierarchy (\cref{prop:error_amplification}): under guidance, $\epsilon$-prediction concentrates samples on a narrow set of classifier-activating features and reduces the support that $x$-prediction retains.

\begin{figure}[t]
    \centering
    \begin{subfigure}[b]{0.49\textwidth}
        \includegraphics[width=\textwidth]{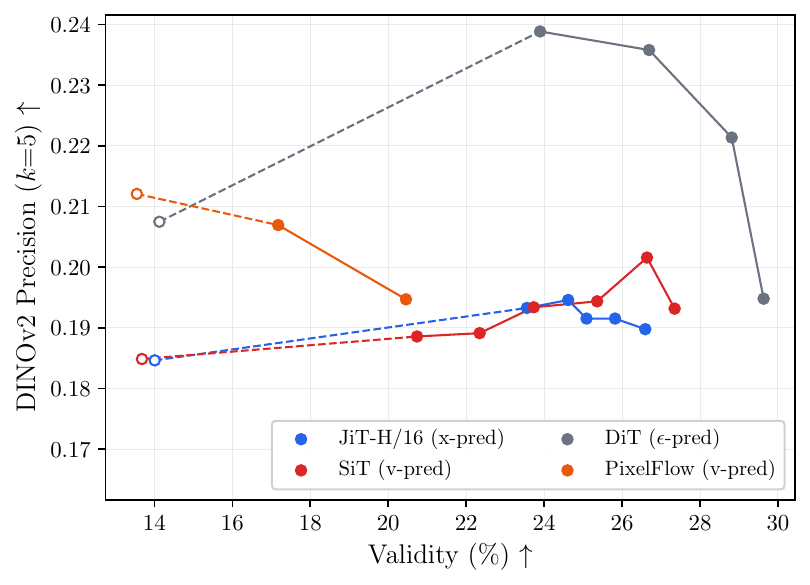}
        \caption{Validity vs.\ Precision}
        \label{fig:prdc_precision}
    \end{subfigure}
    \hfill
    \begin{subfigure}[b]{0.49\textwidth}
        \includegraphics[width=\textwidth]{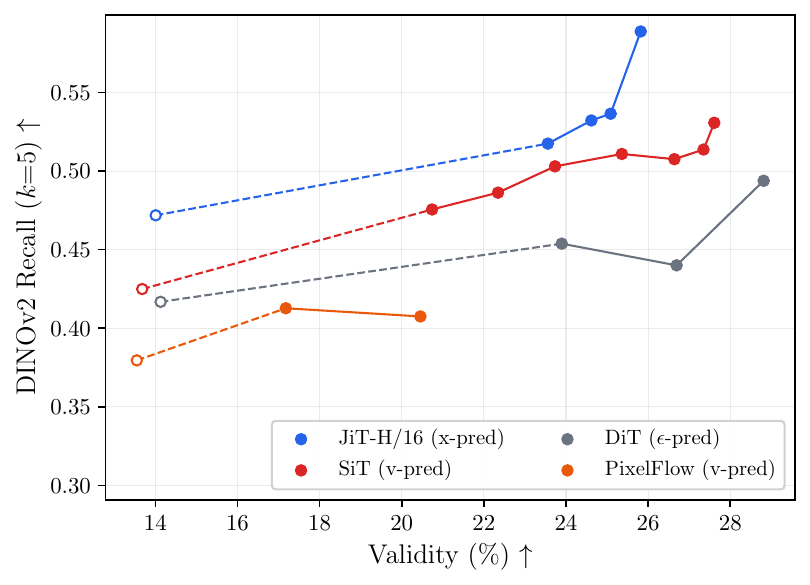}
        \caption{Validity vs.\ Recall}
        \label{fig:prdc_recall}
    \end{subfigure}
    \caption{
        \textbf{Precision and Recall under guidance on fine-grained bird classification.}
        DINOv2 $k$-NN Precision and Recall traced over the DPS $\rho$-sweep; open markers denote CFG-only baselines ($\rho{=}0$), dashed segments connect to the first guided setting.
        \textbf{(a)}~$\epsilon$-prediction (DiT) attains the highest Precision (fidelity), while
        \textbf{(b)}~$x$-prediction (JiT) attains the highest Recall (coverage). The joint pattern of high Precision and low Recall is the mode-collapse signature of $\epsilon$-prediction. Full six-model curves are in \cref{fig:prdc_sweep} (\suppref{app:prdc}).
    }
    \label{fig:prdc_main}
\end{figure}

\paragraph{Prediction target vs.\ operating space.}
PixelFlow ($v$-prediction, pixel space) provides a critical control for JiT ($x$-prediction, pixel space). Despite sharing the same operating space, PixelFlow exhibits a substantially worse Pareto frontier: its C-FID initially improves from 44.1 ($\rho{=}0$) to 36.2 ($\rho{=}2$) but then \emph{increases} to 47.7 at strong guidance (\cref{fig:rho_fid_cfid}), signaling manifold departure. In contrast, JiT-H's C-FID continues decreasing throughout the sweep, reaching 30.6 at $\rho{=}8$. This indicates that the prediction target, rather than the operating space, is the primary determinant of guidance robustness in this comparison.

\paragraph{Latent space models.}
DiT and SiT operate in a 32$\times$32 VAE latent space, requiring decoder passes at each guidance step. SiT ($v$-prediction) dominates DiT ($\epsilon$-prediction), consistent with bounded error attenuation (\cref{prop:error_amplification}), but both are dominated by JiT in C-FID. The VAE's 8$\times$ downsampling may further limit fine-grained guidance resolution, though we do not isolate this factor.

\paragraph{Scaling with model capacity.}
Among JiT variants (B/L/H), larger models achieve strictly better Pareto frontiers: JiT-H reaches C-FID 30.6 versus JiT-B's 31.3 at comparable P-FID. Increased capacity yields higher-fidelity $\xhat$ estimates and more accurate guidance gradients, a \emph{guidance scaling} effect.

\paragraph{Connection to theory.}
The C-FID evidence supports the error amplification hierarchy (\cref{prop:error_amplification,thm:gradient_stability,prop:cumulative_error}): samples departing $\manifold$ at early steps cannot return to realistic distributions, inflating C-FID even when the classifier is fooled. $x$-prediction tolerates aggressive guidance with less manifold degradation. The same ordering holds under other gradient-based methods (LGD~\cite{song2023lgd}, FreeDoM~\cite{yu2023freedom}) and on a second fine-grained domain, a 34-species butterfly benchmark (\suppref{app:experimental_protocols}, \cref{fig:tfg_family,fig:butterfly}): guidance quality follows the prediction target, not the specific method or domain. Additional ablations are in \suppref{app:experimental_protocols}.

\FloatBarrier
\subsection{Style Transfer}
\label{subsec:style_results}

To test whether the prediction target hierarchy extends beyond classification, we apply DPS to style transfer: guiding generation toward a target visual style via CLIP Gram matrix matching~\cite{gatys2016styletransfer}. We use CLIP ViT-B/16~\cite{radford2021clip} for guidance and evaluate with Gram Distance (CLIP ViT-B/32; lower = stronger style match) and Content Accuracy (DeiT-Small~\cite{touvron2021deit} top-1 accuracy on the original ImageNet class; higher = better content preservation). Four WikiArt\footnote{\url{https://www.wikiart.org/}} styles are evaluated across 100 ImageNet classes (400 images per model/$\rho$).

\begin{wrapfigure}{r}{0.5\textwidth}
    \centering
    \vspace{-12pt}
    \includegraphics[width=0.48\textwidth]{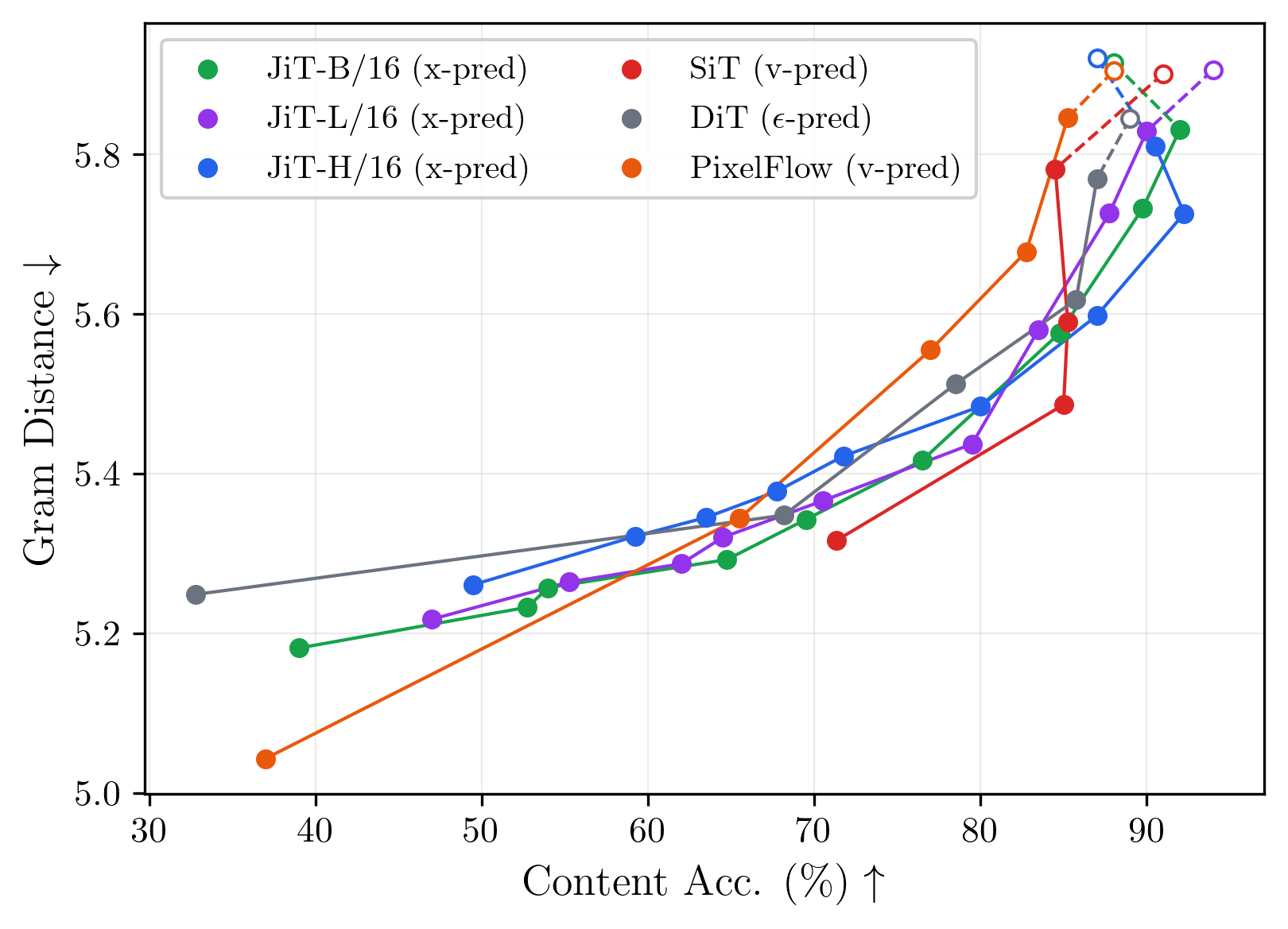}
    \caption{
        \textbf{Style transfer: Gram Distance vs.\ Content Accuracy.}
        Curves trace models as $\rho$ increases (right to left). Open markers = CFG-only baselines. Lower-right is preferred.
    }
    \label{fig:style_sweep}
    \vspace{-10pt}
\end{wrapfigure}

\cref{fig:style_sweep} presents Gram Distance vs.\ Content Accuracy Pareto frontiers. All models trade content preservation for style fidelity as $\rho$ increases, but the degradation rate varies considerably across prediction targets. Overall, the differences between models are less pronounced than in fine-grained classification (\cref{subsec:bird_results}).

\paragraph{$\epsilon$- and $v$-prediction collapse under strong style guidance.}
DiT's Content Accuracy drops from 89\% ($\rho{=}0$) to 1.5\% ($\rho{=}10$), effectively random, while achieving Gram Distance 5.30. The low Gram Distance is meaningless when images no longer depict recognizable content. PixelFlow ($v$-prediction, pixel space) follows a similar pattern: at $\rho{=}4$, it achieves the lowest Gram Distance of any model (5.04) but at only 37\% Content Accuracy, where images lose semantic coherence.

\paragraph{$x$-prediction preserves content over a wider guidance range.}
At high Content Accuracy ($>$80\%), JiT-H achieves better Gram Distance than DiT and PixelFlow: at comparable Content Accuracy ($\approx$80\%), JiT-H ($\rho{=}10$, Gram Distance 5.48) outperforms DiT ($\rho{=}0.5$, Gram Distance 5.62) and PixelFlow ($\rho{=}1$, Gram Distance 5.56). JiT-H further reaches Content Accuracy 49.5\% at $\rho{=}50$ (Gram Distance 5.26), matching DiT's best Gram Distance while retaining meaningful content fidelity.

\paragraph{SiT is competitive at moderate guidance.}
SiT ($v$-prediction, latent) achieves a competitive Pareto frontier at moderate $\rho$: Gram Distance 5.49 at Content Accuracy 85.0\% ($\rho{=}1$). At stronger guidance ($\rho{=}4$), SiT already drops to 71\% Content Accuracy while JiT-H retains 87\%; by $\rho{=}10$, SiT falls to 38\% while JiT-H retains 80\%. The Pareto frontiers of SiT and JiT overlap in the moderate-guidance regime and diverge at the extremes.

\paragraph{Consistent failure modes across tasks.}
Although the quantitative gap between models is smaller than in fine-grained classification, the qualitative failure modes are consistent: DiT's guided samples exhibit mode collapse and loss of background detail, while JiT-H preserves compositional diversity (\cref{fig:style_vis} in \suppref{app:experimental_protocols}). This suggests that the error amplification hierarchy (\cref{prop:error_amplification}) manifests across guidance tasks, even when the aggregate metrics show smaller differences. The Gram matrix guidance signal captures aggregate texture statistics rather than fine-grained spatial details, which may explain the reduced quantitative separation.

\paragraph{Additional experiments.}
We also evaluate DPS on two inverse problems (Gaussian deblurring and 4$\times$ super-resolution), where $x$-prediction again achieves the best perceptual quality (LPIPS) across all models. Full results and discussion are in \suppref{app:experimental_protocols} (\cref{app:inverse_problems}).

\FloatBarrier
\subsection{Qualitative Analysis}
\label{subsec:qualitative}

\begin{figure}[p]
    \centering
    % (a) JiT-H moderate guidance
    \begin{subfigure}[b]{\textwidth}
        \includegraphics[width=\textwidth]{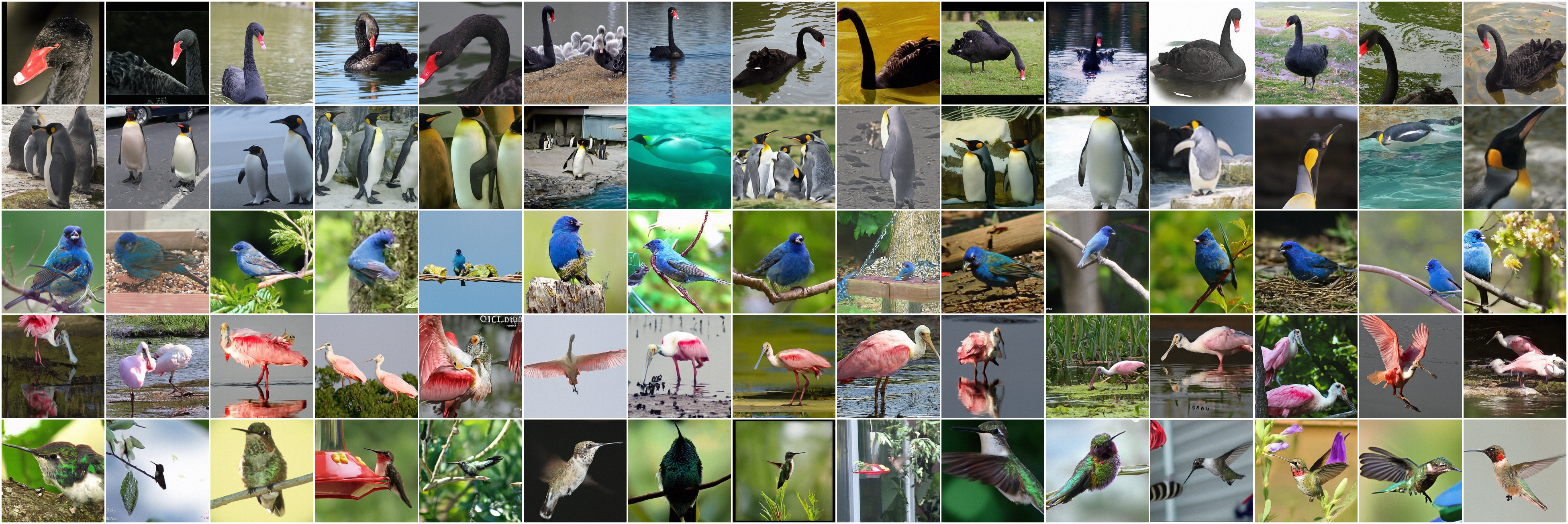}
        \caption{JiT-H ($x$, $\rho{=}1.5$): P-FID 6.2, C-FID 34.5, Val.\ 25.1\% (point A in \cref{fig:rho_fid_validity})}
        \label{fig:vis_on_jit}
    \end{subfigure}

    \vspace{1pt}

    % (b) DiT moderate guidance
    \begin{subfigure}[b]{\textwidth}
        \includegraphics[width=\textwidth]{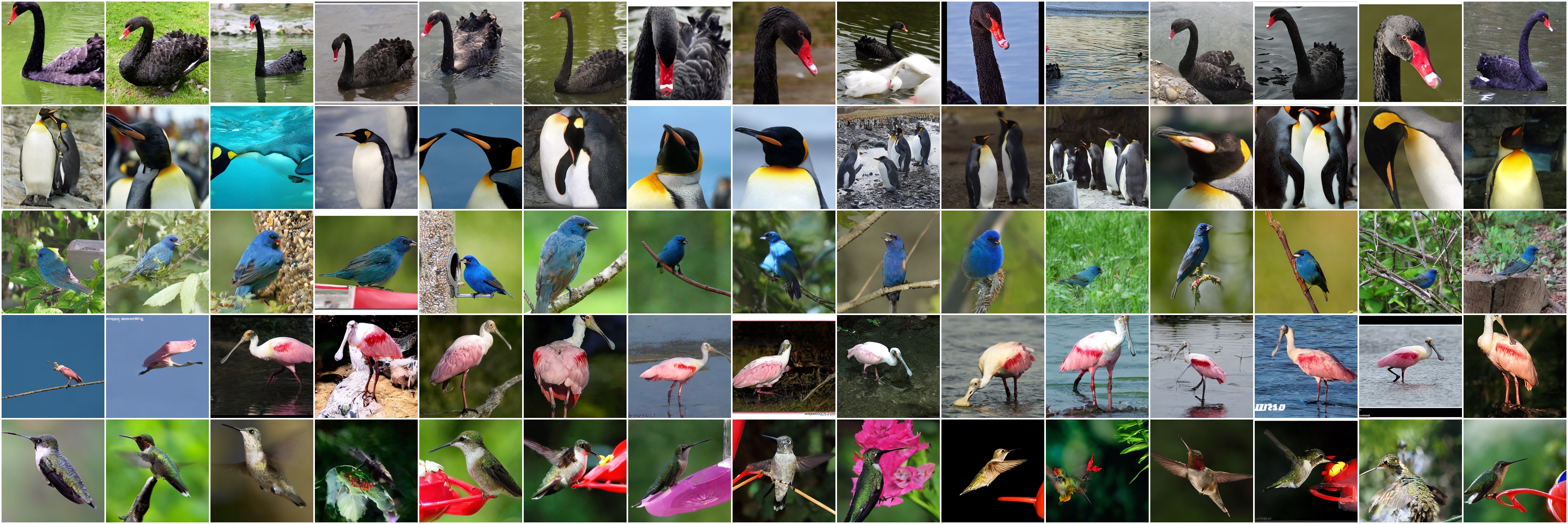}
        \caption{DiT ($\epsilon$, $\rho{=}0.05$): P-FID 6.1, C-FID 39.4, Val.\ 23.9\% (point B in \cref{fig:rho_fid_validity})}
        \label{fig:vis_on_dit}
    \end{subfigure}

    \vspace{1pt}

    % (c) JiT-H strong guidance
    \begin{subfigure}[b]{\textwidth}
        \includegraphics[width=\textwidth]{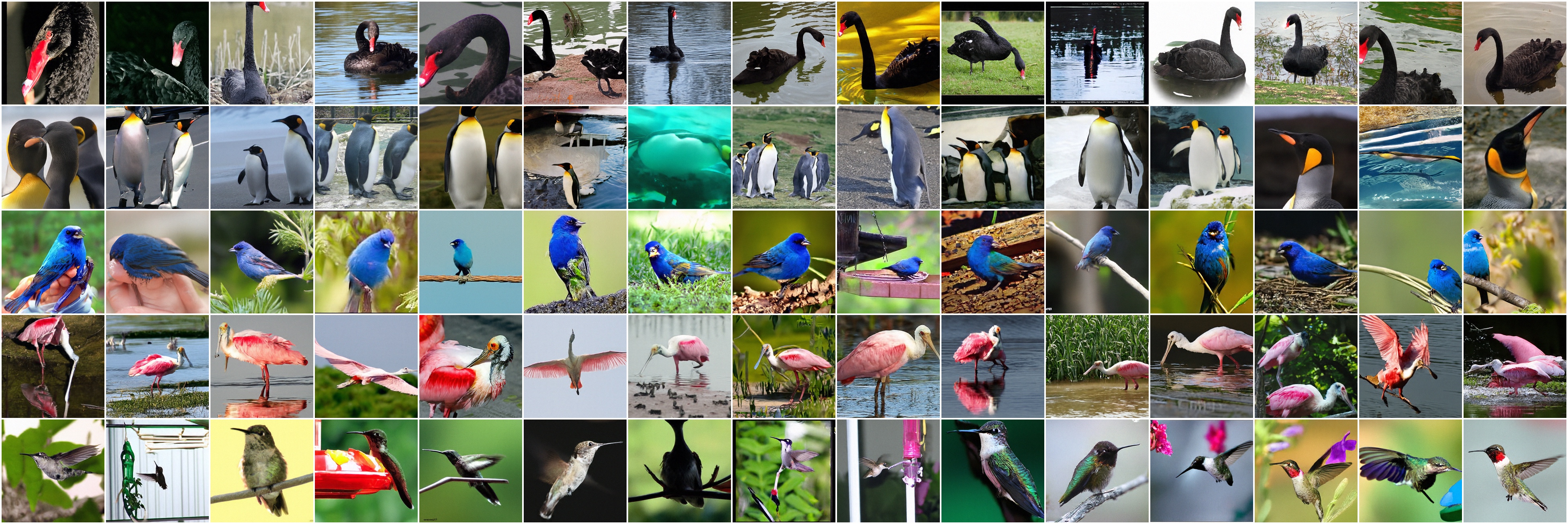}
        \caption{JiT-H ($x$, $\rho{=}8$): P-FID 12.2, C-FID 30.6, Val.\ 26.0\% (point C in \cref{fig:rho_fid_validity})}
        \label{fig:vis_off_jit}
    \end{subfigure}

    \vspace{1pt}

    % (d) DiT strong guidance
    \begin{subfigure}[b]{\textwidth}
        \includegraphics[width=\textwidth]{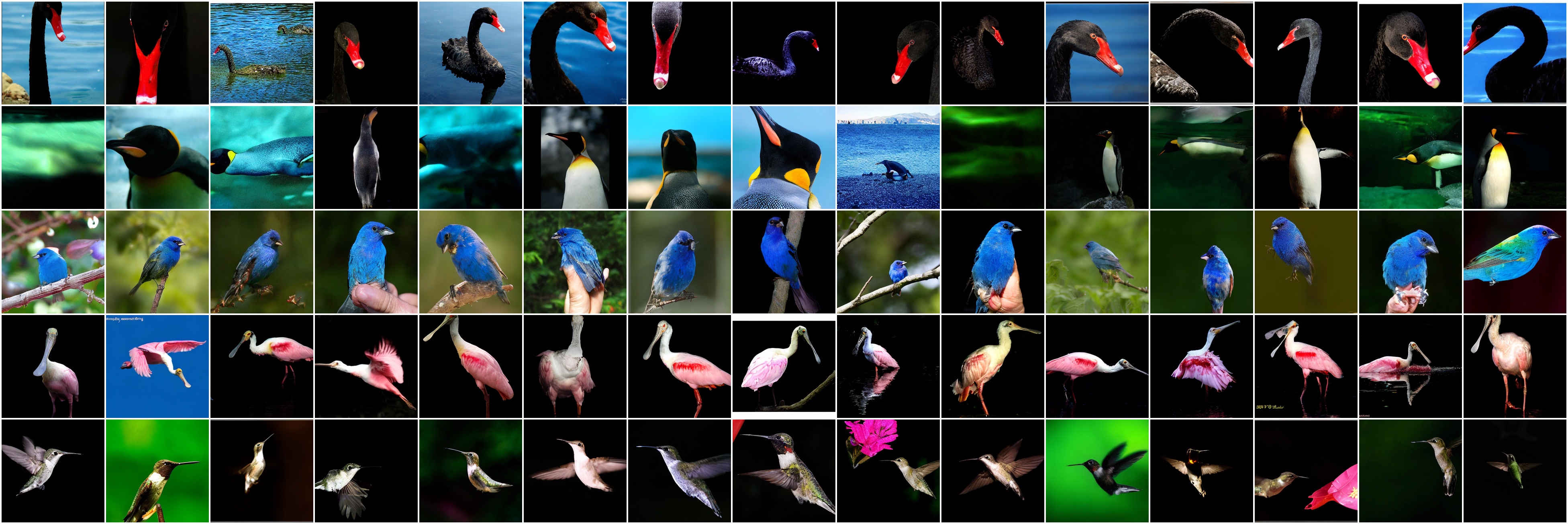}
        \caption{DiT ($\epsilon$, $\rho{=}0.5$): P-FID 14.2, C-FID 36.7, Val.\ 29.6\% (point D in \cref{fig:rho_fid_validity})}
        \label{fig:vis_off_dit}
    \end{subfigure}

    \caption{
        \textbf{Guided generation on five bird species (15 random samples each, no cherry-picking).}
        (a),(b): moderate guidance (matched P-FID${\approx}$6); (c),(d): strong guidance. Rows: Black Swan, Emperor Penguin, Painted Bunting, Roseate Spoonbill, Ruby-throated Hummingbird.
    }
    \label{fig:qualitative}
\end{figure}

\cref{fig:qualitative} presents randomly drawn (not cherry-picked) guided samples from JiT-H ($x$-prediction) and DiT ($\epsilon$-prediction) at two guidance regimes: moderate ($\rho$ chosen for matched P-FID${\approx}$6; (a),(b)) and strong ((c),(d)), across five visually distinct bird species. Corresponding visualizations for SiT and PixelFlow are in \suppref{app:experimental_protocols} (\cref{fig:qualitative_appendix}).

\paragraph{Off-manifold degradation is visible but not catastrophic for $x$-prediction.}
Under strong guidance, JiT-H (\cref{fig:vis_off_jit}) shows visible quality degradation (posterization and reduced fine detail) but maintains diverse poses, varied backgrounds, and recognizable species-specific features. DiT (\cref{fig:vis_off_dit}) achieves 29.6\% Validity (higher than JiT-H's 26.0\%), yet its samples exhibit pronounced mode collapse: many images share similar poses, framing, and uniform dark backgrounds, particularly visible in Black Swan and Painted Bunting.

\paragraph{Mode concentration as a failure signature.}
The samples make concrete the mode collapse quantified in \cref{subsec:bird_results} (\cref{fig:prdc_main}): the low Recall measured there corresponds to the narrow set of poses and backgrounds DiT repeats here. C-FID registers the same effect: DiT's C-FID (36.7) remains worse than JiT-H's (30.6) despite higher Validity. The samples show how it appears: DiT's $\epsilon$-prediction reuses a small set of classifier-friendly patterns rather than covering each species' variation, whereas JiT-H retains diverse poses and compositions.

\paragraph{Practical implication.}
Low Validity with preserved image quality (JiT-H at strong guidance) is preferable to high Validity with degraded diversity (DiT): a user can retry generation for the correct class, but cannot recover from mode collapse or manifold departure. This asymmetry further motivates C-FID over Validity as the primary evaluation metric for training-free guidance.

% =============================================================================
% SECTION 6: CONCLUSION
% =============================================================================
\section{Conclusion}
\label{sec:conclusion}

Whether training-free guidance fails gracefully (missing the target but producing a realistic image) or catastrophically (collapsing into off-manifold artifacts) depends on the prediction target. The mechanism is the recovery formula: $\epsilon$-prediction divides by~$t$, amplifying errors unboundedly at high noise; $v$-prediction incurs bounded amplification; $x$-prediction incurs none. On ImageNet, C-FID supports this hierarchy with a 5.2-point gap at matched Validity (manifold damage invisible to standard evaluation), and PixelFlow's C-FID reversal under strong guidance isolates prediction target as the decisive factor. The hierarchy extends to style transfer, establishing $x$-prediction as the target that keeps guidance failures graceful.

The fine-grained bird benchmark and C-FID protocol we introduce let any ImageNet-scale diffusion model be immediately tested for manifold-aware guidance quality, and can serve as a practical diagnostic for evaluating new models.

\paragraph{Limitations.}
Our comparison involves models differing in architecture and operating space beyond prediction target; no single comparison perfectly isolates it, and the conclusion instead rests on the convergence of five independent control experiments (\suppref{app:confound_discussion}), of which the latent-space DiT vs.\ SiT pair and the pixel-space PixelFlow vs.\ JiT pair are the most controlled within each space. The theoretical analysis assumes Lipschitz energy functions and well-trained models. We evaluate gradient-based TFG methods (DPS, LGD, FreeDoM), not the full TFG parameter space, and all experiments use 256${\times}$256 resolution; scaling behavior at higher resolutions remains to be tested.

\paragraph{Future Work.}
Inference-time scaling methods~\cite{ma2025inferencescaling,kim2025das}, which are search-based and SMC approaches that compound guidance across many candidates, depend critically on $\xhat$ quality; the prediction target hierarchy should govern their sample efficiency. Video and text-to-image generation present higher-dimensional sequential settings where error amplification compounds across frames, and our dimension scaling analysis predicts even larger gaps between prediction targets.

Not all prediction targets keep guided samples on the manifold, but $x$-prediction does. For practitioners building inference-time control pipelines, it is the robust foundation.

% Acknowledgements (camera-ready; remove for any anonymous resubmission)
\subsubsection*{Acknowledgements}
This research was supported by Seoul National University of Science and Technology.

% Flush all pending floats before references/appendix
\clearpage

% =============================================================================
% REFERENCES
%   Main-paper (MAINONLY) build prints them here. The full build defers them to
%   after the appendix (below) so the split-off supplementary carries its own
%   reference list.
% =============================================================================
\bibliography{ref}
\bibliographystyle{splncs04}

% =============================================================================
% APPENDIX (full build only; split out as the standalone supplementary).
%   Excluded from the MAINONLY build, which produces the appendix-free
%   source/PDF for Springer. References from the main body into the appendix are
%   resolved via xr-hyper against the full build's root.aux (see preamble).
% =============================================================================
\ifdefined\MAINONLY\else
\newpage
\appendix

\phantomsection
\label{supp:start}
\section*{Supplementary Material: Table of Contents}
\begin{itemize}[leftmargin=1.5em,itemsep=1pt,parsep=0pt]
    \item \textbf{Appendix A} -- Manifold Hypothesis and Score Theory
    \item \textbf{Appendix B} -- Full Proofs
    \item \textbf{Appendix C} -- Evaluation Metric Justification
    \item \textbf{Appendix D} -- Evaluation Practices in Training-Free Guidance Literature
    \item \textbf{Appendix E} -- Models and the Architecture-Confound Discussion
    \item \textbf{Appendix F} -- DPS Algorithm and Latent-Space Details
    \item \textbf{Appendix G} -- Experimental Protocols and Full Results
    \item \textbf{Appendix H} -- Precision--Recall Analysis
    \item \textbf{Appendix I} -- Reproducibility Statement
\end{itemize}

% =============================================================================
% APPENDIX B: MANIFOLD HYPOTHESIS AND SCORE THEORY
% =============================================================================
\section{Manifold Hypothesis and Score Theory}
\label{app:manifold_theory}

\begin{definition}[Manifold Hypothesis]
\label{def:manifold}
Natural data $x$ lies on a compact smooth submanifold $\manifold \subset \R^D$ with intrinsic dimension $d \ll D$~\cite{fefferman2016manifold,narayanan2010manifold,farghly2025manifoldhypothesis}. Noise $\epsilon \sim \N(0, \mI)$ is distributed across the full ambient space $\R^D$.
\end{definition}

\subsection{Score Decomposition and Manifold Force}
\label{app:score_decomposition}

\begin{remark}[Score Decomposition]
\label{rem:score_decomposition}
For the JiT forward process $\zt = t \cdot x + (1-t) \cdot \epsilon$ with $\epsilon \sim \N(0, \mI)$, the conditional density is $p(\zt \mid x) = \N(tx, (1-t)^2 \mI)$. By Fisher's identity, the marginal score admits the exact decomposition:
\begin{equation}
    \grad_{\zt} \log p_t(\zt) = \frac{1}{(1-t)^2} \Big( t \cdot \E[x \mid \zt] - \zt \Big)
    \label{eq:score_decomposition}
\end{equation}
Equivalently, $\E[x \mid \zt] = \frac{1}{t} \big( \zt + (1-t)^2 \grad_{\zt} \log p_t(\zt) \big)$. This is the JiT analogue of Pidstrigach's manifold-tangential decomposition~\cite{pidstrigach2022manifold}.
\end{remark}

\paragraph{Derivation.}
The conditional density $p(\zt \mid x) = \N(\zt; tx, (1-t)^2 \mI)$ gives $\grad_{\zt} \log p(\zt \mid x) = -\frac{1}{(1-t)^2}(\zt - tx)$. By Fisher's identity:
\begin{equation}
    \grad_{\zt} \log p_t(\zt) = \E\big[\grad_{\zt} \log p(\zt \mid X) \mid \zt\big] = \frac{1}{(1-t)^2} \big(t \cdot \E[X \mid \zt] - \zt\big)
\end{equation}
This recovers Tweedie's formula~\cite{robbins1956empirical,efron2011tweedie,kim2021noise2score} in the flow matching setting.

\begin{remark}[Manifold Force]
\label{rem:manifold_force_full}
The factor $1/(1-t)^2$ in \cref{eq:score_decomposition} is the score's manifold-restoring component~\cite{pidstrigach2022manifold}, with strength $\Theta(\sigma^{-2})$ ($\sigma = 1{-}t$)~\cite{li2025scoresgeometry}. It diverges as $t \to 1$ (strong pull toward $\manifold$ near clean data) and equals ${\approx}1$ near $t = 0$ (weak pull at high noise). Additionally, the score's normal component scales as $\bigO(1/\sigma)$ while the tangential component remains $\bigO(1)$~\cite{liu2025scoresingularity}, compounding $\epsilon$-prediction's estimation difficulty at high noise.
\end{remark}

\subsection{Score Error from Conditional Mean Error}
\label{app:score_error}

\begin{proposition}[Score Error]
\label{prop:score_error}
Let $\xhat(\zt, t) = \E[x \mid \zt]$ and define $s^\star(\zt, t) = \grad_{\zt} \log p_t(\zt)$. For any approximation $\tilde{x}(\zt, t)$, define the induced score estimate $\tilde{s}(\zt, t) = \frac{1}{(1-t)^2}(t \cdot \tilde{x}(\zt, t) - \zt)$. Then:
\begin{equation}
    \norm{\tilde{s}(\zt, t) - s^\star(\zt, t)}_2 = \frac{t}{(1-t)^2} \norm{\tilde{x}(\zt, t) - \xhat(\zt, t)}_2
    \label{eq:score_error_bound}
\end{equation}
\end{proposition}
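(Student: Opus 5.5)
The identity is exact and linear, so the plan is a direct computation built on the exact score decomposition of \cref{rem:score_decomposition}. That remark gives
\[
s^\star(\zt,t) = \frac{1}{(1-t)^2}\bigl(t\,\E[x\mid\zt] - \zt\bigr) = \frac{1}{(1-t)^2}\bigl(t\,\xhat(\zt,t) - \zt\bigr),
\]
which has the same affine form in its first argument as the induced estimate $\tilde{s}$. The first step is to write $s^\star$ in this form. This uses only Fisher's identity and the Gaussian conditional $p(\zt\mid x)=\N(tx,(1-t)^2\mI)$, both already derived in the appendix.

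The second step is to subtract the two expressions at the same $(\zt,t)$. The $-\zt/(1-t)^2$ terms cancel exactly, leaving
\[
\tilde{s}(\zt,t) - s^\star(\zt,t) = \frac{t}{(1-t)^2}\bigl(\tilde{x}(\zt,t) - \xhat(\zt,t)\bigr).
\]
The third step is to take Euclidean norms. Absolute homogeneity of $\norm{\cdot}_2$ pulls out the scalar. For $t\in[0,1)$ that scalar is nonnegative, so its absolute value is $t/(1-t)^2$, which yields \cref{eq:score_error_bound} with equality rather than just an upper bound.

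There is no real analytic obstacle here. The only care points are the domain and integrability conditions that make the statement meaningful. First, $t=1$ must be excluded, or the identity read as a limit, since $(1-t)^{-2}$ blows up there. Second, $\E[x\mid\zt]$ must exist and Fisher's identity must be justified, i.e.\ differentiation under the integral. This holds whenever $x$ has a finite first moment, because the Gaussian kernel is smooth with integrable derivatives; in particular it holds under the compact-manifold assumption of \cref{def:manifold}.

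I would state these conditions explicitly and then note the interpretation. The factor $t/(1-t)^2$ vanishes at $t=0$ and diverges as $t\to1$, so clean-estimate errors are harmless for the score at high noise but strongly amplified near the data. Combining this with \cref{prop:error_amplification} gives the score error for each target; for $\epsilon$-prediction it is $\frac{1}{1-t}\delta_\epsilon$.
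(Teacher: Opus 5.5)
Your proposal is correct and matches the paper's proof: both write $s^\star = \frac{1}{(1-t)^2}(t\,\xhat - \zt)$ via the exact score decomposition, subtract the induced estimate so the $\zt$ terms cancel, and pull out the nonnegative scalar $t/(1-t)^2$. Your explicit handling of $t<1$ and of Fisher's identity is a welcome addition that the paper leaves implicit. One caution, on your closing aside only: the error-amplification proposition defines $\delta_\epsilon = \norm{\epsilon - \epstheta}_2$ against the realized noise and realized $x$, whereas this identity is measured against $\E[x \mid \zt]$, so the formula $\frac{1}{1-t}\delta_\epsilon$ holds only after reinterpreting $\delta_\epsilon$ as $\norm{\E[\epsilon \mid \zt] - \epstheta}_2$.
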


Proof in \suppref{app:proofs}.

\subsection{Dimension Scaling of Prediction Errors}
\label{app:dimension_scaling}

\begin{remark}[Dimension Scaling]
\label{rem:dimension_scaling}
The amplification factors in \cref{prop:error_amplification} are dimension-independent, yet empirically the gap widens with ambient dimension $D$. Under approximately isotropic residual errors, concentration of measure~\cite{vershynin2018high} gives $\delta_\epsilon \sim \sqrt{D}$ ($\epsilon$-prediction resolves all $D$ noise components), $\delta_x \sim \sqrt{d}$ ($x$-prediction maps to $\manifold$ of dimension $d$), confirmed empirically by Li and He~\cite{li2025jit}. For $v$-prediction, $v = x - \epsilon$ mixes a $d$-dimensional manifold component and a $D$-dimensional ambient component, yielding $\delta_v \sim \sqrt{d + D_{\mathrm{eff}}}$ where $d \leq D_{\mathrm{eff}} \leq D$ depends on how many noise dimensions the network resolves. Thus $\delta_x \ll \delta_v \leq \delta_\epsilon$, establishing a strict hierarchy in base prediction error.

For ImageNet ($d \approx 26$--$43$~\cite{pope2021intrinsic}, $D = 196{,}608$), this yields a $68$--$87\times$ gap between $\delta_\epsilon$ and $\delta_x$ in base prediction error before amplification, a distinct quantity from the $43\times$ FID gap of \cref{subsec:pretrained_models} (which measures end-to-end generation quality, not raw prediction error). The $(1{-}t)$ attenuation in $v$-prediction's recovery formula partially compensates for $\delta_v > \delta_x$, but at $t \approx 0$ (where guidance matters most) this attenuation vanishes, leaving the base error hierarchy exposed.
\end{remark}

% =============================================================================
% APPENDIX A: FULL PROOFS
% =============================================================================
\section{Full Proofs}
\label{app:proofs}

\subsection{Proof of \cref{prop:error_amplification}}
\label{app:proof_error_amplification}

\begin{proof}
\textbf{$\epsilon$-prediction.}
Starting from the forward process $\zt = tx + (1-t)\epsilon$, we have:
\begin{align}
    \xhat^{(\epsilon)} &= \frac{\zt - (1-t)\epstheta}{t} = \frac{tx + (1-t)\epsilon - (1-t)\epstheta}{t} = x + \frac{1-t}{t}(\epsilon - \epstheta)
\end{align}
Therefore $\norm{\xhat^{(\epsilon)} - x}_2 = \frac{1-t}{t}\norm{\epsilon - \epstheta}_2 = \frac{1-t}{t}\delta_\epsilon$.

\textbf{$v$-prediction.}
For $v$-prediction, using $\xhat^{(v)} = \zt + (1-t)\vtheta$ and $\epsilon = x - v$:
\begin{align}
    \xhat^{(v)} &= tx + (1-t)(x - v) + (1-t)\vtheta = x + (1-t)(\vtheta - v)
\end{align}
Therefore $\norm{\xhat^{(v)} - x}_2 = (1-t)\delta_v$. The error is attenuated by $(1-t) \leq 1$.

\textbf{$x$-prediction.}
For direct $x$-prediction: $\norm{\xhat^{(x)} - x}_2 = \norm{\xtheta - x}_2 = \delta_x$. No amplification.
\end{proof}

\subsection{Proof of \cref{prop:cumulative_error}}
\label{app:proof_cumulative_error}

\textbf{Full statement (general form).} Under guidance energy $\energy$ with $L_g$-Lipschitz gradient and schedule $\alpha_t \leq \alpha$, guided Euler sampling with $N$ uniform steps from $t_0 > 0$ to $1$ yields cumulative perturbation:
\begin{align}
    \text{$\epsilon$-pred:} \quad C_\epsilon &= \alpha L_g \int_{t_0}^{1} \frac{1-t}{t}\,\delta_\epsilon(t)\, dt \label{eq:cum_eps_general}\\
    \text{$v$-pred:} \quad C_v &= \alpha L_g \int_{t_0}^{1} (1-t)\,\delta_v(t)\, dt \label{eq:cum_v_general}\\
    \text{$x$-pred:} \quad C_x &= \alpha L_g \int_{t_0}^{1} \delta_x(t)\, dt \label{eq:cum_x_general}
\end{align}

\textbf{Constant-error corollary.} When prediction errors are approximately uniform:
\begin{align}
    C_\epsilon &= \alpha L_g \delta_\epsilon \big[(t_0{-}1) - \ln t_0\big] \label{eq:cum_eps}\\
    C_v &= \alpha L_g \delta_v (1-t_0)^2/2 \label{eq:cum_v}\\
    C_x &= \alpha L_g \delta_x (1-t_0) \label{eq:cum_x}
\end{align}
The $\epsilon$-prediction integral diverges as $t_0 \to 0$; $v$-prediction converges quadratically; $x$-prediction converges linearly. When $\delta_v > 2\delta_x/(1{-}t_0)$, $x$-prediction achieves lower cumulative error than $v$-prediction despite both converging.

\begin{proof}
By the Lipschitz assumption, per-step guidance error is $\norm{\grad \energy(\xhat) - \grad \energy(x)}_2 \leq L_g \norm{\xhat - x}_2$. By \cref{prop:error_amplification}, $\norm{\xhat^{(\epsilon)} - x}_2 = \frac{1-t}{t}\delta_\epsilon(t)$, $\norm{\xhat^{(v)} - x}_2 = (1{-}t)\delta_v(t)$, and $\norm{\xhat^{(x)} - x}_2 = \delta_x(t)$. Summing per-step contributions and passing to the continuous limit yields \cref{eq:cum_eps_general,eq:cum_v_general,eq:cum_x_general}. Under constant errors, $\int_{t_0}^{1}\frac{1-t}{t}dt = (t_0 - 1) - \ln t_0$ (diverges as $t_0 \to 0$) and $\int_{t_0}^{1}(1-t)dt = (1-t_0)^2/2$.
\end{proof}

The per-step bound holds for any sampler; Heun and DDPM provide additional error correction, making the Euler integral conservative.

\begin{corollary}[Necessary Condition for Bounded Cumulative Error]
\label{cor:necessary_condition}
For $C_\epsilon$ to remain bounded as $t_0 \to 0$, a necessary condition is $\delta_\epsilon(t) \to 0$ as $t \to 0$. Empirical evidence suggests $\delta_\epsilon(t)$ remains bounded away from zero at high noise~\cite{karras2022edm,hang2023minsnr}; under this condition, the cumulative error diverges. No such constraint applies to $v$- or $x$-prediction.
\end{corollary}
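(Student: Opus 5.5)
The statement to prove is \cref{cor:necessary_condition}. I would argue by contraposition, starting from the general form \cref{eq:cum_eps_general} of $C_\epsilon$ in \cref{prop:cumulative_error}. Throughout I assume that $\delta_\epsilon(t)$ is nonnegative and measurable on $(0,1]$, and that $C_\epsilon$ is defined as the integral in \cref{eq:cum_eps_general}.

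\textbf{Step 1 (contrapositive).} Suppose $\delta_\epsilon(t)\not\to 0$ as $t\to 0$; the strengthened hypothesis in the text is $\delta_\epsilon(t)\ge c>0$ on some interval $(0,\tau]$. Since a lim sup that is merely positive does not control the integral, I would state the necessary condition in its honest form. That form is $\int_0^{\tau}\delta_\epsilon(t)\,dt/t<\infty$, which forces $\liminf_{t\to0}\delta_\epsilon(t)=0$; the statement's ``$\delta_\epsilon(t)\to 0$'' should be read this way. If $\liminf_{t\to 0}\delta_\epsilon(t)=2c>0$, then $\delta_\epsilon\ge c$ on some $(0,\tau]$. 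On that interval the integrand satisfies $\frac{1-t}{t}\delta_\epsilon(t)\ge \frac{c(1-\tau)}{t}$.

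\textbf{Step 2 (divergence).} For $t_0<\tau$ I would split $C_\epsilon$ into the part over $[\tau,1]$ and the part over $[t_0,\tau]$. The first part is a fixed nonnegative quantity, independent of $t_0$. By Step 1 the second part is at least $\alpha L_g c(1-\tau)\ln(\tau/t_0)$, which tends to $\infty$ as $t_0\to0$. This mirrors the $-\ln t_0$ term in the constant-error corollary \cref{eq:cum_eps}. Hence boundedness of $C_\epsilon$ requires $\liminf\delta_\epsilon=0$, and in fact requires integrability of $\delta_\epsilon(t)/t$ near $0$. Under the empirical premise that $\delta_\epsilon$ is bounded away from zero, $C_\epsilon$ diverges.

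\textbf{Step 3 ($v$- and $x$-prediction).} Here the weights $(1-t)$ and $1$ in \cref{eq:cum_v_general,eq:cum_x_general} are bounded on $[0,1]$. If $\delta_v$ and $\delta_x$ are bounded by $M$, then $C_v\le \alpha L_g M/2$ and $C_x\le\alpha L_g M$, uniformly in $t_0$. So no vanishing condition is needed for these two targets.

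The main obstacle is not a calculation. It is that ``$\delta_\epsilon(t)\to0$'' is strictly stronger than what the argument shows to be necessary. I would handle this by proving the liminf/integrability version and remarking that it implies the statement under the paper's reading. A second caveat is that \cref{eq:cum_eps_general} is an upper-bound proxy rather than the true perturbation. The corollary is therefore about the bound remaining finite, and I would flag this explicitly.
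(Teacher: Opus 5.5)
Your argument is correct and is essentially how the paper obtains the corollary: it is read off from the general integral form of $C_\epsilon$ by bounding $\delta_\epsilon$ below by a constant near $t=0$ and comparing with the $-\ln t_0$ divergence of the constant-error case. For $v$- and $x$-prediction, the bounded weights $(1-t)$ and $1$ give bounds that are uniform in $t_0$. Your sharpening is also sound and goes beyond what the paper writes: the integral only forces $\int_0^{\tau}\delta_\epsilon(t)\,dt/t<\infty$, hence $\liminf_{t\to 0}\delta_\epsilon(t)=0$, so the literal condition $\delta_\epsilon(t)\to 0$ additionally needs the limit to exist (thin spikes of height one give a bounded $C_\epsilon$ with $\limsup_{t\to 0}\delta_\epsilon(t)=1$), and your caveat that $C_\epsilon$ is only a Lipschitz upper bound on the true perturbation is one the paper leaves implicit.
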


\subsection{Proof of \cref{thm:gradient_stability}}
\label{app:proof_gradient_stability}

\textbf{Full statement.} For a Lipschitz energy function $\energy$ with constant $L$:
\begin{align}
    \norm{\grad_{\zt} \energy(\xhat^{(\epsilon)})}_2 &\leq L \cdot \frac{1}{t}\left(1 + (1-t)\norm{\jacobian_{\epstheta}}_2\right) \label{eq:eps_grad_bound}\\
    \norm{\grad_{\zt} \energy(\xhat^{(v)})}_2 &\leq L \cdot \left(1 + (1-t)\norm{\jacobian_{\vtheta}}_2\right) \label{eq:v_grad_bound}\\
    \norm{\grad_{\zt} \energy(\xhat^{(x)})}_2 &\leq L \cdot \norm{\jacobian_{\xtheta}}_2 \label{eq:x_grad_bound}
\end{align}

\begin{proof}
We analyze the guidance gradient $\grad_{\zt} \energy(\xhat)$ for each prediction target.

\textbf{$x$-prediction.}
$\xhat^{(x)} = \xtheta(\zt, t)$, so by the chain rule:
\begin{equation}
    \grad_{\zt} \energy(\xhat^{(x)}) = \jacobian_{\xtheta}^\top \grad_{\xhat} \energy(\xhat^{(x)})
\end{equation}
Since $\energy$ is Lipschitz with constant $L$: $\norm{\grad_{\zt} \energy(\xhat^{(x)})}_2 \leq \norm{\jacobian_{\xtheta}}_2 \cdot L$. This bound is independent of timestep $t$.

\textbf{$\epsilon$-prediction.}
$\xhat^{(\epsilon)} = \frac{1}{t}(\zt - (1-t)\epstheta)$, giving Jacobian $\frac{\partial \xhat^{(\epsilon)}}{\partial \zt} = \frac{1}{t}(\mI - (1-t)\jacobian_{\epstheta})$. By the chain rule and triangle inequality:
\begin{align}
    \norm{\grad_{\zt} \energy(\xhat^{(\epsilon)})}_2 &\leq \frac{1}{t}\norm{\mI - (1-t)\jacobian_{\epstheta}}_2 \cdot L \leq \frac{L}{t}\left(1 + (1-t)\norm{\jacobian_{\epstheta}}_2\right)
\end{align}
As $t \to 0$, this scales as $\bigO(1/t) \to \infty$.

\textbf{$v$-prediction.}
$\xhat^{(v)} = \zt + (1-t)\vtheta$, giving Jacobian $\mI + (1-t)\jacobian_{\vtheta}$. Taking norms:
\begin{equation}
    \norm{\grad_{\zt} \energy(\xhat^{(v)})}_2 \leq \left(1 + (1-t)\norm{\jacobian_{\vtheta}}_2\right) \cdot L
\end{equation}
Finite for all $t \in [0, 1]$ and decreasing as $t \to 1$.
\end{proof}

These bounds hold for arbitrary $\norm{\jacobian}_2$; the asymptotic scaling ($\bigO(1/t)$, $\bigO(1)$) and numerical estimates in \cref{thm:critical_rho} further assume $\norm{\jacobian}_2 = \bigO(1)$.

\subsection{Adversarial Gradient Analysis (Extension of \cref{thm:gradient_stability})}
\label{app:adversarial_analysis}

\paragraph{Adversarial Gradient Condition.}
Following Shen~\etal~\cite{shen2024tfgunderstanding}, a guidance gradient is \emph{adversarial} if it opposes the true improvement direction:
\begin{equation}
    \inner{\grad_{\zt} \energy(\xhat)}{\grad_{x^*} \energy(x^*)} < 0
    \label{eq:adversarial_def}
\end{equation}
where $x^*$ is the optimal clean sample. Shen~\etal\ show that this probability depends on the recovery Jacobian:
\begin{equation}
    P(\text{adversarial}) \propto \norm{\frac{\partial \xhat}{\partial \zt}}_2^2 \cdot \Var[\grad \energy]
    \label{eq:adversarial_prob_full}
\end{equation}

\paragraph{Prediction-Target Comparison.}
From the Recovery Jacobian bounds:
\begin{align}
    \epsilon\text{-pred: } &P(\text{adv}) \propto \frac{1}{t^2}(1 + (1-t)\norm{\jacobian_{\epstheta}}_2)^2 \cdot \Var[\grad \energy] \\
    v\text{-pred: } &P(\text{adv}) \propto (1 + (1-t)\norm{\jacobian_{\vtheta}}_2)^2 \cdot \Var[\grad \energy] \\
    x\text{-pred: } &P(\text{adv}) \propto \norm{\jacobian_{\xtheta}}_2^2 \cdot \Var[\grad \energy]
\end{align}
At $t = 0.01$, $\epsilon$-prediction's adversarial probability exceeds $x$-prediction's by ${\sim}10{,}000\times$ (assuming $\norm{\jacobian}_2 = \bigO(1)$ across targets). $v$-prediction's bound is finite but exceeds $x$-prediction's by factor $(1{+}(1{-}t)\norm{\jacobian_{\vtheta}})^2/\norm{\jacobian_{\xtheta}}^2$, which under the same assumption is $\approx 4$ at $t = 0.01$.

\subsection{Critical Guidance Strength}
\label{app:proof_critical_rho}

\begin{theorem}[Critical Guidance Strength]
\label{thm:critical_rho}
Let $\rho$ denote guidance strength, $L$ the Lipschitz constant of $\energy$, and $t_{\min}$ the minimum timestep. The maximum guidance strength before manifold departure scales as:
\begin{align}
    \rho^*_\epsilon &\propto t_{\min} / L, \quad
    \rho^*_v \propto 1 / (L \cdot (1 + \norm{\jacobian_{\vtheta}})), \quad
    \rho^*_x \propto 1 / (L \cdot \norm{\jacobian_{\xtheta}}) \label{eq:rho_star}
\end{align}
yielding $\rho^*_x / \rho^*_\epsilon \approx 1/(t_{\min} \cdot \norm{\jacobian_{\xtheta}}) \approx 20$ and $\rho^*_x / \rho^*_v = (1 + \norm{\jacobian_{\vtheta}})/\norm{\jacobian_{\xtheta}} \approx 2$ with $t_{\min} = 0.05$ and $\norm{\jacobian_{\xtheta}}, \norm{\jacobian_{\vtheta}} = \bigO(1)$.
\end{theorem}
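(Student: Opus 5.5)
The argument is a threshold one: manifold departure is declared once the guidance step $\rho\,\grad_{\zt}\energy(\xhat)$ exceeds what the manifold-restoring part of the score can absorb. Following \cref{rem:manifold_force_full}, I model the weakest point of that restoring force as a fixed, target-independent budget $\kappa$. It is attained at the high-noise end $t=t_{\min}$, where the factor $1/(1-t)^2$ is $\approx 1$. Departure is then avoided exactly when
\begin{equation}
\rho \,\sup_{t\ge t_{\min}} \norm{\grad_{\zt}\energy(\xhat(\zt,t))}_2 \le \kappa .
\end{equation}
This is the one modelling step that is not a derivation. I would state it explicitly as the definition of the critical strength, $\rho^* := \kappa / \sup_t \norm{\grad_{\zt}\energy(\xhat)}_2$, so that the statement's ``$\propto$'' means equality up to the common constant $\kappa$.

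Second, I plug in the bounds of \cref{thm:gradient_stability} and take the supremum over $t\in[t_{\min},1]$ for each target.
\begin{itemize}
\item For $\epsilon$-prediction, bound \cref{eq:eps_grad_bound} is decreasing in $t$, so its supremum sits at $t_{\min}$ and equals $\frac{L}{t_{\min}}\big(1+(1-t_{\min})\norm{\jacobian_{\epstheta}}_2\big)$. Under the standing assumption $\norm{\jacobian}_2=\bigO(1)$ this is $\Theta(L/t_{\min})$, giving $\rho^*_\epsilon\propto t_{\min}/L$.
\item For $v$-prediction, bound \cref{eq:v_grad_bound} also peaks at small $t$, with value at most $L(1+\norm{\jacobian_{\vtheta}}_2)$, giving $\rho^*_v\propto 1/\big(L(1+\norm{\jacobian_{\vtheta}})\big)$.
\item For $x$-prediction, bound \cref{eq:x_grad_bound} does not depend on $t$, giving $\rho^*_x\propto 1/(L\norm{\jacobian_{\xtheta}})$.
\end{itemize}
Since $\kappa$ and $L$ are the same for all three targets, they cancel in the ratios. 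This leaves $\rho^*_x/\rho^*_\epsilon = (1+(1-t_{\min})\norm{\jacobian_{\epstheta}})/(t_{\min}\norm{\jacobian_{\xtheta}})$ and $\rho^*_x/\rho^*_v=(1+\norm{\jacobian_{\vtheta}})/\norm{\jacobian_{\xtheta}}$.

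Finally, the numerics. Setting all Jacobian norms to $1$ and $t_{\min}=0.05$ gives $\rho^*_x/\rho^*_v = 2$. For the $\epsilon$ ratio, the statement's $\approx 20$ corresponds to dropping the $(1+\cdots)$ factor and keeping only $1/t_{\min}=20$. Keeping it would give roughly $39$, so I would note that the claimed figure tracks the leading $1/t_{\min}$ scaling only up to an $\bigO(1)$ factor.

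The main obstacle is that the bounds of \cref{thm:gradient_stability} are upper bounds. A small upper bound on the gradient only shows $\rho^*$ is \emph{at least} the stated value, so that direction is sufficient for stability. It does not show the $\epsilon$ threshold genuinely shrinks like $t_{\min}$. To make the scaling two-sided, I would add a matching lower bound at $t_{\min}$. From $\frac{\partial\xhat^{(\epsilon)}}{\partial\zt} = \frac1t(\mI-(1-t)\jacobian_{\epstheta})$, assuming $\jacobian_{\epstheta}$ does not cancel the identity (its spectral norm is bounded away from $1/(1-t)$), the operator norm is $\Omega(1/t)$. If in addition $\grad\energy$ is not orthogonal to the dominant direction, the gradient norm is $\Omega(L/t_{\min})$ along some direction. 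I would state this non-degeneracy as an explicit assumption rather than prove it.
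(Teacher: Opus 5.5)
Your proposal is correct and follows essentially the paper's own argument: the paper also equates the \cref{thm:gradient_stability} perturbation at the most vulnerable high-noise timestep with a target-independent restoring constant $K$ (using $1/(1-t)^2\approx 1$ there), solves for $\rho^*$, and lets $K$ and $L$ cancel in the ratios. Your caveats are sound and go beyond the paper. Its proof uses the bare $\bigO(\rho L/t)$ for $\epsilon$-prediction while keeping $1+\norm{\jacobian_{\vtheta}}$ for $v$, so a consistent unit-Jacobian treatment gives $\approx 39$ rather than $20$; it also never notes that upper bounds on the gradient only bound $\rho^*$ from below. One thing to watch in your two-sided fix: the non-degeneracy assumption fails for an accurate network, because the ideal predictor $\E[\epsilon\mid\zt]$ gives $\mI-(1-t)\jacobian_{\epstheta}=t\,\partial_{\zt}\E[x\mid\zt]$, so the $1/t$ cancels and an $\Omega(1/t)$ gradient can only come from Jacobian error.
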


The empirical ratio (JiT $\rho{=}8$ vs.\ DiT $\rho{=}0.5$, $16\times$; \cref{sec:results}) is broadly consistent with this prediction (confounds discussed in \suppref{app:confound_discussion}).

\begin{proof}
From \cref{thm:gradient_stability}, guidance perturbation scales as $\bigO(\rho L/t)$ for $\epsilon$-prediction, $\bigO(\rho L (1{+}\norm{\jacobian_{\vtheta}}))$ for $v$-prediction, and $\bigO(\rho L \norm{\jacobian_{\xtheta}})$ for $x$-prediction.

\paragraph{Step 1: Manifold-restoring force.}
The manifold-restoring force scales as $\bigO(1/(1-t)^2)$ (\cref{rem:manifold_force_full}), effectively $\bigO(1)$ at $t_{\min} = 0.05$. At the most vulnerable timestep:
\begin{itemize}
    \item For $\epsilon$-prediction, the bottleneck is $t = t_{\min}$: perturbation $\bigO(\rho L/t_{\min})$ vs.\ restoring force $\approx 1$.
    \item For $v$-prediction, the bottleneck is $t = 0$: perturbation $\bigO(\rho L (1{+}\norm{\jacobian_{\vtheta}}))$, finite but larger than $x$-prediction.
    \item For $x$-prediction, no singular timestep exists: perturbation uniformly $\bigO(\rho L \norm{\jacobian_{\xtheta}})$.
\end{itemize}

\paragraph{Step 2: Critical strength.}
Setting perturbation equal to restoring force $K$ (a prediction-target-independent constant that cancels in the ratios below):
\begin{align}
    \epsilon\text{-pred: } &\rho^*_\epsilon \sim K \cdot t_{\min} / L \\
    v\text{-pred: } &\rho^*_v \sim K / (L \cdot (1 + \norm{\jacobian_{\vtheta}})) \\
    x\text{-pred: } &\rho^*_x \sim K / (L \cdot \norm{\jacobian_{\xtheta}})
\end{align}

\paragraph{Step 3: Ratios.}
\begin{equation}
    \frac{\rho^*_x}{\rho^*_\epsilon} = \frac{1}{t_{\min} \cdot \norm{\jacobian_{\xtheta}}} \approx 20, \qquad
    \frac{\rho^*_x}{\rho^*_v} = \frac{1 + \norm{\jacobian_{\vtheta}}}{\norm{\jacobian_{\xtheta}}} \approx 2
\end{equation}
\end{proof}

\subsection{Proof of \cref{prop:score_error}}
\label{app:proof_score_error}

\begin{proof}
From the score decomposition (\cref{eq:score_decomposition} in \suppref{app:manifold_theory}), the true score is $s^\star(\zt, t) = \frac{1}{(1-t)^2}(t \cdot \xhat(\zt, t) - \zt)$ where $\xhat(\zt, t) = \E[X \mid \zt]$. For any approximation $\tilde{x}(\zt, t)$, the induced score estimate is $\tilde{s}(\zt, t) = \frac{1}{(1-t)^2}(t \cdot \tilde{x}(\zt, t) - \zt)$. Taking the difference:
\begin{equation}
    \norm{\tilde{s}(\zt, t) - s^\star(\zt, t)}_2 = \frac{t}{(1-t)^2} \norm{\tilde{x}(\zt, t) - \xhat(\zt, t)}_2
\end{equation}
by direct substitution.
\end{proof}

% =============================================================================
% APPENDIX E: EVALUATION METRIC JUSTIFICATION
% =============================================================================
\section{Evaluation Metric Justification}
\label{app:metric_justification}

As discussed in \cref{subsec:eval_protocol}, standard FID and classifier accuracy are insufficient for detecting manifold departure under gradient-based guidance. Here we formalize the underlying circular evaluation problem and justify C-FID as a complementary metric.

\paragraph{The circular evaluation problem.}
Training-free guidance computes gradients through an off-the-shelf classifier $\phi$, while evaluation measures accuracy using a (possibly different) classifier $\psi$. When $\phi$ and $\psi$ share similar feature-space biases, artifacts introduced by $\phi$'s gradients may go undetected by $\psi$, creating a \textbf{shared vulnerability}. Shen \etal~\cite{shen2024tfgunderstanding} demonstrate that training-free guidance is more susceptible to such \textbf{adversarial gradients} compared to classifier guidance trained on noisy data, exacerbating this risk.

\paragraph{Child FID as a manifold-aware metric.}
C-FID (\cref{subsec:eval_protocol}) is computed between \emph{all} guided samples (9{,}152 images pooled across 143 species) and the full bird species dataset~\cite{piosenka2023birds} (${\sim}$90{,}000 reference images). This pooled computation ensures sufficient sample size for reliable FID estimation, while the domain-specific reference detects manifold departure invisible to P-FID. Unlike Validity, C-FID captures perceptual quality within the target class: adversarial-like samples that fool the classifier but lack realistic appearance will inflate C-FID even when Validity is high. Combined with guidance-strength Pareto sweeps, this reveals the full quality--guidance trade-off that single-point comparisons obscure.

This connection is not merely analogical: Stutz \etal~\cite{stutz2019disentangling} established that adversarial perturbations push inputs off the data manifold, and Dai \etal~\cite{dai2024advdiff} demonstrated that classifier guidance gradients can be directly repurposed to generate adversarial examples.

\paragraph{Why P-FID misleads.}
FID is a Fréchet distance between Gaussian fits to two feature distributions~\cite{heusel2017fid}; the choice of reference distribution is what it measures distance to. P-FID uses the \emph{full ImageNet marginal} as reference, whereas C-FID uses the \emph{target sub-class} distribution. As gradient-based guidance succeeds, the generated distribution narrows from the broad parent prior toward a single target species, so its distance to the broad ImageNet marginal grows regardless of sample quality: P-FID therefore conflates successful narrowing with genuine manifold drift. Referenced to the target sub-class, C-FID instead falls when narrowing stays on-manifold and rises when it does not, separating the two cases.

The Precision and Recall of \suppref{app:prdc} give a complementary view at a different granularity: a local $k$-nearest-neighbour estimate at the sample level rather than a Gaussian summary at the distribution level. The two agree on the ordering of prediction targets, which indicates the ordering is a property of the data rather than of a particular estimator. Validity is blind to both: it accepts any sample the classifier labels correctly, including off-manifold artifacts.

% =============================================================================
% APPENDIX I: EVALUATION PRACTICES SURVEY
% =============================================================================
\section{Evaluation Practices in Training-Free Guidance Literature}
\label{app:eval_survey}

We surveyed 17 training-free guidance method papers published 2022--2025 to assess evaluation methodology quality. \cref{tab:eval_sample_size,tab:eval_metrics,tab:eval_protocol_rigor} compare evaluation practices across four axes; the final row (\textbf{Ours}) shows our protocol for contrast. All papers surveyed: DPS~\cite{chung2023dps}, FreeDoM~\cite{yu2023freedom}, LGD~\cite{song2023lgd}, UGD~\cite{bansal2023universal}, MPGD~\cite{he2024mpgd}, TFG~\cite{ye2024tfg}, CFG++~\cite{chung2024cfgpp}, SAG~\cite{hong2023sag}, PAG~\cite{ahn2024pag}, SEG~\cite{hong2024seg}, NAG~\cite{chen2025nag}, Flow Guidance~\cite{feng2025flowguidance}, OC-Flow~\cite{wang2025ocflow}, DAS~\cite{kim2025das}, FK Steering~\cite{singhal2025fksteering}, DDRM~\cite{kawar2022ddrm}, $\Pi$GDM~\cite{song2023pigdm}.

\subsection{Cross-Paper Comparison}

\noindent\textbf{Finding} (\cref{tab:eval_sample_size})\textbf{.} Only SAG and PAG use the recommended 50{,}000 samples for FID~\cite{heusel2017fid}; most use 1{,}000--2{,}048, where FID variance is high enough to render cross-method comparisons unreliable.

\begin{table}[tbp]
    \centering
    \caption{\textbf{Sample size and statistical rigor across TFG papers.} NR = not reported.}
    \label{tab:eval_sample_size}
    \scriptsize
    \begin{tabular}{@{}llcc@{}}
        \toprule
        Paper & Venue & FID/KID Samples & FID-50K? \\
        \midrule
        DPS & ICLR'23 & 1{,}000 & No \\
        FreeDoM & ICCV'23 & 1{,}000 & No \\
        LGD & ICML'23 & 10K--50K & Partial \\
        UGD & ICLR'24 & NR & N/A \\
        MPGD & ICLR'24 & 1{,}000 (KID) & No \\
        TFG & NeurIPS'24 & 2{,}048 & No \\
        CFG++ & ICLR'25 & NR & Unknown \\
        SAG & ICCV'23 & 50{,}000 & Yes \\
        PAG & ECCV'24 & 50{,}000 & Yes \\
        SEG & NeurIPS'24 & 30{,}000 & No \\
        NAG & 2025 & 5{,}000 & No \\
        Flow Guid. & ICML'25 & 3{,}000 & No \\
        OC-Flow & ICLR'25 & NR & No \\
        DAS & ICML'25 & NR & N/A \\
        FK Steering & 2025 & NR & N/A \\
        DDRM & NeurIPS'22 & 1{,}000 (KID) & No \\
        $\Pi$GDM & ICLR'23 & $\sim$1{,}000 & No \\
        \midrule
        \textbf{Ours} & & \textbf{9{,}152 / 50K} & \textbf{Yes} \\
        \bottomrule
    \end{tabular}
\end{table}

\noindent\textbf{Finding} (\cref{tab:eval_metrics})\textbf{.} Only 2/17 papers (SAG, PAG) report Precision/Recall, the only manifold-aware metrics any surveyed paper uses; the remaining 15/17 report none. No prior paper reports C-FID or any metric designed to disambiguate guidance success from quality degradation. Critically, while 4/17 papers perform systematic guidance-strength sweeps, \emph{none} combine sweeps with manifold-aware metrics: this combination is required to detect the off-manifold failure mode we identify. We are the first to evaluate guided generation via sweep plots that jointly track Validity and C-FID across guidance strengths.

\begin{table}[tbp]
    \centering
    \caption{\textbf{Metric adequacy across TFG papers.} $\checkmark$ = reported, -- = not reported, $*$ = partial/via follow-up. Sweep = guidance-strength sweep with quality metrics. Circular risk: \textbf{L}ow, \textbf{M}oderate, \textbf{H}igh, \textbf{VH} = very high.}
    \label{tab:eval_metrics}
    \scriptsize
    \begin{tabular}{@{}lcccccccc@{}}
        \toprule
        Paper & FID & IS & LPIPS & PSNR & P/R & C-FID & Sweep & Circ. \\
        \midrule
        DPS & $\checkmark$ & -- & $\checkmark$ & $*$ & -- & -- & $*$ & L \\
        FreeDoM & $\checkmark$ & -- & -- & -- & -- & -- & -- & \textbf{VH} \\
        LGD & $\checkmark$ & -- & -- & -- & -- & -- & $*$ & M \\
        UGD & $*$ & -- & -- & -- & -- & -- & $*$ & \textbf{H} \\
        MPGD & -- & -- & $\checkmark$ & -- & -- & -- & -- & \textbf{H} \\
        TFG & $\checkmark$ & -- & $*$ & -- & -- & -- & $\checkmark$ & \textbf{H} \\
        CFG++ & $\checkmark$ & -- & $\checkmark$ & $\checkmark$ & -- & -- & $*$ & L \\
        SAG & $\checkmark$ & $\checkmark$ & -- & -- & $\checkmark$ & -- & $\checkmark$ & L \\
        PAG & $\checkmark$ & $\checkmark$ & $\checkmark$ & -- & $\checkmark$ & -- & $\checkmark$ & M \\
        SEG & $\checkmark$ & -- & $*$ & -- & -- & -- & $\checkmark$ & M \\
        NAG & $\checkmark$ & -- & -- & -- & -- & -- & $*$ & M \\
        Flow Guid. & $\checkmark$ & -- & $\checkmark$ & $\checkmark$ & -- & -- & $*$ & L \\
        OC-Flow & -- & -- & $\checkmark$ & -- & -- & -- & -- & M \\
        DAS & -- & -- & -- & -- & -- & -- & -- & M \\
        FK Steering & -- & -- & -- & -- & -- & -- & -- & H \\
        DDRM & -- & -- & -- & $\checkmark$ & -- & -- & -- & L \\
        $\Pi$GDM & $\checkmark$ & -- & $*$ & $\checkmark$ & -- & -- & -- & L \\
        \midrule
        \textbf{Ours} & $\checkmark$ & $\checkmark$ & $\checkmark$ & $\checkmark$ & $\checkmark$ & $\checkmark$ & $\checkmark$ & \textbf{L} \\
        \bottomrule
    \end{tabular}
\end{table}

\noindent\textbf{Finding} (\cref{tab:eval_protocol_rigor})\textbf{.} Only 3/17 papers (TFG, PAG, SAG) perform systematic guidance sweeps. Several methods multiply NFE without acknowledgment (LGD: $K{=}20$ MC samples; TFG: up to 16$\times$ via recurrence; DAS: $k$ particles), so without NFE-matched comparisons, reported improvements may reflect additional compute.

\begin{table}[tbp]
    \centering
    \caption{\textbf{Evaluation protocol rigor across TFG papers.} Most papers lack systematic guidance sweeps and NFE-matched comparisons, obscuring quality--guidance tradeoffs.}
    \label{tab:eval_protocol_rigor}
    \scriptsize
    \begin{tabular}{@{}lcccc@{}}
        \toprule
        Paper & Sweep? & Pareto? & NFE Match? & Fair? \\
        \midrule
        DPS & Partial & No & NR & No \\
        FreeDoM & Qual. & No & No & No \\
        LGD & Partial & Implicit & No ($K{\times}$) & No \\
        UGD & Partial & No & No & No \\
        MPGD & No & No & Partial & No \\
        TFG & $\checkmark$ & $\checkmark$ & Partial & Partial \\
        CFG++ & Partial & No & Partial & No \\
        SAG & $\checkmark$ & Partial & Yes & Yes \\
        PAG & $\checkmark$ & Partial & Yes & Mostly \\
        SEG & $\checkmark$ & $\checkmark$ & No & No \\
        NAG & Partial & No & Yes & Yes \\
        Flow Guid. & Moderate & No & NR & No \\
        OC-Flow & Minimal & No & NR & No \\
        DAS & Limited & No & No ($k{\times}$) & No \\
        FK Steering & Limited & No & Partial & Partial \\
        DDRM & None & No & No & No \\
        $\Pi$GDM & None & No & NR & No \\
        \midrule
        \textbf{Ours} & $\checkmark$ & $\checkmark$ & \textbf{Yes} & \textbf{Yes} \\
        \bottomrule
    \end{tabular}
\end{table}

\subsection{Notable Case Studies}

\paragraph{Circular evaluation.}
FreeDoM~\cite{yu2023freedom} uses identical networks (CLIP, BiSeNet) for both guidance energy and evaluation distance. MPGD~\cite{he2024mpgd} uses ArcFace for both guidance and evaluation. At least 4/17 papers have high circular evaluation risk (\cref{tab:eval_metrics}).

\paragraph{Manifold claims without manifold metrics.}
MPGD (``Manifold Preserving Guided Diffusion'') and CFG++ (``Manifold-constrained Classifier Free Guidance'') include manifold-related claims in their titles but report zero manifold-aware metrics: no Precision or Recall.

\paragraph{Good practice.}
PAG~\cite{ahn2024pag} and SAG~\cite{hong2023sag} set a higher standard: 50{,}000 FID samples, Precision/Recall, multi-point guidance sweeps, and (for SAG) human evaluation. These practices remain the exception.

% =============================================================================
% APPENDIX F: ARCHITECTURE CONFOUND DISCUSSION
% =============================================================================
\section{Models and the Architecture-Confound Discussion}
\label{app:confound_discussion}

\subsection{JiT Model Variants}
\label{app:jit_variants}

JiT~\cite{li2025jit} (\emph{Back to Basics: Let Denoising Generative Models Denoise}) is a pixel-space flow matching model that directly predicts the clean image $x$ instead of noise $\epsilon$ or velocity $v$. The architecture uses a Vision Transformer backbone with Bottleneck Patch Embedding, RoPE positional encoding, adaLN-Zero conditioning, and SwiGLU feedforward layers. All variants share the same architecture and differ only in depth and width (\cref{tab:jit_variants_app}). JiT is trained for $x$-prediction; in its high-dimensional pixel-space setting, $\epsilon$- and $v$-prediction perform far worse under identical training (a ${>}40\times$ FID gap, analyzed below).

\paragraph{Scale variants.}
We evaluate three JiT variants (B/L/H) to study the interaction between model capacity and guidance quality. JiT-G/16 (2B parameters) is included in \cref{tab:jit_variants_app} for completeness but excluded from guidance experiments: we use JiT-H/16 for parameter-scale comparability with DiT/SiT-XL, and JiT-G's FID improvement over JiT-H is in any case negligible (1.82 vs.\ 1.86) at $2\times$ the computational cost.

\begin{table}[tbp]
    \caption{
        \textbf{JiT model variants.}
        All variants use $x$-prediction in pixel space with identical architecture (patch size 16). FID and IS are CFG-only baselines on ImageNet 256$\times$256.
    }
    \label{tab:jit_variants_app}
    \centering
    \begin{tabular}{@{}lcccc@{}}
        \toprule
        Model & Params & GFLOPs & FID$\downarrow$ & IS$\uparrow$ \\
        \midrule
        JiT-B/16 & 131M & 25 & 3.66 & 275.1 \\
        JiT-L/16 & 459M & 88 & 2.36 & 298.5 \\
        JiT-H/16 & 953M & 182 & 1.86 & 303.4 \\
        JiT-G/16 & 2B & 383 & 1.82 & 292.6 \\
        \bottomrule
    \end{tabular}
\end{table}

\subsection{Why the Comparison Is Not Confounded}

A natural concern is that our comparison of DiT ($\epsilon$-prediction, latent space), SiT ($v$-prediction, latent space), and JiT ($x$-prediction, pixel space) confounds prediction target with operating space and model size. We argue that this ``confound'' is itself evidence for our thesis.

\subsection{Why $\epsilon$-Prediction Requires Architectural Support}

Among publicly available models with official weights, ADM-G~\cite{dhariwal2021diffusion} (2021) remains the \emph{only} pixel-space $\epsilon$-prediction baseline for ImageNet 256$\times$256. Subsequent work universally adopted one of three strategies rather than continuing pure pixel-space $\epsilon$-prediction:
\begin{enumerate}[leftmargin=*,itemsep=2pt,parsep=0pt]
    \item \textbf{Latent compression}: DiT~\cite{peebles2023dit} and Stable Diffusion~\cite{rombach2022ldm} operate in VAE latent space (4,096 dimensions vs.\ 196,608 pixel dimensions).
    \item \textbf{Cascaded generation}: Imagen~\cite{saharia2022imagen} and DALL-E 2 generate at low resolution first.
    \item \textbf{Alternative targets}: PixelFlow~\cite{chen2025pixelflow} uses $v$-prediction; JiT~\cite{li2025jit} uses $x$-prediction.
\end{enumerate}

This pattern is consistent with the dimension scaling argument (\cref{rem:dimension_scaling}): $\epsilon$-prediction requires resolving all $D$ ambient dimensions of the noise, giving base prediction error $\|\delta_\epsilon\|_2 \sim \sqrt{D}$.

\subsection{Controlled Evidence from JiT}
\label{subsec:controlled_evidence}

The main body (\cref{subsec:pretrained_models}) cites the $43\times$ FID gap between $x$- and $\epsilon$-prediction under identical pixel-space training~\cite{li2025jit}. The full three-way ablation (768-dimensional patches, identical architecture) additionally shows $v$-prediction at FID 96.53, intermediate between $x$ (8.62) and $\epsilon$ (372.38), confirming the hierarchy is fundamental, not architectural. This ablation is also why we do not retrain JiT with $\epsilon$- or $v$-prediction for a fully controlled single-architecture comparison at ImageNet scale: such variants are non-functional in pixel space, where the $>$40$\times$ FID gap makes the generated images unsuitable for any guidance evaluation, a fundamental dimension-dependent failure ($D{=}196{,}608$) rather than a tuning issue (consistent with \cref{rem:dimension_scaling}). In lower-dimensional latent spaces all three targets remain competitive (DiT, SiT), confirming the failure is dimension- rather than architecture-dependent.

\subsection{Capacity-Reversed Comparison: JiT-B vs.\ DiT/SiT}

A direct test of the capacity confound: JiT-B/16 ($x$-prediction, 131M parameters) achieves its best C-FID of 31.3 (at $\rho{=}6$), surpassing both DiT-XL/2 ($\epsilon$, 675M, best C-FID 36.7) and SiT-XL/2 ($v$, 675M, best C-FID 34.4). The 5.2$\times$ parameter disadvantage rules out model capacity as the explanation for $x$-prediction's superior guidance quality. The same pattern holds across tasks: JiT-B achieves LPIPS 0.214 on Gaussian deblur ($\rho{=}16$) versus DiT's best LPIPS 0.377 ($\rho{=}0.25$), despite being 5.2$\times$ smaller.

\subsection{Convergent Evidence and Scope}

No single comparison is perfectly controlled, but five independent lines converge on the same hierarchy:
\begin{enumerate}[leftmargin=*,itemsep=2pt,parsep=0pt]
    \item Crossed-lines ablation (fully controlled, identical architecture; \cref{subsec:crossed_lines_results}).
    \item DiT vs.\ SiT (controlled latent pair, $\epsilon < v$).
    \item JiT-B vs.\ DiT (capacity-reversed, 131M $x$ beats 675M $\epsilon$).
    \item PixelFlow C-FID reversal (same pixel space as JiT, $v < x$; \cref{subsec:bird_results}).
    \item Consistent ordering across four tasks (birds, style, deblur, super-resolution; \suppref{app:experimental_protocols}).
\end{enumerate}
The conjunction is difficult to explain by any single confound.

\paragraph{Scope.} Our analysis applies to gradient-based TFG methods, those computing $\grad_{\zt} \energy(\xhat)$, including DPS~\cite{chung2023dps}, LGD~\cite{song2023lgd}, TFG~\cite{ye2024tfg}, FreeDoM~\cite{yu2023freedom}, and Flow Guidance~\cite{feng2025flowguidance}. Attention-based methods (SAG~\cite{hong2023sag}, PAG~\cite{ahn2024pag}, NAG~\cite{chen2025nag}, SEG~\cite{hong2024seg}) do not compute gradients through $\xhat$ and are outside the scope of our error amplification hierarchy.

% =============================================================================
% APPENDIX G: DPS ALGORITHM AND LATENT-SPACE DETAILS
% =============================================================================
\section{DPS Algorithm and Latent-Space Details}
\label{app:algorithm_latent}

\begin{algorithm}[ht]
   \caption{DPS for $x$-Prediction Flow Matching Models}
   \label{alg:dps_flow}
\begin{algorithmic}
   \STATE {\bfseries Input:} Unconditional Clean-Prediction Flow Model $F_\theta$, Guidance Target $y$, Guidance Strength $\rho$, Steps $N_{step}$.
   \STATE $x_0 \sim \mathcal{N}(0, I)$ \COMMENT{Initial Noise}
   \STATE $\Delta t = 1/N_{step}$
   \FOR{$i = 0$ {\bfseries to} $N_{step}-1$}
       \STATE $t \leftarrow i/N_{step}$
       \STATE $\xhat = F_\theta(x_t, t)$ \COMMENT{Clean data estimate}
       \STATE $g_t = \rho \nabla_{x_t} \log p(y \mid \xhat)$ \COMMENT{Guidance gradient}
       \STATE $v_t = (\xhat - x_t)/(1-t)$
       \STATE $z_{next} = x_t + v_t \cdot \Delta t$
       \STATE $x_{t+\Delta t} = z_{next} + \frac{t+\Delta t}{t} g_t$ \COMMENT{$t$ clamped to $t_\epsilon$ for stability}
   \ENDFOR
   \STATE {\bfseries Output} $x_1$
\end{algorithmic}
\end{algorithm}

We use DPS, the minimal form of gradient-based guidance, for reasons discussed in \suppref{app:experimental_protocols}. The $(t{+}\Delta t)/t$ factor on the guidance term $g_t$ is a flow-ODE integration scaling applied identically to all prediction targets (the $t_\epsilon$ clamp bounds it near $t{=}0$), distinct from the target-specific error amplification of \cref{subsec:error_propagation}: that amplification enters only through how the clean estimate $\xhat$ is recovered, the direct network output for $x$-prediction versus the $1/t$ division for $\epsilon$-prediction.

\paragraph{DPS in latent space.}
For latent diffusion models (DiT, SiT), guidance with pixel-space objectives requires VAE decoding. The guidance gradient becomes:
\begin{equation}
    g_t = \rho \nabla_{z_t} \log p(y \mid D(\hat{z}))
    \label{eq:latent_guidance}
\end{equation}
where $D$ is the frozen VAE decoder and $\hat{z}$ is the latent clean estimate.

This introduces three sources of overhead compared to pixel-space guidance:
\begin{enumerate}[leftmargin=*,itemsep=2pt,parsep=0pt]
    \item \textbf{Computational overhead:} Decoder forward pass required for every guidance step
    \item \textbf{Memory overhead:} Decoder gradients must be stored for backpropagation through $D$
    \item \textbf{Potential reconstruction error:} VAE reconstruction artifacts may affect guidance quality
\end{enumerate}

For pixel-space $x$-prediction (JiT), guidance operates directly:
\begin{equation}
    g_t = \rho \nabla_{x_t} \log p(y \mid \xhat)
    \label{eq:pixel_guidance}
\end{equation}
with no decode step required. This gives pixel-space models an efficiency advantage beyond the prediction target effects analyzed in \cref{subsec:error_propagation}.

% =============================================================================
% APPENDIX C: DETAILED EXPERIMENTAL PROTOCOLS
% =============================================================================
\section{Experimental Protocols and Full Results}
\label{app:experimental_protocols}

\subsection{Factorial Design Rationale}

Our experiments use a \textbf{2$\times$3 factorial design}: three prediction targets ($\epsilon$, $v$, $x$) crossed with two operating spaces (pixel, latent). Model selection rationale and confound analysis are in \cref{subsec:pretrained_models} and \suppref{app:confound_discussion}.

\paragraph{Time Convention Handling.}
Models use different time conventions:
\begin{itemize}[leftmargin=*,itemsep=2pt,parsep=0pt]
    \item \textbf{DDPM} (ADM-G, DiT): $t=0$ is clean data, $t=T$ (999) is noise
    \item \textbf{Flow matching} (SiT, PixelFlow, JiT): $t=0$ is noise, $t=1$ is clean data
\end{itemize}
Our guidance implementation normalizes all models to flow matching convention internally.

\subsection{Bird Species Dataset}
\label{app:bird_dataset}

Our fine-grained bird benchmark (\cref{subsec:bird_benchmark}) uses the 525 Bird Species dataset~\cite{piosenka2023birds}, a CC0 (Public Domain) image classification dataset published on Kaggle and mirrored on HuggingFace.

\paragraph{Curation.}
The dataset contains approximately 90{,}000 images across 525 bird species (at least 130 training images per species, plus 5 test and 5 validation images each). Images were collected from internet searches by species name, deduplicated using automated detection, and cropped so that the bird occupies at least 50\% of pixels. All images are resized to $224 \times 224$ RGB JPEGs. Each species includes a scientific name.

\paragraph{Prior usage.}
Ye~\etal~\cite{ye2024tfg} used this dataset and the same EfficientNetB2 classifier for the first fine-grained label guidance study (TFG, NeurIPS 2024); we extend their single-model evaluation to a systematic cross-model, multi-strength Pareto analysis.

\paragraph{Our usage.}
Of the 525 species, 143 map to 30 ImageNet parent classes (2--20 species per parent, mean 4.8). To our knowledge, this hierarchical species-to-ImageNet mapping, which enables two-level conditioning (CFG for parent class, DPS for species) within a single generation pipeline, has not been established in prior work. We use separate classifiers for guidance (EfficientNetB2\footnote{\url{https://huggingface.co/dennisjooo/Birds-Classifier-EfficientNetB2}}) and evaluation\footnote{\url{https://huggingface.co/chriamue/bird-species-classifier}} to avoid circular evaluation (\suppref{app:metric_justification}). The species-to-ImageNet mapping and evaluation code will be publicly released.

\subsection{DPS Guidance Comparison}

\paragraph{Guidance Setup.}
We apply DPS~\cite{chung2023dps} to each model, sweeping guidance strength $\rho$ while keeping the method minimal: no mean guidance ($\mu{=}0$), no recurrence ($N_{\text{recur}}{=}1$), no Monte Carlo smoothing ($\sigma{=}0$). This isolates the effect of prediction target on gradient quality without confounding by auxiliary hyperparameters.

\paragraph{Task.}
Fine-grained bird classification on our hierarchical benchmark (143 species, 30 parent classes; see \cref{subsec:bird_benchmark}). CFG steers toward the parent class; DPS guides toward a specific species via an external classifier. We use separate classifiers for guidance and evaluation.

\subsection{Full Results: Fine-Grained Bird Classification}
\label{app:finegrained_full_results}

\cref{tab:finegrained_full} reports all numerical results for the guidance-strength sweep in \cref{subsec:bird_results}. Each row corresponds to a single ($\text{model}, \rho$) configuration; each data point represents 9{,}152 generated images (143 species $\times$ 64 samples). Models are grouped by prediction target: $\epsilon$-prediction (DiT), $v$-prediction (SiT, PixelFlow), and $x$-prediction (JiT variants).

\begin{table}[tbp]
    \centering
    \caption{\textbf{Fine-grained bird classification: full guidance-strength sweep.} P-FID: FID against full ImageNet reference. C-FID: FID against bird species dataset. Validity: top-1 accuracy on evaluation classifier. $\rho{=}0$ denotes CFG-only baseline.}
    \label{tab:finegrained_full}
    \footnotesize
    \begin{tabular}{@{}llrrrr@{}}
        \toprule
        Model & & $\rho$ & P-FID$\downarrow$ & C-FID$\downarrow$ & Validity(\%)$\uparrow$ \\
        \midrule
        DiT-XL/2 & & 0 & 5.52 & 42.81 & 14.13 \\
        ($\epsilon$, latent) & & 0.05 & 6.09 & 39.35 & 23.90 \\
        & & 0.10 & 6.70 & 38.11 & 26.69 \\
        & & 0.25 & 9.47 & 37.61 & 28.81 \\
        & & 0.50 & 14.22 & 36.66 & 29.63 \\
        \midrule
        SiT-XL/2 & & 0 & 5.07 & 41.58 & 13.68 \\
        ($v$, latent) & & 0.05 & 5.77 & 38.46 & 20.74 \\
        & & 0.10 & 6.50 & 38.15 & 22.34 \\
        & & 0.25 & 7.12 & 37.41 & 23.73 \\
        & & 0.50 & 7.18 & 35.71 & 25.36 \\
        & & 1.00 & 8.21 & 34.66 & 26.64 \\
        & & 1.50 & 9.92 & 34.38 & 27.35 \\
        & & 2.00 & 12.48 & 34.94 & 27.61 \\
        \midrule
        PixelFlow & & 0 & 6.29 & 44.07 & 13.55 \\
        ($v$, pixel) & & 0.50 & 6.49 & 38.14 & 20.45 \\
        & & 2.00 & 11.69 & 36.22 & 20.13 \\
        & & 3.00 & 16.49 & 38.52 & 19.13 \\
        & & 5.00 & 28.24 & 47.71 & 18.07 \\
        \midrule
        JiT-B/16 & & 0 & 8.84 & 46.43 & 14.94 \\
        ($x$, pixel) & & 0.50 & 7.20 & 40.16 & 22.91 \\
        & & 1.00 & 6.97 & 38.83 & 23.87 \\
        & & 2.00 & 7.27 & 36.42 & 25.54 \\
        & & 6.00 & 12.24 & 31.30 & 26.77 \\
        & & 10.00 & 21.79 & 33.19 & 26.58 \\
        \midrule
        JiT-L/16 & & 0 & 7.03 & 44.01 & 14.29 \\
        ($x$, pixel) & & 0.50 & 6.17 & 38.44 & 23.01 \\
        & & 1.00 & 6.37 & 37.18 & 24.32 \\
        & & 2.00 & 6.90 & 35.84 & 25.21 \\
        & & 6.00 & 10.78 & 31.43 & 27.56 \\
        & & 10.00 & 18.06 & 32.21 & 26.19 \\
        \midrule
        JiT-H/16 & & 0 & 5.48 & 40.98 & 14.01 \\
        ($x$, pixel) & & 0.50 & 5.51 & 36.51 & 23.56 \\
        & & 1.00 & 5.76 & 35.19 & 24.62 \\
        & & 1.50 & 6.22 & 34.45 & 25.09 \\
        & & 2.00 & 6.38 & 33.89 & 25.82 \\
        & & 3.00 & 6.91 & 32.85 & 26.60 \\
        & & 4.00 & 7.76 & 31.63 & 26.57 \\
        & & 6.00 & 9.68 & 31.03 & 26.85 \\
        & & 8.00 & 12.21 & 30.63 & 26.02 \\
        & & 10.00 & 14.78 & 31.40 & 26.57 \\
        \bottomrule
    \end{tabular}
\end{table}

\subsection{Guidance Methods Beyond DPS: LGD and FreeDoM}
\label{app:tfg_family}

\cref{thm:gradient_stability} applies to the family of gradient-based TFG methods that differentiate through the clean-data estimate $\xhat$ (\suppref{app:confound_discussion}). To check that the empirical hierarchy is not specific to DPS, we repeat the bird benchmark with two further members of this family, LGD~\cite{song2023lgd} and FreeDoM~\cite{yu2023freedom}, sweeping guidance strength at the matched NFE${\approx}$100 used for the main benchmark. \cref{fig:tfg_family} shows that under both methods JiT-H ($x$-prediction) retains the lowest C-FID frontier, with $v$- and $\epsilon$-prediction above it, the ordering observed under DPS.

\begin{figure}[t]
    \centering
    \begin{subfigure}[b]{0.49\textwidth}
        \includegraphics[width=\textwidth]{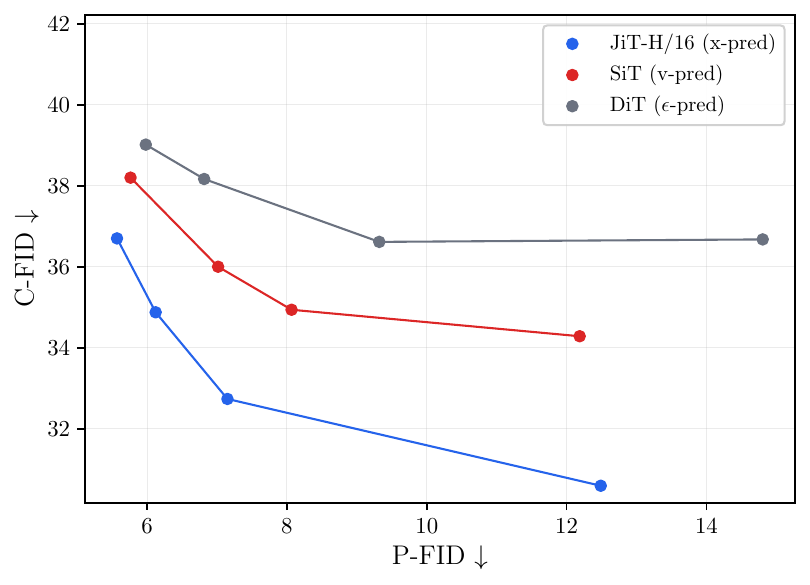}
        \caption{LGD}
        \label{fig:tfg_family_lgd}
    \end{subfigure}
    \hfill
    \begin{subfigure}[b]{0.49\textwidth}
        \includegraphics[width=\textwidth]{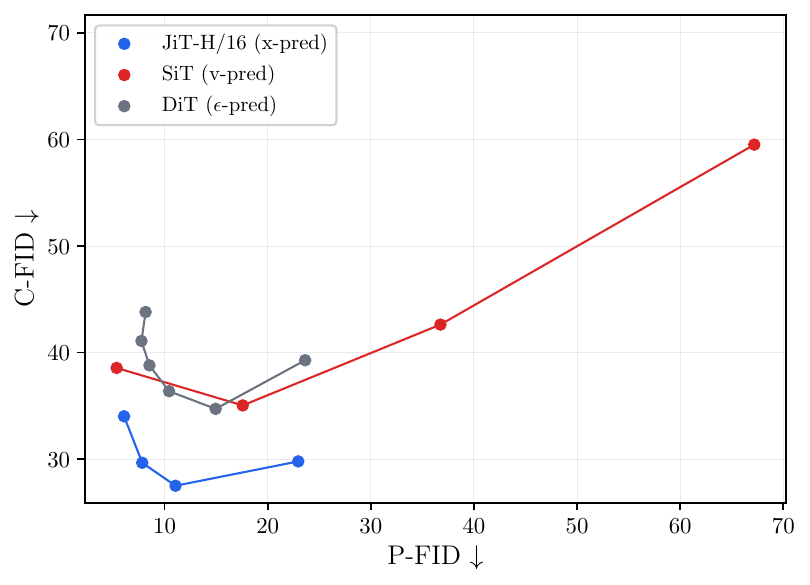}
        \caption{FreeDoM}
        \label{fig:tfg_family_freedom}
    \end{subfigure}
    \caption{
        \textbf{Prediction-target hierarchy under LGD and FreeDoM (fine-grained bird).}
        P-FID vs.\ C-FID as guidance strength $\rho$ increases, for \textbf{(a)}~LGD and \textbf{(b)}~FreeDoM; lower is better on both axes. JiT-H ($x$) attains the lowest C-FID frontier under both, matching the DPS result (\cref{subsec:bird_results}).
    }
    \label{fig:tfg_family}
\end{figure}

\subsection{Second Fine-Grained Domain: Butterfly}
\label{app:butterfly}

To test whether the prediction-target hierarchy generalizes beyond birds, we build a parallel fine-grained benchmark on butterfly species. From a public 100-species butterfly image dataset~\cite{piosenka2023butterfly}, we take a 34-species subset nested under 6 ImageNet butterfly parents (256 images per species), mirroring the parent--child structure of the bird benchmark: CFG steers toward the parent class and DPS toward the species. \cref{fig:butterfly} shows the resulting P-FID vs.\ C-FID sweep. As on birds, JiT-H ($x$-prediction) attains the lowest C-FID frontier, with $v$- and $\epsilon$-prediction above it.

\begin{figure}[t]
    \centering
    \includegraphics[width=0.62\textwidth]{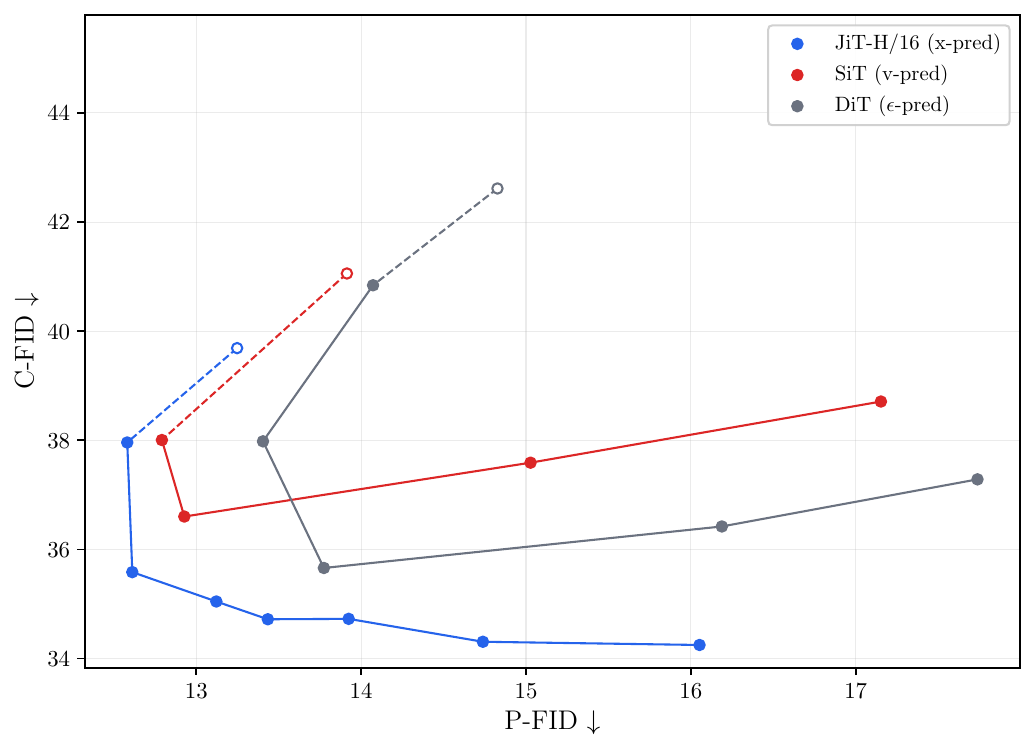}
    \caption{
        \textbf{Prediction-target hierarchy on a second fine-grained domain (butterfly).}
        P-FID vs.\ C-FID over the DPS guidance-strength sweep (34 butterfly species under 6 ImageNet parents); open markers are CFG-only baselines, dashed segments connect to the first guided setting. Lower is better on both axes. JiT-H ($x$) attains the lowest C-FID frontier, matching the bird result.
    }
    \label{fig:butterfly}
\end{figure}

\subsection{Full Results: Style Transfer}
\label{app:style_full_results}

\cref{tab:style_full} reports all numerical results for the style transfer guidance-strength sweep in \cref{subsec:style_results}. Each data point represents 400 generated images (4 WikiArt styles $\times$ 100 ImageNet classes). Models are grouped by prediction target.

\begin{table}[tbp]
    \centering
    \caption{\textbf{Style transfer: full guidance-strength sweep.} Gram Distance: L2 distance between CLIP ViT-B/32 Gram matrices (lower = stronger style match). Content Acc.: DeiT-Small top-1 accuracy on original ImageNet class (content preservation). $\rho{=}0$ denotes CFG-only baseline.}
    \label{tab:style_full}
    \footnotesize
    \begin{tabular}{@{}llrrr@{}}
        \toprule
        Model & & $\rho$ & Gram Dist.$\downarrow$ & Content Acc.(\%)$\uparrow$ \\
        \midrule
        DiT-XL/2 & & 0 & 5.845 & 89.00 \\
        ($\epsilon$, latent) & & 0.10 & 5.769 & 87.00 \\
        & & 0.50 & 5.617 & 85.75 \\
        & & 1.00 & 5.512 & 78.50 \\
        & & 10.00 & 5.296 & 1.50 \\
        \midrule
        SiT-XL/2 & & 0 & 5.901 & 91.00 \\
        ($v$, latent) & & 0.10 & 5.781 & 84.50 \\
        & & 0.50 & 5.590 & 85.25 \\
        & & 1.00 & 5.486 & 85.00 \\
        & & 4.00 & 5.316 & 71.25 \\
        & & 10.00 & 5.250 & 38.25 \\
        \midrule
        PixelFlow & & 0 & 5.904 & 88.00 \\
        ($v$, pixel) & & 0.10 & 5.845 & 85.25 \\
        & & 0.50 & 5.678 & 82.75 \\
        & & 1.00 & 5.555 & 77.00 \\
        & & 2.00 & 5.344 & 65.50 \\
        & & 4.00 & 5.043 & 37.00 \\
        \midrule
        JiT-B/16 & & 0 & 5.914 & 88.00 \\
        ($x$, pixel) & & 0.25 & 5.831 & 92.00 \\
        & & 1.00 & 5.732 & 89.75 \\
        & & 4.00 & 5.576 & 84.75 \\
        & & 10.00 & 5.416 & 76.50 \\
        & & 25.00 & 5.257 & 54.00 \\
        & & 50.00 & 5.181 & 39.00 \\
        \midrule
        JiT-L/16 & & 0 & 5.906 & 94.00 \\
        ($x$, pixel) & & 0.25 & 5.829 & 90.00 \\
        & & 1.00 & 5.726 & 87.75 \\
        & & 4.00 & 5.580 & 83.50 \\
        & & 10.00 & 5.437 & 79.50 \\
        & & 25.00 & 5.287 & 62.00 \\
        & & 50.00 & 5.217 & 47.00 \\
        \midrule
        JiT-H/16 & & 0 & 5.920 & 87.00 \\
        ($x$, pixel) & & 0.25 & 5.810 & 90.50 \\
        & & 1.00 & 5.725 & 92.25 \\
        & & 4.00 & 5.598 & 87.00 \\
        & & 10.00 & 5.484 & 80.00 \\
        & & 25.00 & 5.345 & 63.50 \\
        & & 50.00 & 5.261 & 49.50 \\
        \bottomrule
    \end{tabular}
\end{table}

\subsection{Style Transfer: Qualitative Comparison}
\label{app:style_qualitative}

\cref{fig:style_vis} compares DiT ($\rho{=}1$) and JiT-H ($\rho{=}10$). The failure modes mirror the fine-grained classification pattern (\cref{fig:qualitative}): DiT shows the mode-collapse signature (a narrow set of repeated templates) while JiT-H preserves compositional diversity, consistent with the Precision/Recall analysis (\suppref{app:prdc}).

\begin{figure}[tbp]
    \centering
    \begin{subfigure}[b]{\textwidth}
        \includegraphics[width=\textwidth]{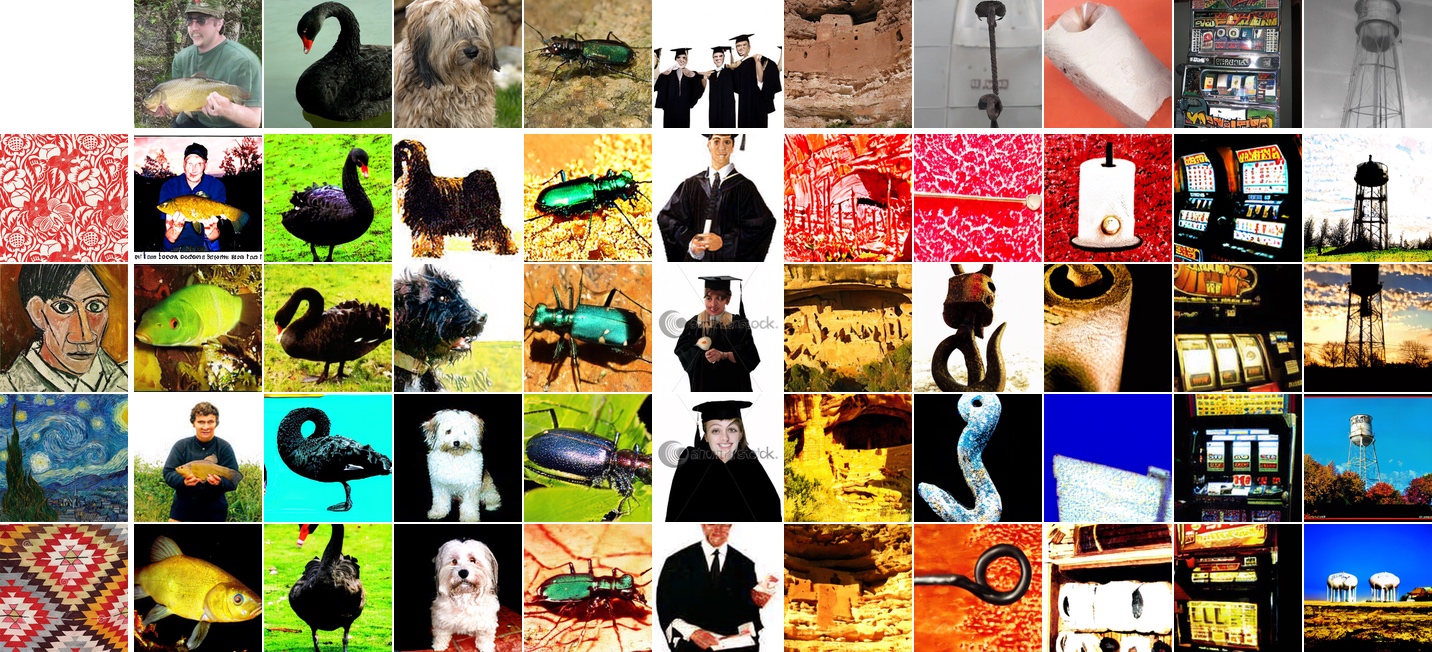}
        \caption{DiT ($\epsilon$, $\rho{=}1$): Gram Dist.\ 5.51, Content Acc.\ 78.5\%}
        \label{fig:style_vis_dit}
    \end{subfigure}

    \vspace{2pt}

    \begin{subfigure}[b]{\textwidth}
        \includegraphics[width=\textwidth]{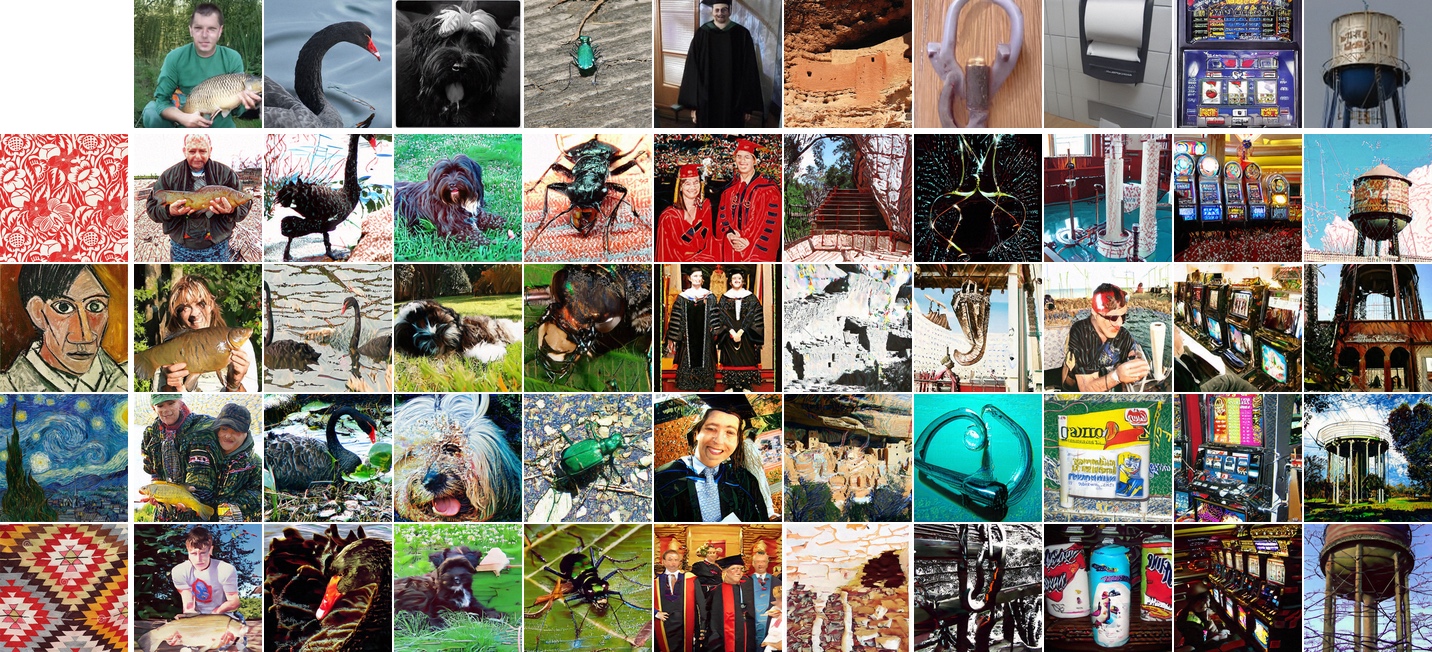}
        \caption{JiT-H ($x$, $\rho{=}10$): Gram Dist.\ 5.48, Content Acc.\ 80.0\%}
        \label{fig:style_vis_jit}
    \end{subfigure}

    \caption{
        \textbf{Style transfer guided generation (10 ImageNet classes $\times$ 4 WikiArt styles).}
        Top row of each panel: CFG-only baseline (no style guidance). Rows 2--5: guided toward each of the four target styles. DiT shows mode concentration and uniform backgrounds; JiT-H maintains more diverse compositions.
    }
    \label{fig:style_vis}
\end{figure}

\subsection{Inverse Problems: Gaussian Deblur and Super-Resolution}
\label{app:inverse_problems}

We additionally evaluate DPS guidance on two inverse problems: Gaussian deblurring (kernel size 61, $\sigma{=}3.0$) and 4$\times$ bicubic super-resolution, following the protocol of \cite{chung2023dps}. For each task, 1{,}000 ImageNet validation images are generated and metrics (LPIPS~\cite{zhang2018lpips}, PSNR, SSIM) are computed on a 100-image subset; DPS guides the denoising process to reconstruct images consistent with the degraded observation.

\paragraph{Baseline performance.}
The identity baseline (returning the degraded input) gives PSNR 21.78 / LPIPS 0.5364 for deblur, and the bicubic baseline gives PSNR 23.66 / LPIPS 0.3929 for super-resolution. No model beats these PSNR baselines, so we compare prediction targets on perceptual quality (LPIPS).

\paragraph{Prediction target comparison.}
Despite the PSNR limitation, JiT ($x$-prediction) achieves the best LPIPS across both tasks: 0.2140 for deblur ($\rho{=}16$) and 0.1886 for super-resolution ($\rho{=}24$). DiT ($\epsilon$-prediction) degrades sharply at strong guidance: LPIPS \emph{increases} beyond $\rho{=}0.25$ for deblur and $\rho{=}1$ for super-resolution, consistent with the error amplification hierarchy. SiT ($v$-prediction) and PixelFlow achieve competitive LPIPS at moderate $\rho$ but plateau or degrade at stronger guidance, while JiT continues improving across a wider $\rho$ range. Full sweep results are given in \cref{tab:deblur_full} (deblur) and \cref{tab:sr_full} (super-resolution).

\begin{table}[tbp]
    \centering
    \caption{\textbf{Gaussian deblur (DPS): full guidance-strength sweep.} Identity baseline: LPIPS 0.5364, PSNR 21.78, SSIM 0.4742. $\rho$ values are model-specific.}
    \label{tab:deblur_full}
    \footnotesize
    \begin{tabular}{@{}llrrrr@{}}
        \toprule
        Model & & $\rho$ & LPIPS$\downarrow$ & PSNR$\uparrow$ & SSIM$\uparrow$ \\
        \midrule
        DiT-XL/2 & & 0.05 & 0.4398 & 15.60 & 0.3086 \\
        ($\epsilon$, latent) & & 0.10 & 0.4062 & 16.92 & 0.3351 \\
        & & 0.25 & 0.3771 & 18.73 & 0.4009 \\
        & & 0.50 & 0.4757 & 17.87 & 0.3686 \\
        & & 1.00 & 0.6639 & 11.87 & 0.2122 \\
        & & 2.00 & 0.6939 & 11.43 & 0.1329 \\
        & & 4.00 & 0.6603 & 11.99 & 0.1359 \\
        \midrule
        SiT-XL/2 & & 0.25 & 0.3995 & 18.28 & 0.3694 \\
        ($v$, latent) & & 0.50 & 0.3403 & 19.66 & 0.4381 \\
        & & 1.00 & 0.3490 & 20.74 & 0.4951 \\
        & & 2.00 & 0.4335 & 20.58 & 0.5310 \\
        & & 4.00 & 0.4797 & 19.31 & 0.5203 \\
        \midrule
        PixelFlow & & 0.25 & 0.4353 & 13.91 & 0.3119 \\
        ($v$, pixel) & & 0.50 & 0.3699 & 15.53 & 0.3604 \\
        & & 1.00 & 0.2977 & 17.40 & 0.4083 \\
        & & 2.00 & 0.2526 & 18.51 & 0.4319 \\
        & & 4.00 & 0.2561 & 17.57 & 0.4052 \\
        \midrule
        JiT-B/16 & & 1.00 & 0.5682 & 9.72 & 0.2006 \\
        ($x$, pixel) & & 2.00 & 0.4155 & 16.69 & 0.3531 \\
        & & 4.00 & 0.3038 & 20.38 & 0.4855 \\
        & & 8.00 & 0.2350 & 21.56 & 0.5633 \\
        & & 16.00 & 0.2140 & 21.28 & 0.5729 \\
        \midrule
        JiT-L/16 & & 1.00 & 0.5724 & 9.94 & 0.2083 \\
        ($x$, pixel) & & 2.00 & 0.4058 & 17.19 & 0.3622 \\
        & & 4.00 & 0.2934 & 20.26 & 0.4913 \\
        & & 8.00 & 0.2243 & 21.51 & 0.5567 \\
        & & 16.00 & 0.2143 & 21.12 & 0.5655 \\
        \midrule
        JiT-H/16 & & 1.00 & 0.5736 & 9.88 & 0.2053 \\
        ($x$, pixel) & & 2.00 & 0.4130 & 16.43 & 0.3376 \\
        & & 4.00 & 0.2960 & 20.06 & 0.4702 \\
        & & 8.00 & 0.2296 & 20.91 & 0.5256 \\
        & & 16.00 & 0.2278 & 20.10 & 0.5251 \\
        \bottomrule
    \end{tabular}
\end{table}

\begin{table}[tbp]
    \centering
    \caption{\textbf{4$\times$ super-resolution (DPS): full guidance-strength sweep.} Bicubic baseline: LPIPS 0.3929, PSNR 23.66, SSIM 0.6465. $\rho$ values are model-specific.}
    \label{tab:sr_full}
    \footnotesize
    \begin{tabular}{@{}llrrrr@{}}
        \toprule
        Model & & $\rho$ & LPIPS$\downarrow$ & PSNR$\uparrow$ & SSIM$\uparrow$ \\
        \midrule
        DiT-XL/2 & & 0.01 & 0.5725 & 10.33 & 0.1993 \\
        ($\epsilon$, latent) & & 0.05 & 0.5145 & 12.81 & 0.2386 \\
        & & 1.00 & 0.3683 & 19.11 & 0.4216 \\
        & & 2.00 & 0.4172 & 18.54 & 0.4385 \\
        & & 4.00 & 0.6259 & 12.79 & 0.2716 \\
        & & 8.00 & 0.6479 & 13.08 & 0.2218 \\
        & & 16.00 & 0.5986 & 14.47 & 0.2434 \\
        \midrule
        SiT-XL/2 & & 1.00 & 0.3994 & 18.99 & 0.4052 \\
        ($v$, latent) & & 2.00 & 0.3342 & 20.53 & 0.4878 \\
        & & 4.00 & 0.2985 & 21.72 & 0.5616 \\
        & & 8.00 & 0.3543 & 21.43 & 0.5915 \\
        & & 16.00 & 0.4168 & 19.87 & 0.5807 \\
        \midrule
        PixelFlow & & 1.00 & 0.4684 & 13.87 & 0.3324 \\
        ($v$, pixel) & & 2.00 & 0.3992 & 15.53 & 0.3968 \\
        & & 4.00 & 0.2907 & 18.13 & 0.4896 \\
        & & 8.00 & 0.2092 & 20.14 & 0.5368 \\
        & & 16.00 & 0.2037 & 19.76 & 0.4992 \\
        \midrule
        JiT-B/16 & & 4.00 & 0.5683 & 9.65 & 0.2235 \\
        ($x$, pixel) & & 8.00 & 0.4052 & 17.28 & 0.4197 \\
        & & 16.00 & 0.2327 & 22.78 & 0.6183 \\
        & & 24.00 & 0.1948 & 23.56 & 0.6540 \\
        & & 32.00 & 0.2003 & 23.56 & 0.6477 \\
        \midrule
        JiT-L/16 & & 4.00 & 0.5705 & 9.99 & 0.2210 \\
        ($x$, pixel) & & 8.00 & 0.3947 & 17.80 & 0.4307 \\
        & & 16.00 & 0.2288 & 22.86 & 0.6203 \\
        & & 24.00 & 0.1895 & 23.49 & 0.6525 \\
        & & 32.00 & 0.2003 & 23.44 & 0.6427 \\
        \midrule
        JiT-H/16 & & 4.00 & 0.5718 & 9.93 & 0.2171 \\
        ($x$, pixel) & & 8.00 & 0.3978 & 17.08 & 0.4100 \\
        & & 16.00 & 0.2314 & 22.40 & 0.6057 \\
        & & 24.00 & 0.1886 & 23.32 & 0.6399 \\
        & & 32.00 & 0.1990 & 22.98 & 0.6216 \\
        \bottomrule
    \end{tabular}
\end{table}

\subsection{Toy Experiments}
\label{app:crossed_lines_details}

\paragraph{Crossed-lines setup.}
We train identical residual MLP models (256 hidden units, 5 ResBlocks with LayerNorm and sinusoidal time conditioning) on a 2D crossed-lines dataset. The ground truth distribution $p(x)$ consists of two 1D manifolds in $\R^2$: lines $b{=}a$ (class~0) and $b{=}{-}a$ (class~1). Points are sampled uniformly along each line ($t \sim \mathrm{Uniform}[-2, 2]$) with additive Gaussian noise $\sigma{=}0.1$ perpendicular to the line, totaling 12{,}000 points (6{,}000 per class). Each 2D point $x_{2\mathrm{D}}$ is embedded as $x_D = x_{2\mathrm{D}} \, P$, where $P \in \R^{2 \times D}$ is a fixed column-orthogonal matrix obtained by QR decomposition of a random Gaussian matrix (seed 42). For each $D \in \{2, 8, 32, 128, 512\}$, we train three flow matching models ($\epsilon$, $v$, $x$ prediction targets) for 500 epochs with learning rate $10^{-3}$ and batch size 256 (seed 42). A separate 3-layer MLP classifier ($128 \times 128 \times 128$, 100 epochs) provides the DPS gradient signal.

\paragraph{Task.}
DPS guidance toward Class~1 (target class), starting from noise. Euler ODE sampling with 100 steps, $t \in [0, 1]$. The guidance formula adds $s \cdot \nabla_{z_t} \log p(y{=}1 \mid \xhat(z_t))$ to the velocity prediction at each step.

\paragraph{Metrics.}
\textbf{On-manifold rate}: each generated sample $\xhat_D \in \R^D$ is first back-projected to 2D via $\xhat_{2\mathrm{D}} = \xhat_D P^\top$. The perpendicular distance to the target line is $\abs{a - b}/\sqrt{2}$ for class~0 ($b{=}a$) or $\abs{a + b}/\sqrt{2}$ for class~1 ($b{=}{-}a$), where $(a, b)$ are the back-projected coordinates. The threshold $\delta$ is calibrated as the 95th percentile of ground truth perpendicular distances. On-manifold rate is the fraction of 10{,}000 generated samples with distance ${<}\,\delta$.
Additional metrics: \textbf{target MMD} (Gaussian kernel MMD between generated and target class samples, median heuristic bandwidth), \textbf{KL divergence} (dual KDE estimate), \textbf{class accuracy} (classifier prediction rate for target class).

\paragraph{Full crossed-lines results ($s{=}10$).}
\cref{tab:crossed_lines_full} reports the complete metrics (on-manifold rate, target MMD, class accuracy) across all ambient dimensions, and \cref{fig:crossed_lines_grid} visualizes the generated distributions projected back to 2D.

\begin{table}[tbp]
    \centering
    \caption{\textbf{Crossed-lines full metrics ($s{=}10$, 100 steps, 10{,}000 samples).}}
    \label{tab:crossed_lines_full}
    \footnotesize
    \begin{tabular}{@{}llccccc@{}}
        \toprule
        Metric & Target & $D{=}2$ & $D{=}8$ & $D{=}32$ & $D{=}128$ & $D{=}512$ \\
        \midrule
        \multirow{3}{*}{On-manifold (\%)}
        & $\epsilon$ & 65.8 & 76.7 & 58.5 & 9.1 & 0.5 \\
        & $v$ & 96.2 & 97.5 & 91.9 & 43.7 & 21.5 \\
        & $x$ & \textbf{100} & \textbf{99.9} & \textbf{100} & \textbf{72.4} & \textbf{93.3} \\
        \midrule
        \multirow{3}{*}{Target MMD ($\downarrow$)}
        & $\epsilon$ & .068 & .092 & .062 & .189 & .327 \\
        & $v$ & .051 & .012 & .018 & .009 & .026 \\
        & $x$ & \textbf{.035} & \textbf{.054} & \textbf{.016} & \textbf{.011} & \textbf{.005} \\
        \midrule
        \multirow{3}{*}{Class Acc.\ (\%)}
        & $\epsilon$ & 82.8 & 94.6 & 81.1 & 49.1 & 56.6 \\
        & $v$ & \textbf{100} & \textbf{100} & \textbf{100} & 99.5 & 87.3 \\
        & $x$ & \textbf{100} & \textbf{100} & \textbf{100} & \textbf{100} & \textbf{100} \\
        \bottomrule
    \end{tabular}
\end{table}

\begin{figure}[tbp]
    \centering
    \includegraphics[width=\textwidth]{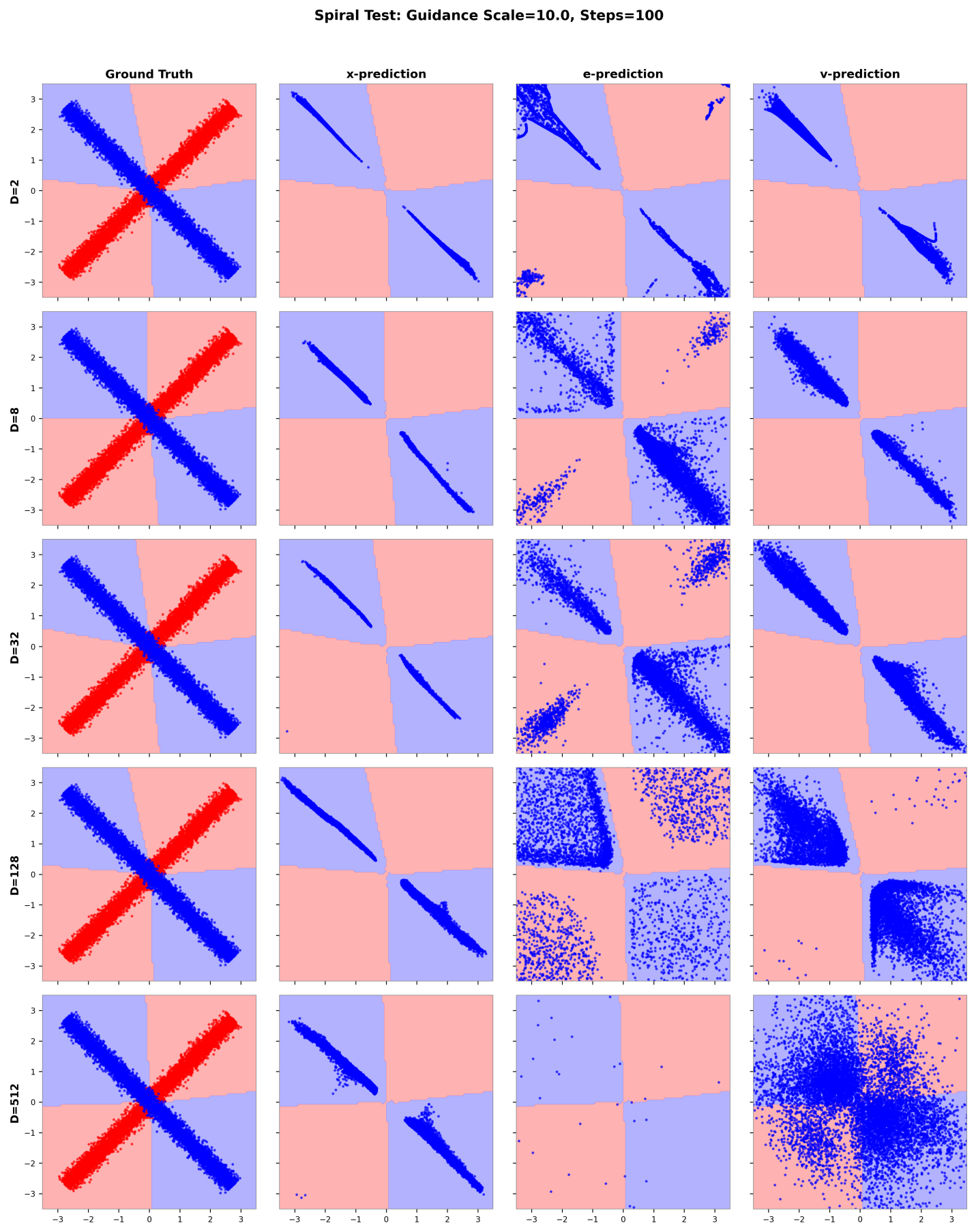}
    \caption{
        \textbf{Crossed-lines guided generation ($s{=}10$, 100 steps).}
        Rows: ambient dimension $D \in \{2, 8, 32, 128, 512\}$.
        Columns: ground truth, $x$-prediction, $\epsilon$-prediction, $v$-prediction.
        Background shading shows classifier decision boundaries.
        $x$-prediction preserves the line manifold across all dimensions; $\epsilon$-prediction collapses to scattered noise at high $D$; $v$-prediction shows intermediate degradation.
    }
    \label{fig:crossed_lines_grid}
\end{figure}

\paragraph{Half-arcs extension.}
\label{app:half_arcs_details}
To verify that the hierarchy generalizes beyond straight-line manifolds, we repeat the identical protocol on a \textbf{half-arcs} dataset: two semicircular arcs (upper and lower halves of a circle, 2 classes) with Gaussian noise $\sigma{=}0.1$, totaling 12{,}000 points. On-manifold rate is measured as the fraction of generated samples whose radial deviation from the arc is within the 95th percentile of ground truth deviations. \cref{tab:half_arcs_onmanifold} reports on-manifold rates across dimensions, \cref{tab:half_arcs_full} gives full metrics, and \cref{fig:half_arcs_grid} visualizes the generated distributions. The results confirm that $x \gg v \gg \epsilon$ is not an artifact of straight-line geometry: $x$-prediction maintains $>$85\% on-manifold rate even at $D{=}512$, while $\epsilon$-prediction collapses to $<$1\% and $v$-prediction degrades to 11.2\% (cf.\ 21.5\% on crossed-lines).

\begin{table}[tbp]
    \centering
    \caption{\textbf{Half-arcs: on-manifold rate (\%) under guidance ($s{=}10$).} Same hierarchy as crossed-lines: $x \gg v \gg \epsilon$.}
    \label{tab:half_arcs_onmanifold}
    \footnotesize
    \begin{tabular}{@{}lccccc@{}}
        \toprule
        Target & $D{=}2$ & $D{=}8$ & $D{=}32$ & $D{=}128$ & $D{=}512$ \\
        \midrule
        $\epsilon$-pred & 21.7 & 30.9 & 32.4 & 3.4 & 0.0 \\
        $v$-pred & 79.3 & 81.7 & 61.4 & 36.3 & 11.2 \\
        $x$-pred & \textbf{100} & \textbf{98.1} & \textbf{96.7} & \textbf{95.2} & \textbf{85.8} \\
        \bottomrule
    \end{tabular}
\end{table}

\begin{table}[tbp]
    \centering
    \caption{\textbf{Half-arcs full metrics ($s{=}10$, 100 steps, 10{,}000 samples).}}
    \label{tab:half_arcs_full}
    \footnotesize
    \begin{tabular}{@{}llccccc@{}}
        \toprule
        Metric & Target & $D{=}2$ & $D{=}8$ & $D{=}32$ & $D{=}128$ & $D{=}512$ \\
        \midrule
        \multirow{3}{*}{Target MMD ($\downarrow$)}
        & $\epsilon$ & .118 & .148 & .093 & .272 & .331 \\
        & $v$ & .028 & .038 & .016 & .020 & .133 \\
        & $x$ & \textbf{.044} & \textbf{.073} & \textbf{.018} & \textbf{.024} & \textbf{.009} \\
        \midrule
        \multirow{3}{*}{Class Acc.\ (\%)}
        & $\epsilon$ & 62.3 & 64.1 & 77.0 & 47.9 & 51.5 \\
        & $v$ & 88.7 & 95.2 & 88.9 & 89.9 & 65.0 \\
        & $x$ & \textbf{100} & \textbf{98.1} & \textbf{98.3} & \textbf{95.2} & \textbf{87.5} \\
        \bottomrule
    \end{tabular}
\end{table}

\begin{figure}[tbp]
    \centering
    \includegraphics[width=\textwidth]{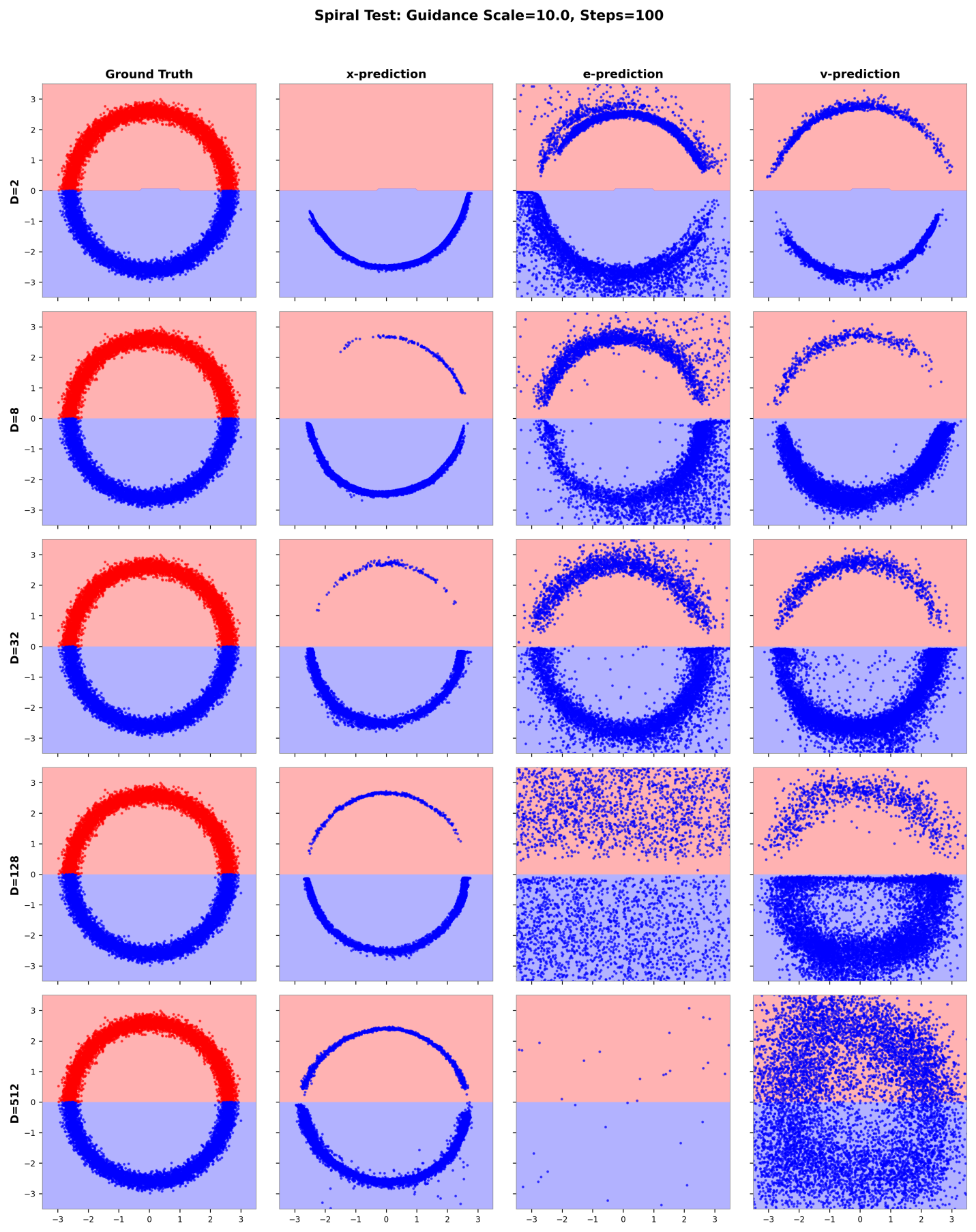}
    \caption{
        \textbf{Half-arcs guided generation ($s{=}10$, 100 steps).}
        Same layout as \cref{fig:crossed_lines_grid}. The curved manifold produces the same hierarchy: $x$-prediction preserves the arc structure across all $D$, while $\epsilon$-prediction collapses.
    }
    \label{fig:half_arcs_grid}
\end{figure}

\subsection{Choice of Guidance Method: DPS}
\label{app:why_dps}

Our goal is to characterize the fundamental relationship between prediction targets and gradient-based guidance quality. This is a scientific question (which prediction target produces the most faithful $\xhat$ estimates and, consequently, the most reliable guidance gradients?), not an engineering question of which hyperparameter configuration yields the best practical results. DPS~\cite{chung2023dps} is the natural choice precisely because it is the most basic form of gradient-based guidance: a single gradient step $\rho \nabla_{z_t} \log p(y \mid \xhat)$ per denoising step, with no auxiliary mechanisms.

\paragraph{Minimal confounding.}
DPS isolates the prediction target effect through a single free parameter $\rho$, the guidance strength. The TFG framework~\cite{ye2024tfg} generalizes DPS with mean guidance ($\mu$), Monte Carlo smoothing ($\sigma$), iteration ($N_{\text{iter}}$), and recurrence ($N_{\text{recur}}$), but each additional mechanism introduces its own interaction with the underlying $\xhat$ estimate. Under full TFG, it becomes unclear whether performance differences arise from the prediction target itself or from how well each target responds to a particular combination of auxiliary corrections. By stripping guidance to its essential form, DPS ensures that observed differences are attributable to the prediction target's gradient quality.

\paragraph{Consistency with theoretical analysis.}
Our theoretical framework (\cref{prop:error_amplification,thm:gradient_stability,prop:cumulative_error}) analyzes the error in $\xhat$ and its propagation through a single guidance gradient per step, exactly the DPS setting. Using full TFG would require extending the analysis to account for iterated gradient corrections, Monte Carlo averaging, and recurrence, which would obscure rather than illuminate the core prediction target effect.

\subsection{Sampling Procedure Details}

\cref{tab:inference_hyperparams} summarizes the inference configuration for each model.

\begin{table}[tbp]
    \caption{\textbf{Inference hyperparameters.} Paper defaults use each model's published lowest-FID configuration. All experiments use the actual configuration (bottom).}
    \label{tab:inference_hyperparams}
    \centering
    \footnotesize
    \begin{tabular}{@{}lccccc@{}}
        \toprule
        Model & Sampler & Steps & NFE & CFG & VAE \\
        \midrule
        \multicolumn{6}{@{}l}{\textit{Paper defaults (lowest FID):}} \\
        DiT-XL/2 & DDPM & 250 & 250 & 1.5 & ema \\
        SiT-XL/2 & Heun & 125 & 250 & 1.5 & ema \\
        JiT-H/16 & Heun & 50 & 100 & 2.2 & --- \\
        PixelFlow & Euler & 30$\times$4 & 120 & 2.4 & --- \\
        \midrule
        \multicolumn{6}{@{}l}{\textit{Actual (NFE$\approx$100):}} \\
        DiT-XL/2 & DDPM & 100 & 100 & 1.5 & ema \\
        SiT-XL/2 & Heun & 50 & 100 & 1.5 & ema \\
        JiT-H/16 & Heun & 50 & 100 & 2.2 & --- \\
        PixelFlow & Euler & 30$\times$4 & 120 & 2.4 & --- \\
        \bottomrule
    \end{tabular}
\end{table}

\paragraph{Model-Specific Samplers.}
\begin{itemize}
    \item \textbf{DiT (DDPM)}: DDPM with 100 steps (NFE$=$100)
    \item \textbf{SiT (Flow Matching)}: Heun with 50 steps (NFE$=$100)
    \item \textbf{JiT (Flow Matching)}: Heun with 50 steps (NFE$=$100)
    \item \textbf{PixelFlow (Flow Matching)}: Euler with 30 steps$\times$4 stages (NFE$=$120)
\end{itemize}

\paragraph{Heun Sampler.}
For Flow Matching models, the Heun (2nd-order Runge-Kutta~\cite{hairer1993ode}) update is:
\begin{align}
    v_1 &= \frac{F_\theta(x_t, t) - x_t}{1-t}, \quad
    x_{\text{euler}} = x_t + v_1 \cdot \Delta t \\
    v_2 &= \frac{F_\theta(x_{\text{euler}}, t+\Delta t) - x_{\text{euler}}}{1-t-\Delta t}, \quad
    x_{t+\Delta t} = x_t + \frac{v_1 + v_2}{2} \cdot \Delta t
\end{align}

\paragraph{NFE Calculation.}
\begin{itemize}
    \item \textbf{Euler + DPS}: $\text{NFE} = \text{steps}$ (one gradient evaluation per step)
    \item \textbf{Heun + DPS}: $\text{NFE} = \text{steps} \times 2$ (two function evaluations per step)
\end{itemize}

\subsection{Additional Qualitative Visualizations}
\label{app:qualitative_additional}

\cref{fig:qualitative_appendix} extends the qualitative comparison of \cref{fig:qualitative} to SiT and PixelFlow. SiT shows less mode collapse than DiT under strong guidance, consistent with $v$-prediction's intermediate error amplification. PixelFlow degrades more noticeably despite sharing JiT's pixel operating space.

\begin{figure}[tbp]
    \centering
    % (a) SiT moderate guidance
    \begin{subfigure}[b]{\textwidth}
        \includegraphics[width=\textwidth]{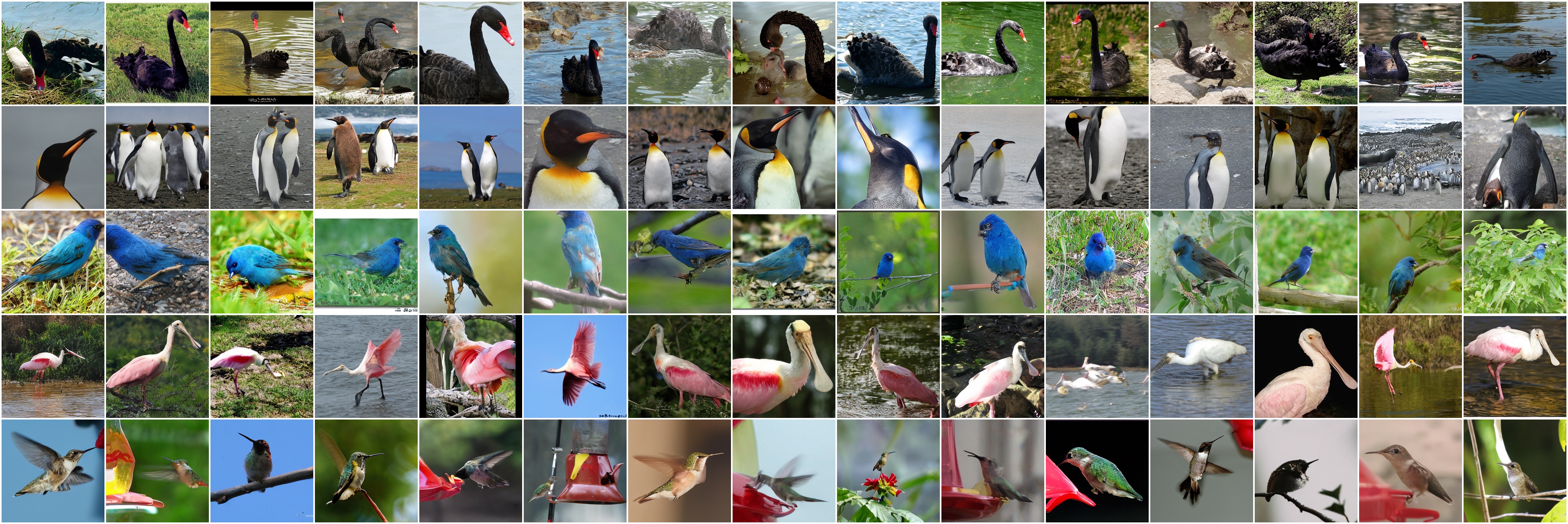}
        \caption{SiT ($v$, $\rho{=}0.05$): P-FID 5.8, C-FID 38.5, Val.\ 20.7\%}
        \label{fig:vis_on_sit}
    \end{subfigure}

    \vspace{1pt}

    % (b) PixelFlow moderate guidance
    \begin{subfigure}[b]{\textwidth}
        \includegraphics[width=\textwidth]{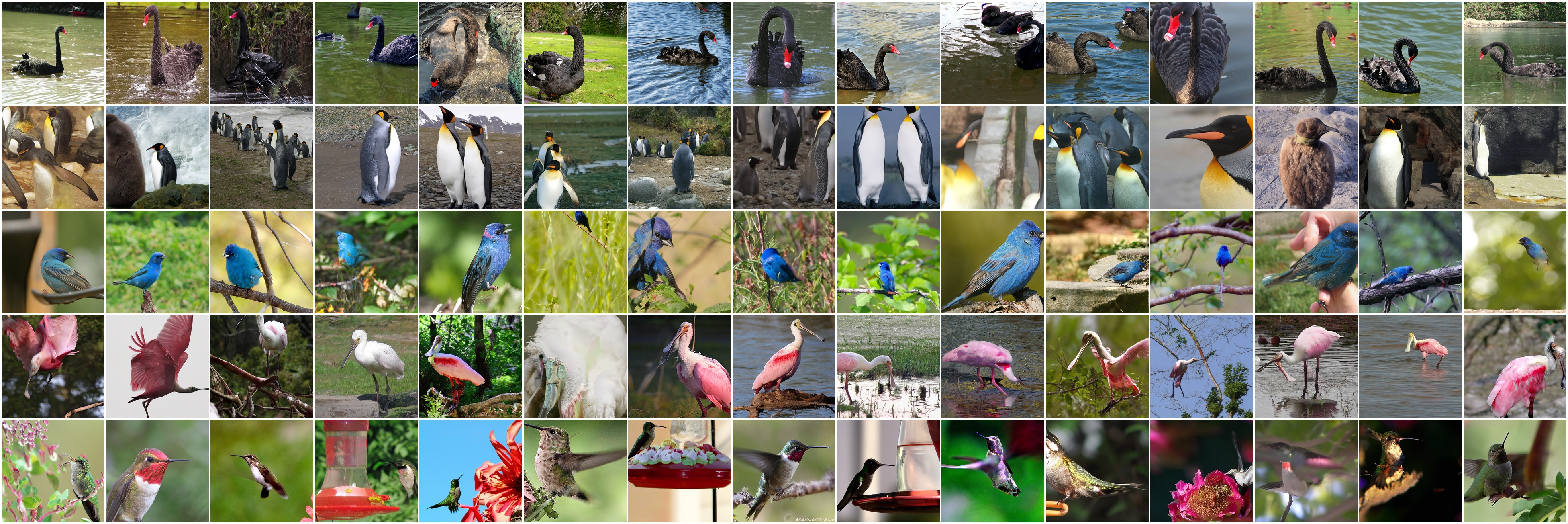}
        \caption{PixelFlow ($v$, $\rho{=}0.25$): P-FID 6.1, C-FID 38.8, Val.\ 19.4\%}
        \label{fig:vis_on_pixelflow}
    \end{subfigure}

    \vspace{1pt}

    % (c) SiT strong guidance
    \begin{subfigure}[b]{\textwidth}
        \includegraphics[width=\textwidth]{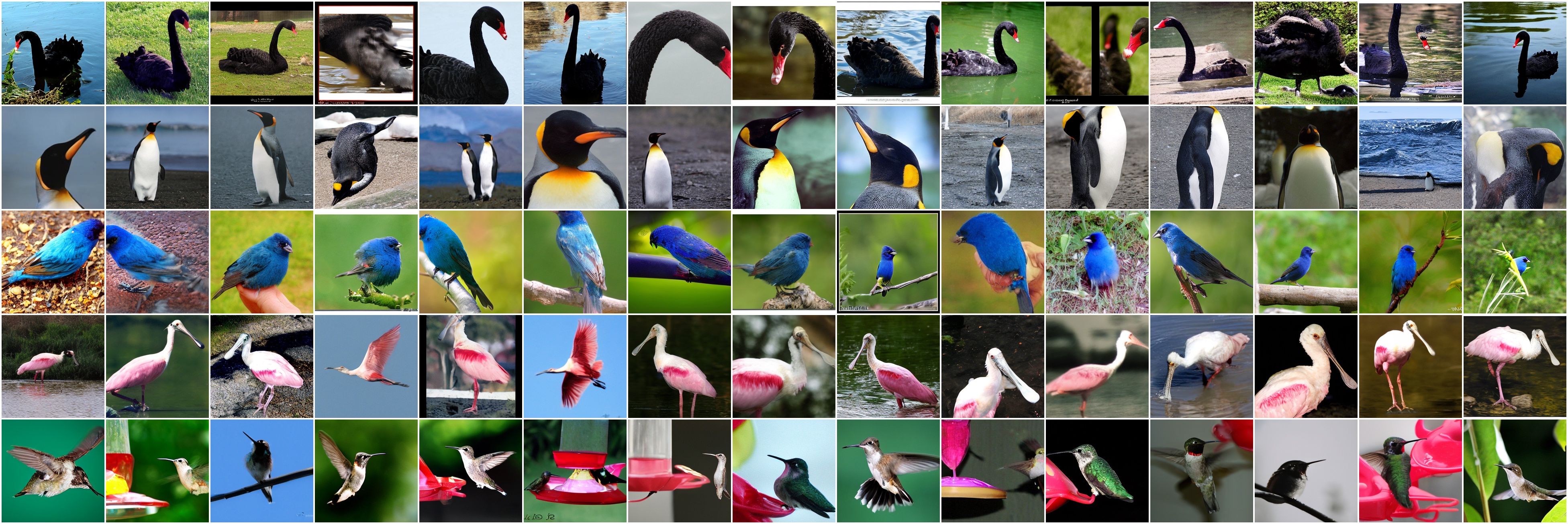}
        \caption{SiT ($v$, $\rho{=}2$): P-FID 12.5, C-FID 34.9, Val.\ 27.6\%}
        \label{fig:vis_off_sit}
    \end{subfigure}

    \vspace{1pt}

    % (d) PixelFlow strong guidance
    \begin{subfigure}[b]{\textwidth}
        \includegraphics[width=\textwidth]{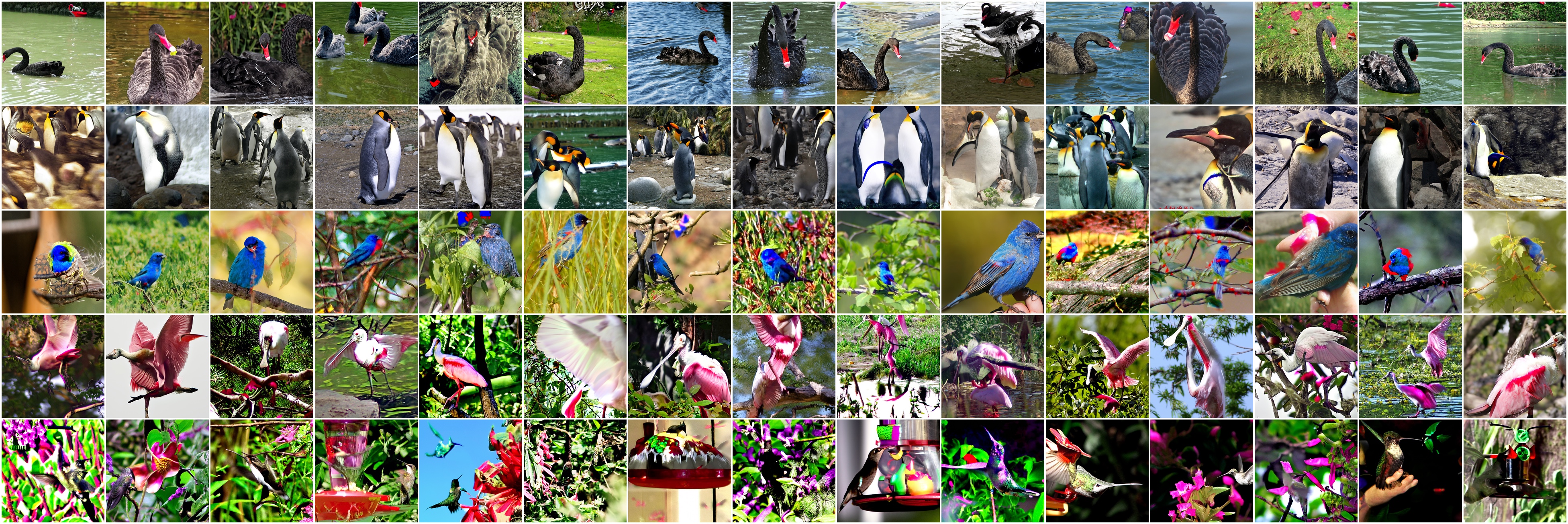}
        \caption{PixelFlow ($v$, $\rho{=}2$): P-FID 11.7, C-FID 36.2, Val.\ 20.1\%}
        \label{fig:vis_off_pixelflow}
    \end{subfigure}

    \caption{
        \textbf{Guided generation for SiT and PixelFlow (15 random samples each, no cherry-picking).}
        (a),(b): moderate guidance; (c),(d): strong guidance. Same five species as \cref{fig:qualitative}: Black Swan, Emperor Penguin, Painted Bunting, Roseate Spoonbill, Ruby-throated Hummingbird.
    }
    \label{fig:qualitative_appendix}
\end{figure}

% =============================================================================
% APPENDIX J: PRDC PRECISION–RECALL ANALYSIS
% =============================================================================
\section{Precision--Recall Analysis}
\label{app:prdc}

The main text evaluates guidance quality through FID-based metrics (P-FID, C-FID) and classifier-based Validity. Here we complement those results with \emph{manifold-aware} Precision and Recall curves that directly measure fidelity--diversity trade-offs across guidance strengths.

\subsection{Metric Definition}

We adopt the $k$-nearest-neighbour Precision and Recall of Naeem~et~al.~\cite{naeem2020reliable}, computed in DINOv2 ViT-B/14 feature space~\cite{oquab2023dinov2} (768-dimensional CLS tokens, $k{=}5$). Let $\Phi_r = \{\phi(x_i)\}$ and $\Phi_g = \{\phi(g_j)\}$ denote the real and generated feature sets, and $\mathrm{NN}_k(\phi, S)$ the distance to the $k$-th nearest neighbour of $\phi$ in set $S$. A sample $\phi$ is considered to lie \emph{within the manifold} of $S$ if $\|\phi - \mathrm{NN}_1(\phi, S)\| \le \mathrm{NN}_k(\mathrm{NN}_1(\phi, S), S)$, i.e., its nearest neighbour in $S$ has $\phi$ inside its own $k$-NN ball. Then:
\begin{align}
    \text{Precision} &= \frac{1}{|\Phi_g|}\sum_{g \in \Phi_g} \mathbf{1}[g \text{ within manifold of } \Phi_r], \label{eq:prdc_precision} \\
    \text{Recall} &= \frac{1}{|\Phi_r|}\sum_{r \in \Phi_r} \mathbf{1}[r \text{ within manifold of } \Phi_g]. \label{eq:prdc_recall}
\end{align}
Precision measures the fraction of generated samples that fall within the real data manifold (\emph{fidelity}), while Recall measures the fraction of real samples covered by the generated distribution (\emph{diversity}). We use DINOv2 rather than Inception-v3 as the feature extractor because DINOv2's self-supervised features better capture fine-grained visual similarity relevant to our bird species benchmark~\cite{oquab2023dinov2}.

\subsection{Results}

\cref{fig:prdc_sweep} traces Precision and Recall over the same DPS $\rho$-sweep as the main experiments (\cref{subsec:bird_results}), extending the four-model mode-collapse figure (\cref{fig:prdc_main}) to all six models. Three observations are specific to the wider lineup.

\begin{figure}[tbp]
    \centering
    \includegraphics[width=0.75\textwidth]{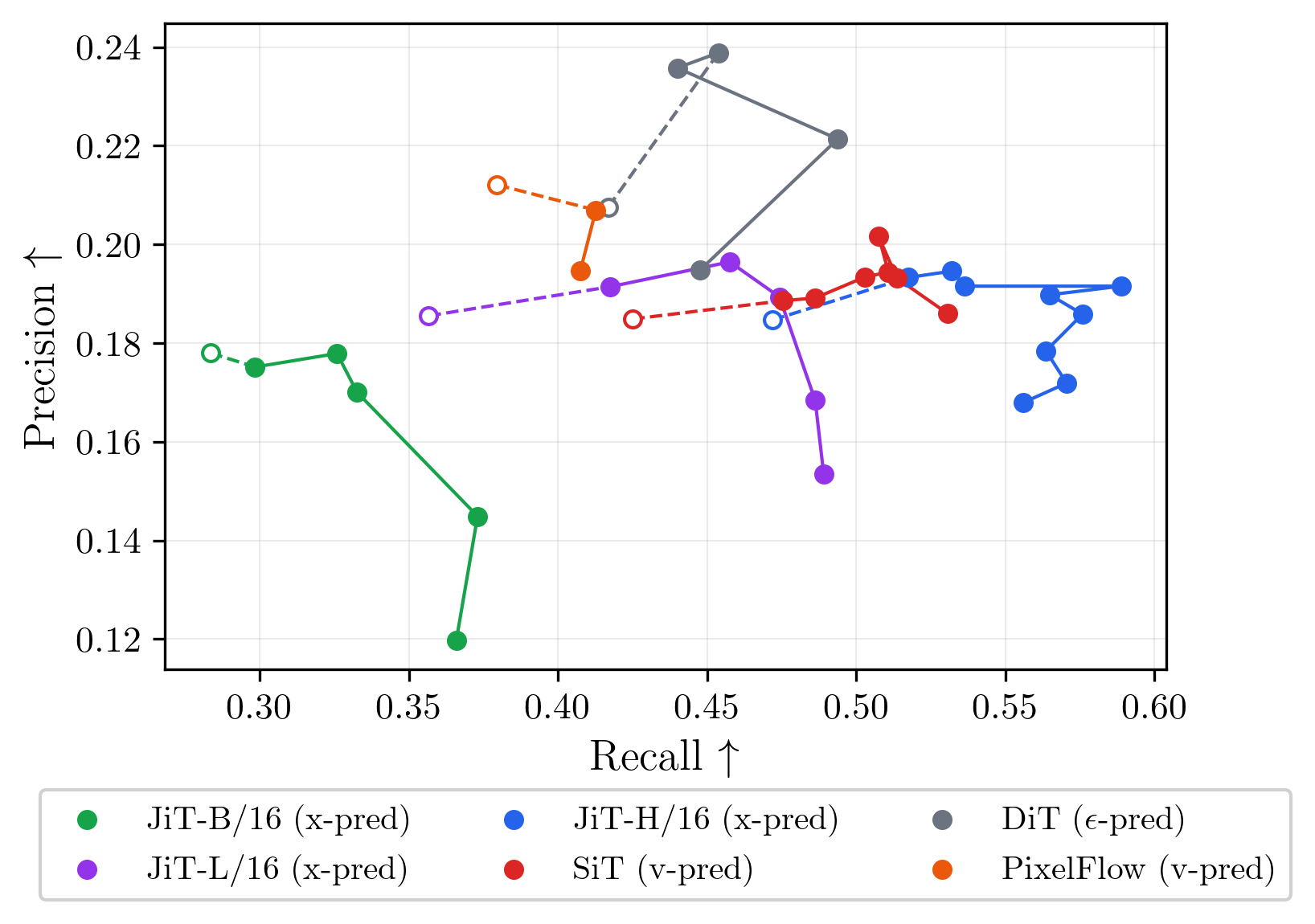}
    \caption{
        \textbf{Precision vs.\ Recall on fine-grained bird classification across guidance strengths.}
        Each curve traces one model as DPS strength $\rho$ increases; open markers denote CFG-only baselines ($\rho{=}0$), dashed segments connect to the first guided point.
        Axes share identical scale (each grid cell = $0.05 \times 0.05$).
        The upper-right region is preferred (high fidelity and high diversity).
    }
    \label{fig:prdc_sweep}
\end{figure}

\paragraph{Capacity expands coverage, not sharpness.}
Across the JiT family, Recall improves steadily with scale (JiT-B 0.37 $\to$ JiT-L 0.49 $\to$ JiT-H 0.59) while Precision stays in a narrow band ($\sim$0.17--0.19). Added capacity broadens coverage of the target distribution rather than sharpening individual samples.

\paragraph{$\epsilon$-prediction shows the mode-collapse signature.}
DiT attains the highest Precision (0.24 at $\rho{=}0.05$) of any model, but its Recall never exceeds ${\sim}$0.49, well below JiT-H's 0.59. Under stronger guidance its Precision erodes (0.19 at $\rho{=}0.5$) while Recall stays low (${\sim}$0.45): the joint high-Precision, low-Recall pattern that \cref{subsec:bird_results} identifies as the mode-collapse signature of $\epsilon$-prediction.

\paragraph{$v$-prediction gains little diversity from guidance.}
PixelFlow ($v$, pixel) and SiT ($v$, latent) remain in the Recall range 0.40--0.53, below JiT-H at every operating point. PixelFlow's Precision drifts down (0.21 $\to$ 0.19) with Recall essentially unchanged (0.40--0.42), so guidance degrades its fidelity without adding diversity; SiT behaves similarly.

These sample-level observations agree with the distribution-level C-FID analysis and with the prediction that $x$-prediction preserves manifold proximity under guidance (\cref{thm:gradient_stability}).

% =============================================================================
% APPENDIX H: REPRODUCIBILITY STATEMENT
% =============================================================================

\section{Reproducibility Statement}
\label{app:reproducibility}

\paragraph{Pretrained Models.}
All models are publicly available from official sources:
\begin{itemize}[leftmargin=*,itemsep=2pt,parsep=0pt]
    \item DiT-XL/2: \url{https://github.com/facebookresearch/DiT}
    \item SiT-XL/2: \url{https://github.com/willisma/SiT}
    \item JiT-B/L/H: \url{https://github.com/LTH14/JiT}
    \item Stable Diffusion VAE: \url{https://huggingface.co/stabilityai/sd-vae-ft-mse}
\end{itemize}

\paragraph{Code and experimental details.}
We provide DPS guidance code for all evaluated models, evaluation scripts for all reported metrics, and hyperparameter configurations for all experiments. All inference hyperparameters are specified in \cref{tab:inference_hyperparams} and throughout \suppref{app:experimental_protocols}, and each configuration is evaluated once with a fixed random seed (42).

\paragraph{Compute Resources.}
Using pretrained models eliminates training compute requirements:
\begin{itemize}[leftmargin=*,itemsep=2pt,parsep=0pt]
    \item DPS guidance experiments: $\sim$2--4 A100 GPU-hours per model/task combination
    \item Evaluation (FID-50K): $\sim$1 A100 GPU-hour per configuration
    \item Total estimated compute: $<$30 A100 GPU-hours
\end{itemize}
Per-step cost differs by operating space: pixel-space models backpropagate guidance gradients through higher-dimensional states ($256{\times}256{\times}3{=}196{,}608$) than latent models ($32{\times}32{\times}4{=}4{,}096$), but latent models incur a VAE decoder forward and backward pass at every guidance step to compute pixel-space gradients, partially offsetting this gap.

\paragraph{Datasets.}
We use standard public datasets: ImageNet-1K~\cite{deng2009imagenet} for all image experiments, with the standard validation set for FID computation. The fine-grained bird benchmark uses the 525 Bird Species dataset~\cite{piosenka2023birds} (CC0: Public Domain, available on Kaggle and HuggingFace; see \suppref{app:experimental_protocols} for details). The 143-species-to-ImageNet mapping file is released with our code. Both guidance and evaluation classifiers are publicly available on HuggingFace. No proprietary or restricted-access datasets are used.

\fi

\end{document}